\newcolumntype{C}{>{\centering\arraybackslash}X}
\newcommand{\ak}[1]{\todo[color=green!40,size=\footnotesize]{ \textbf{AK:}  #1}}
\newcommand{\tb}{\textbf} 
\newcommand{\E}{{{\mathbb E}}}
\newcommand{\R}{{{\mathbb R}}}
\newcommand{\N}{\mathbb N}
\newcommand{\cH}{{{\mathcal H}}}
\newcommand{\X}{{{\mathbb{R}^d}}}
\newcommand{\cP}{{{\mathcal P}}}
\newcommand{\cK}{\mathcal K}
\newcommand{\cD}{\mathcal D}
\newcommand{\cM}{\mathcal M}
\newcommand{\p}{\partial}
\newcommand{\cF}{{{\mathcal F}}}
\newcommand{\dcF}{{\dot{\mathcal F}}}
\newcommand{\cL}{{{\mathcal L}}}
\newcommand{\kpi}{k_{\pi}}
\newcommand{\Hkpi}{\mathcal{H}_{\kpi}}
\newcommand{\Hk}{\mathcal{H}_{k}}
\newcommand{\Sop}{\mathcal{H}}
\newcommand{\ps}[1]{\langle #1 \rangle}
\DeclareMathOperator*{\argmin}{arg\,min}
\DeclareMathOperator*{\Sp}{span}
\DeclareMathOperator*{\supp}{supp}
\DeclareMathOperator*{\Lip}{Lip}
\DeclareMathOperator{\KSD}{KSD}
\DeclareMathOperator{\Hess}{Hess}
\DeclareMathOperator{\MMD}{MMD}
\DeclareMathOperator*{\divT}{div}
\renewcommand{\d}{\mathrm{d}} 
\theoremstyle{definition}
\newtheorem{theorem}{Theorem}
\newtheorem{lemma}[theorem]{Lemma}
\newtheorem{corollary}[theorem]{Corollary}
\newtheorem{proposition}[theorem]{Proposition}
\newtheorem{definition}{Definition}
\newtheorem{remark}{Remark}
\newcommand{\subscript}[2]{$#1 _ #2$}
\newlist{assumplist}{enumerate}{1}
\setlist[assumplist]{label=(\subscript{\textbf{A}}{{\arabic*}})}
\Crefname{assumplisti}{Assumption}{Assumptions}
\newlist{assumplist2}{enumerate}{1}
\setlist[assumplist2]{label=(\subscript{\textbf{B}}{{\arabic*}})}
\Crefname{assumplist2i}{Assumption}{Assumptions}
\newlist{assumplist3}{enumerate}{1}
\setlist[assumplist3]{label=(\subscript{\textbf{C}}{{\arabic*}})}
\Crefname{assumplist3i}{Assumption}{Assumptions}
\icmltitlerunning{Kernel Stein Discrepancy Descent}
\begin{document}

\twocolumn[
\icmltitle{Kernel Stein Discrepancy Descent}
\icmlsetsymbol{equal}{*}

\begin{icmlauthorlist}
\icmlauthor{Anna Korba}{ENSAE}
\icmlauthor{Pierre-Cyril Aubin-Frankowski}{Mines}
\icmlauthor{Szymon Majewski}{X}
\icmlauthor{Pierre Ablin}{ENS}
\end{icmlauthorlist}

\icmlaffiliation{Mines}{CAS, MINES ParisTech, Paris, France}
\icmlaffiliation{X}{CMAP, Ecole Polytechnique, Institut Polytechnique de Paris}
\icmlaffiliation{ENSAE}{CREST, ENSAE, Institut Polytechnique de Paris}
\icmlaffiliation{ENS}{CNRS and DMA, Ecole Normale Supérieure, Paris, France}
\icmlcorrespondingauthor{Anna Korba}{anna.korba@ensae.fr}

\icmlkeywords{Machine Learning, ICML}

\vskip 0.3in
]



\printAffiliationsAndNotice{}  

\begin{abstract}
Among dissimilarities between probability distributions, the Kernel Stein Discrepancy (KSD) has received much interest recently.
We investigate the properties of its Wasserstein gradient flow to approximate a target probability distribution $\pi$ on $\mathbb{R}^d$, known up to a normalization constant.
This leads to a straightforwardly implementable, deterministic score-based method to sample from $\pi$, named KSD Descent, which uses a set of particles to approximate $\pi$.
Remarkably, owing to a tractable loss function, KSD Descent can leverage robust parameter-free optimization schemes such as L-BFGS; this contrasts with other popular particle-based schemes such as the Stein Variational Gradient Descent algorithm.
We study the convergence properties of KSD Descent and demonstrate its practical relevance.
However, we also highlight failure cases by showing that the algorithm can get stuck in spurious local minima.
\end{abstract}

\section{Introduction} \label{sec:introduction}
An important problem in machine learning and computational statistics is to sample from an intractable target distribution $\pi$. In Bayesian inference for instance, $\pi$ corresponds to the posterior probability of the parameters, which is required to compute the posterior predictive distribution. It is known only up to an intractable normalization constant. In Generative Adversarial Networks \citep[GANs,][]{Goodfellow2014GAN}, the goal is to generate data which distribution is similar to the training set defined by samples of $\pi$. In the first setting, one has access to the score of $\pi$ (the gradient of its log density), while in the second, one has access to samples of $\pi$.

Assessing how different the target $\pi$ and a given approximation $\mu$ are can be performed through a dissimilarity function $D(\mu|\pi)$. As summarized by \citet{SimonGabriel18thesis}, classical dissimilarities include $f$-divergences such as the KL (Kullback-Leibler) or the $\chi^2$ (Chi-squared), the Wasserstein distances in Optimal Transport (OT), and Integral Probability Metrics (IPMs), such as the Maximum Mean Discrepancy \citep[MMD,][]{gretton2012kernel}. 
Dissimilarity functions can hence be used to characterize $\pi$ since, under mild assumptions, they only vanish at $\mu=\pi$. Setting $\cF(\mu)=D(\mu|\pi)$, assuming also in our case that $\pi\in\cP_2(\X)$, the set of probability measures $\mu$ with finite second moment ($\int \|x\|^2 \d\mu(x) < \infty$), the sampling task can then be recast as an optimization problem over $\cP_2(\X)$\vspace{-1mm}
\begin{equation}\label{eq:min_D}
\min_{\mu \in\cP_2(\X)} \cF(\mu).
\end{equation}
Starting from an initial distribution $\mu_0$, one can then apply a descent scheme to \eqref{eq:min_D} to converge to $\pi$. In particular, one can consider the Wasserstein gradient flow 
of $\cF$ over $\cP_2(\X)$. This operation can be interpreted as a vector field continuously displacing the particles constituting $\mu$.

Among dissimilarities, the Kernel Stein Discrepancy \citep[KSD, introduced independently by][]{liu2016kernelized,chwialkowski2016kernel,gorham2017measuring} writes as follows\vspace{-1mm}
\begin{equation}\label{eq:ksd}
    \KSD(\mu|\pi)=\sqrt{ \iint 
    \kpi(x,y)\d\mu(x)\d\mu(y)},
    \vspace{-2mm}
\end{equation}
where $\kpi$ is the Stein kernel, defined through the score of $\pi$, $s(x) = \nabla \log\pi(x)$, and through a positive semi-definite kernel $k$ (see \Cref{sec:notations} for the meaning of $\div_{1}$ or of $\nabla_{2}$)
\begin{multline}\label{eq:stein_kernel}
        \kpi(x,y)
         =s(x)^T s(y) k(x, y)+ s(x)^T \nabla_{2}k(x, y)\\
        +\nabla_{1}k(x,y)^T s(y)+\div_{1}\nabla_{2}k(x,y).
\end{multline}
The great advantage of the KSD is that it can be readily computed when one has access to the score of $\pi$ and uses a discrete measure $\hat{\mu}$, since \eqref{eq:ksd} writes as a finite double sum of $\kpi$ in this case. Furthermore the definition of the KSD was inspired by Stein's method \citep[see][for a review]{anastasiou2021steins} and no sampling over $\pi$ is required in \eqref{eq:ksd}. This motivated the use of the KSD in a growing number of problems. The KSD has been widely used in nonparametric statistical tests for goodness-of-fit \citep[e.g.][]{xu2020directional,kanagawa2020kernel}. It was also used for sampling tasks: to select a suitable set of static points to approximate $\pi$, adding a new one at each iteration \cite{chen2018stein, chen2019stein}; to compress \cite{riabiz2020optimal} or reweight \cite{hodgkinson2020reproducing} Markov Chain Monte Carlo (MCMC) outputs; and to learn a static transport map from $\mu_0$ to $\pi$ \cite{fisher2020measure}. In this paper, we consider $\cF(\mu)=\nicefrac{1}{2}\KSD^2(\mu|\pi)$ and its Wasserstein gradient $\nabla_{W_2}\cF$ to define a flow over particles to approximate $\pi$.

\tb{Related works.} Minimizing a dissimilarity $D$ is a popular approach to fit an unnormalized density model in the machine learning and computational statistics literature. For instance, \citet{hyvarinen2005estimation} proposed to minimize the Fisher divergence. Alternatively, $D$ is often taken as the KL divergence. Indeed,  since the seminal paper by \citet{Jordan1998}, the Wasserstein gradient flow of the KL has been extensively studied and related to the Langevin Monte Carlo (LMC) algorithm \citep[e.g.][]{wibisono2018sampling,durmus2019analysis}. However, an unbiased time-discretization of the KL flow is hard to implement \citep{salim2020wasserstein}. To tackle this point, a recent successful kernel-based approximation of the KL flow was introduced as the Stein Variational Gradient Descent \citep[SVGD,][]{liu2016stein}. Several variants were considered \citep[see][and references therein]{chewi2020svgd}. Another line of work considers $D$ as the MMD \cite{mroueh2019sobolev, arbel2019maximum} with either regularized or exact Wasserstein gradient flow of the MMD, especially for GAN training. However, these two approaches require samples of $\pi$ to evaluate the gradient of the MMD. As the KSD is a specific case of the MMD with the Stein kernel \cite{chen2018stein}, our approach is similar to \citet{arbel2019maximum} but better suited for a score-based sampling task, owing to the properties of the Stein kernel.

\tb{Contributions.} In this paper, in contrast with the aforementioned approaches, we choose the dissimilarity $D$ in \eqref{eq:min_D} to be the KSD. As in SVGD, our approach, KSD Descent, optimizes the positions of a finite set of particles to approximate $\pi$, but through a descent scheme of 
the KSD (in contrast to the KL for SVGD) in the space of probability measures. KSD Descent comes with several advantages. First, it benefits from a closed-form cost function which can be optimized with a fast and hyperparameter-free algorithm such as L-BFGS \cite{liu1989limited}. 
Second, our analysis comes with several theoretical guarantees (namely, existence of the flow and a descent lemma in discrete time) under a Lipschitz assumption on the gradient of $\kpi$. We also provide negative results highlighting some weaknesses of the convergence of the KSD gradient flow, such as the absence of exponential decay near equilibrium. 
Moreover, stationary points of the KSD flow may differ from the target $\pi$ and even be local minima of the flow, which implies that some particles are stuck far from high-probability regions of $\pi$. Sometimes a simple annealing strategy mitigates this convergence issue. On practical machine learning problems, the performance of KSD Descent highly depends on the local minimas of $\log \pi$. KSD Descent achieves comparable performance to SVGD on convex (i.e., $\pi$ log-concave) toy examples and Bayesian inference tasks, while it is outperformed on non-convex tasks with several saddle-points like independent component analysis.


This paper is organized as follows. \Cref{sec:background} introduces the necessary background on optimal transport and on the Kernel Stein Discrepancy. \Cref{sec:ksd_descent} presents our approach and discusses its connections with related works. \Cref{sec:convergence} is devoted to the theoretical analysis of KSD Descent. Our numerical results are to be found in \Cref{sec:experiments}.

\section{Background}\label{sec:background}

This section introduces the high-level idea of the gradient flow approach to sampling. It also summarizes the known properties of the KSD.

\subsection{Notations}\label{sec:notations}

The space of $l$ continuously differentiable functions on $\X$ is $C^{l}(\X)$. The space of smooth functions with compact support is $C_c^{\infty}(\X)$. 
If $\psi : \X \to \R^p$ is differentiable, we denote by $J \psi : \X \to \R^{p \times d}$ its Jacobian. 
If $p = 1$, we denote by $\nabla \psi$ the gradient of $\psi$.
Moreover, if $\nabla \psi$ is differentiable, the Jacobian of $\nabla \psi$ is the Hessian of $\psi$ denoted $H \psi$. If $p = d$, $\div \psi$ denotes the divergence of $\psi$, i.e.\ the trace of the Jacobian. We also denote by $\Delta \psi$ the Laplacian of $\psi$, where $\Delta \psi=\div\nabla \psi$. 
For a differentiable kernel $k: \R^{d} \times \R^{d} \rightarrow \R,$ $\nabla_{1} k$ (resp. $\nabla_{2} k$) is the gradient of the kernel w.r.t. the first (resp. second) variable, while $H_1 k$ denotes its Hessian w.r.t.\ the first variable.
 
 Consider the set $\cP_2(\X)$ of probability measures $\mu$ on $\X$ with finite second moment and  $\cP_c\left(\X\right)$ the set of probability measures with compact support. For $\mu \in \cP_2(\X)$, we denote by $\d\mu/\d\pi$ its Radon-Nikodym density if $\mu$ is absolutely continuous w.r.t.\ $\pi$.  For any $\mu \in \cP_2(\X)$, $L^2(\mu)$ is the space of functions $f : \X \to \R$ such that $\int \|f\|^2 d\mu < \infty$. 
 We denote by $\Vert \cdot \Vert_{L^2(\mu)}$ and $\ps{\cdot,\cdot}_{L^2(\mu)}$ respectively the norm and the inner product of the Hilbert space $L^2(\mu)$. 
Given a measurable map $T:\X\to \X$ and $\mu\in \cP_2(\X)$, $T_{\#}\mu$ is the pushforward measure of $\mu$ by $T$. 
We consider, for $\mu,\nu \in \cP_2(\X)$, the 2-Wasserstein distance $W_2 (\mu, \nu)$, 
and we refer to the metric space $(\cP_2(\X),W_2)$ as the Wasserstein space. In a Riemannian interpretation of $(\cP_2(\X),W_2)$, the tangent space of $\cP_2(\X)$ at $\mu$ is denoted $\mathcal{T}_{\mu}\cP_2(\X)$ and is a subset of $L^2(\mu)$ \cite{otto2001geometry}. We refer to \Cref{sec:W2_diff} for more details on the Wasserstein distance and related flows.


\subsection{Lyapunov analysis and gradient flows}
\label{sec:sub_Lyap}

To sample from a target distribution $\pi$, a now classical approach consists in identifying a continuous process which moves particles from an initial probability distribution $\mu_0$ toward samples of $\pi$. This can be expressed as searching for vector fields $v_t:\R^d\rightarrow \R^d$ transporting the distribution $\mu_t$ through the continuity equation (see \Cref{sec:continuity_eq})
\begin{equation}\label{eq:continuity_eq}
\frac{\partial \mu_t}{\partial t}+\divT(\mu_t v_t ) =0
\end{equation}
where $v_t$ should ensure the convergence of $\mu_t$ to $\pi$, for some topology over measures, in finite or infinite time.  Due to $v_t$, \cref{eq:continuity_eq} is nonlinear over $\mu_t$. Cauchy-Lipschitz-style assumptions for existence and uniqueness of the solution of \eqref{eq:continuity_eq} are provided in \Cref{sec:Cauchy-Lip_flow}. The continuity equation ensures that the mass is conserved and that it is not teleported as for a mixture $\mu_t=(1-t)\mu_0+t\pi$. In order to adjust the position of particles only depending on the present distribution $\mu_t$ and to have an automated choice of $v_t$ at any given time, it is favorable to have $v_t$ as a function of $\mu_t$, written as $v_{\mu_t}$.

A principled way to select such a $v_{\mu_t}$ is to define it based on a Lyapunov functional $\cF(\mu)$ over measures, decreasing along the Wasserstein gradient flow (see \Cref{sec:W2_diff})
\begin{equation}\label{eq:Lyap_deriv}
 \dcF(\mu_t):=\frac{\d \cF(\mu_t)}{\d t}=\ps{\nabla_{W_2}\cF(\mu_t),v_{\mu_t}}_{L^2(\mu_t)}\le 0.
\end{equation}
Any dissimilarity $\cF(\cdot)=D(\cdot|\pi)$ is a valid Lyapunov candidate since it is non-negative and vanishes at $\pi$. Hence, \eqref{eq:continuity_eq} can be seen as a continuous descent scheme of \eqref{eq:min_D} or, conversely, \eqref{eq:min_D}-\eqref{eq:Lyap_deriv} can be interpreted as a way to choose $v_{\mu_t}$ in \eqref{eq:continuity_eq} to steer $\mu_0$ to $\pi$. In short, any Lyapunov-based approach rests upon three quantities ($\cF(\mu_t),v_{\mu_t},\dcF(\mu_t)$), related by \eqref{eq:Lyap_deriv}. A natural choice of $v_{\mu_t}$ satisfying \eqref{eq:Lyap_deriv} and realizing the steepest descent is the Wasserstein gradient itself, $v_{\mu_t}=-\nabla_{W_2}\cF(\mu_t)$. Depending on the choice of $\cF$, this $v_{\mu_t}$ may be hard to implement, or require specific analysis of the resulting dissipation function $\dcF(\mu_t)$. Otherwise, to ensure that $\dcF(\mu_t)$ only vanishes at $\pi$, one can choose $v_{\mu_t}$ so that both $\cF(\mu_t)$ and $\dcF(\mu_t)$ are known dissimilarities. As a matter of fact, if there exists a dissimilarity $\tilde{D}$ separating measures such that $-\dcF(\mu)\ge \tilde{D}(\mu|\pi)$, then $\pi$ is asymptotically stable for the flow and, if $\tilde{D}(\mu|\pi)\ge \cF(\mu)$, then $\pi$ is exponentially stable (by Gronwall's lemma). This relates the Lyapunov analysis to functional inequalities \cite{villani2003optimal}, expressing domination w.r.t.\ $\cF$ of the Wasserstein gradient of $\cF$ under specific assumptions on $\pi$, e.g.\ log-Sobolev for the KL, or Poincaré for the $\chi^2$ \cite{chewi2020svgd}.

\subsection{Kernel Stein Discrepancy}\label{sec:background_ksd}

Consider a positive semi-definite kernel $k : \R^d \times \R^d \to \R$ and its corresponding RKHS $\cH_k$ of real-valued functions on 
$\R^d$. The space $\mathcal{H}_k$ is a Hilbert space with inner product $\ps{\cdot,\cdot}_{\cH_k}$ and norm $\Vert \cdot \Vert_{\cH_k}$. Moreover, $k$ satisfies the reproducing property: $
\forall \; f \in \cH_k,\; f(x)=\ps{f,k(x,\cdot)}_{\cH_k}$; which for smooth kernels also holds for derivatives, e.g.\ $\p_i f(x)=\ps{f,(\nabla_1 k(x,\cdot))_i}_{\cH_k}$ \citep[see][]{saitoh16theory}. Let $\mu \in \cP_2(\X)$. If $\int k(x,x) \d\mu(x)<\infty$, then the integral operator associated to the kernel $k$ and measure $\mu$, denoted by
$S_{\mu, k}: L^2(\mu) \rightarrow \Hk$ and defined as\vspace{-3mm}
\begin{equation}\label{eq:integral_operator}
S_{\mu, k}f  = \int k(x,\cdot)f(x)\d\mu(x),
\vspace{-3mm}
\end{equation}
is a Hilbert-Schmidt operator and $\Hk\subset L^2(\mu)$. In this case, the identity embedding $\iota:\Hk\to L^2(\mu)$ is a bounded operator and it is the adjoint of $S_{\mu,k}$ (i.e., $\iota^*=S_{\mu,k}$ \citep[Theorems 4.26 and 4.27]{steinwart2008support}. Hence, for any $(f,g)\in \Hk\times L^2(\mu)$, $\ps{\iota f,g}_{L^2(\mu)}=\ps{f,S_{\mu,k}g}_{\Hk}$.
We denote by $\cH_k^d$ the Cartesian product RKHS consisting of elements $f=(f_1, \dots, f_d)$ with $f_i\in \cH_k$, and with inner product $\ps{f,g}_{\cH_k^d}=\sum_{i=1}^d\ps{f_i,g_i}_{\cH_k}$.\ For vector-inputs, we extend $S_{\mu, k}$, applying it component-wise.

The Stein kernel $\kpi$ \eqref{eq:stein_kernel} is a reproducing kernel and satisfies a Stein identity ($\int_{\X} \kpi(x,\cdot)\d\pi(x)=0$) under mild regularity assumptions on $k$ and $\pi$.\footnote{e.g., $k$ is a Gaussian kernel and $\pi$ is a smooth density fully supported on $\R^d$, see \citet[Theorem 3.7]{liu2016kernelized}.} It allows for several interpretations of the KSD \eqref{eq:ksd} as already discussed by \citet{liu2016kernelized}. It can be introduced as an IPM in the specific case of a Stein operator applied to $\Hk$ \citep[e.g.][]{gorham2017measuring}. It can then be identified as an asymmetric MMD in $\Hkpi$ (see \Cref{sec:related_work}). 
Alternatively, the squared KSD can be seen as a kernelized Fisher divergence, where the Fisher information $\| \nabla \log \left(\frac{\d\mu}{\d\pi}\right)\|^2_{L^2(\mu)}$ is smoothed through the kernel integral operator, i.e.\ $\KSD^2(\mu|\pi)= \|S_{\mu, k}\nabla \log\left(\frac{\d\mu}{\d\pi}\right)\|^2_{\cH^d_k}$. In this sense, the squared KSD has also been referred to as the Stein Fisher information \cite{duncan2019geometry}. Hence, minimizing the KSD can be thought as a kernelized version of score-matching \cite{hyvarinen2005estimation}.

The metrization of weak convergence by the KSD, i.e.\ that 
$\lim_{t\to \infty}\KSD(\mu_t|\pi) =0$ 
implies the weak convergence of $\mu_t$ to $\pi$, depends on the choice of the kernel relatively to the target. This question has been considered by~\citet{gorham2017measuring}, who show this is the case under assumptions akin to strong log-concavity of $\pi$ at infinity \citep[namely distant dissipativity,][]{eberle2016reflection}, and for a kernel $k$ with a slow decay rate. 
This includes finite Gaussian mixtures with common covariance and 
kernels that are translation-invariant with heavy-tails and non-vanishing Fourier transform, such as the inverse multi-quadratic (IMQ) kernel defined by $k(x,y)=(c^2 + \|x- y\|_2^2)^{\beta}$ for $c >0$ and $\beta \in (-1, 0)$, or its variants considered in \citet{chen2018stein}. 
	 


\section{Sampling as optimization of the KSD}\label{sec:ksd_descent}


This section defines the KSD Descent and relates it to other gradient flows, especially the MMD gradient flow, of which the KSD Descent is a special case. In all the following, we assume that $k\in C^{3,3}(\R^d\times \R^d, \R)$, and that $\pi$ is such that $s=\nabla\log\pi\in C^2(\R^d)$.

\subsection{Continuous time dynamics}

Consider the functional
$\cF : \cP_2(\X) \to [0,+\infty)$, $\mu \mapsto \frac{1}{2}\KSD^2(\mu|\pi)$  
defined over the Wasserstein space. 
If $\mu \in \cP_2(\X)$ satisfies some mild regularity conditions (i.e., it has a $C^1$ density w.r.t. Lebesgue measure, and it is in the domain of $\cF$, see \Cref{sec:W2_diff}), the  gradient of $\cF$ at $\mu$ is well-defined and denoted by $\nabla_{W_2} \cF(\mu) \in L^2(\mu)$. We shall consider the following assumptions on the Stein kernel:
\vspace{-0.15cm}
\newcounter{contlist}
\begin{assumplist}
	\setlength\itemsep{0.2em}
	\item \label{ass:lipschitz} There exists a map $L(\cdot)\in C^0(\X,\R_+)$, which is $\mu$-integrable for any $\mu \in \cP_2(\X)$, such that, for any $y \in \X$, the maps $x\mapsto \nabla_1 \kpi(x,y)$ and $x\mapsto \nabla_2 \kpi(x,y)$ are $L(y)$-Lipschitz. 
	\item \label{ass:growth} There exists $m>0$ such that for any $\mu \in \cP_c\left(\R^{d}\right),$ for all $y \in \R^{d},$ we have $\|\int \nabla_2 k_{\pi}(x,y) \d\mu(x)\| \leq m\left(1+\|y\|+\int \|x\|\,\d\mu(x)\right).$
	\item \label{ass:square_der_int} The map $(x,y) \mapsto \|H_1 \kpi(x,y)\|_{op}$ is $\mu\otimes\nu$-integrable for every $\mu,\nu\in \cP_2(\X)$.
	\item \label{ass:integrable} For all $\mu\in \cP_2(\X)$, $\int \kpi(x,x)\d\mu(x)<\infty$.
\end{assumplist}
\vspace{-0.15cm}
The KSD gradient flow is defined as the flow induced by the continuity equation:
\begin{equation}\label{eq:ksd_flow}
\frac{\partial \mu_t}{\partial t}+\divT(\mu_t v_{\mu_t} ) =0 \text{ for } v_{\mu_t} := - 
\nabla_{W_2}\cF(\mu_t).
\end{equation}
\Cref{ass:lipschitz,ass:growth} ensure that the KSD gradient flow exists and is unique, they are further discussed in \Cref{sec:Cauchy-Lip_flow}. \Cref{ass:lipschitz,ass:square_der_int} are needed so that the Hessian of $\cF$ is well defined (see \Cref{sec:convergence}). \Cref{ass:integrable} guarantees that the integral operator $S_{\mu, \kpi}$ \eqref{eq:integral_operator} is well-defined and that $\cF(\mu)<\infty$ for all $\mu\in \cP_2(\X)$.
\begin{lemma}\label{lem:lipschitz_ass}
	 Assume that $k$, its derivatives up to order 3, and their product by $\|x-y\|$ are uniformly bounded over $\X$; and that $s$ is Lipschitz and has a bounded Hessian over $\X$. Then \Cref{ass:lipschitz,ass:square_der_int,ass:integrable} hold. If, furthermore there exists $M>0$ and $M_0$ such that, for all $x\in\X$, $\|s(x) \|\le M\sqrt{\|x\|}+M_0$, then \Cref{ass:growth} also holds.
\end{lemma}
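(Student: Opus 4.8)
The plan is to verify \Cref{ass:lipschitz,ass:square_der_int,ass:integrable,ass:growth} one at a time by differentiating the explicit four‑term expression \eqref{eq:stein_kernel} for $\kpi$ and bounding each resulting summand. The tools available are: the uniform boundedness of $k$ and of its derivatives (all those appearing below are among the assumed bounded ones); the uniform boundedness of the products of those derivatives with $\|x-y\|$; the Lipschitz bound $\|Js\|_\infty<\infty$, which also yields the linear growth $\|s(x)\|\le\|Js\|_\infty\|x\|+\|s(0)\|$; and the bounded Hessian $\|Hs\|_\infty<\infty$. The standing assumptions $k\in C^{3,3}$ and $s\in C^2$ guarantee along the way that all derivatives written down exist and are continuous.

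I would first dispose of the two simplest assumptions. For \Cref{ass:integrable}, evaluate \eqref{eq:stein_kernel} on the diagonal: each summand is a bounded factor ($k$ or a derivative of $k$) times at most two copies of $s$, so $|\kpi(x,x)|\le C(1+\|x\|^2)$, which is $\mu$‑integrable for every $\mu\in\cP_2(\X)$. For \Cref{ass:square_der_int}, use the Leibniz rule on \eqref{eq:stein_kernel} to write $H_1\kpi(x,y)$ as a finite sum of products of bounded derivatives of $k$ with at most one factor $s(x)$, at most one factor $s(y)$, and possibly one $Js$ or $Hs$ factor; hence $\|H_1\kpi(x,y)\|_{op}\le C(1+\|x\|)(1+\|y\|)$, and this bound is $\mu\otimes\nu$‑integrable for $\mu,\nu\in\cP_2(\X)$ because a finite second moment implies a finite first moment.

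Next, \Cref{ass:lipschitz}. Since a $C^1$ map whose Jacobian is bounded by $L$ is $L$‑Lipschitz, it suffices to exhibit a continuous $L(\cdot)$, $\mu$‑integrable for every $\mu\in\cP_2(\X)$, with $\sup_x\|H_1\kpi(x,y)\|_{op}\le L(y)$ and $\sup_x\|J_x\nabla_2\kpi(x,y)\|_{op}\le L(y)$. The new difficulty compared with the previous step is the supremum over $x$: summands carrying a factor $s(x)$ have $\|s(x)\|\to\infty$ as $\|x\|\to\infty$, and this is exactly where the hypothesis that the products of $k$'s derivatives with $\|x-y\|$ are bounded is used. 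On the region $\{\|x-y\|>1\}$ such a derivative factor is $O(1/\|x-y\|)$ while $\|s(x)\|/\|x-y\|\le\|Js\|_\infty(1+\|y\|)+\|s(0)\|$, and on $\{\|x-y\|\le1\}$ one uses boundedness of $k$'s derivatives together with $\|s(x)\|\le\|s(y)\|+\|Js\|_\infty$. Collecting these estimates yields $L(y)=C(1+\|y\|^2)$, which is continuous and integrable against every element of $\cP_2(\X)$.

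Finally, \Cref{ass:growth} under the extra assumption $\|s(x)\|\le M\sqrt{\|x\|}+M_0$. Differentiate \eqref{eq:stein_kernel} once in $y$, integrate against $\d\mu(x)$ with $\mu\in\cP_c(\X)$, and apply the triangle inequality. Every summand is bounded by $C(1+\|y\|+\int\|x\|\,\d\mu(x))$ except the one coming from $\nabla_2[s(x)^T s(y)k(x,y)]$, which carries the bilinear factor $\|s(x)\|\,\|s(y)\|$; here the square‑root growth is indispensable. Indeed, by Jensen's inequality (concavity of $\sqrt{\cdot}$) one has $\int\|s(x)\|\,\d\mu(x)\le M\sqrt{\int\|x\|\,\d\mu(x)}+M_0$, and $\|s(y)\|\le M\sqrt{\|y\|}+M_0$, so the offending term is controlled by $\sqrt{\|y\|}\sqrt{\int\|x\|\,\d\mu(x)}\le\tfrac12\big(\|y\|+\int\|x\|\,\d\mu(x)\big)$ plus lower‑order terms, hence by $m(1+\|y\|+\int\|x\|\,\d\mu(x))$ for a suitable $m$. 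The main obstacle is really the bookkeeping: carrying out the lengthy but routine differentiation of the four‑term Stein kernel and matching every summand with the weakest hypothesis that tames it — in particular checking, for \Cref{ass:lipschitz}, that the $\|x-y\|$‑weighted decay of $k$'s derivatives dominates the linear growth of $s(x)$ uniformly in $x$, and, for \Cref{ass:growth}, that only square‑root growth of $s$ (and not mere Lipschitzness, which would leave a quadratic term $\|x\|\,\|y\|$) makes the bilinear term linear in the prescribed quantities.
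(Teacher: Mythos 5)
Your proposal is correct and follows essentially the same route as the paper: differentiate the four-term Stein kernel term by term, bound each summand, use the $\|x-y\|$-weighted decay of the kernel derivatives to neutralize the linear growth of $s(x)$ for the Lipschitz estimate, and use the square-root growth of $s$ to keep the bilinear term $\|s(x)\|\,\|s(y)\|$ linear for the sublinear-growth estimate. Your two deviations are only cosmetic: you split on $\|x-y\|\gtrless 1$ where the paper substitutes $\|s(x)\|\le\|s(y)\|+\|Js\|_\infty\|x-y\|$ directly, and you integrate first and apply Jensen plus AM--GM where the paper derives the pointwise bound $\|\nabla_2\kpi(x,y)\|\le m(1+\|y\|+\|x\|)$ via $\sqrt{a}\le a+1$.
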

See the proof in \Cref{sec:proof_lem_lipschitz_ass}. Smoothed Laplace distributions $\pi$ paired with Gaussian $k$ satisfy the assumptions of \Cref{lem:lipschitz_ass}. For Gaussian $\pi$, $s$ is linear, so \Cref{ass:lipschitz,ass:square_der_int,ass:integrable} hold for smooth kernels, but \Cref{ass:growth} does not hold in general because of the $s(x)^\top s(y)$ term in $\kpi$. Notice that most of our results are stated without \Cref{ass:growth}, which is only required to establish the global existence of KSD flow, in the sense that the particle trajectories are well-defined and do not explode in finite-time.
 
\begin{proposition}\label{prop:dissipation}
Under \Cref{ass:lipschitz,ass:growth}, the $W_2$ gradient of $\cF$ evaluated at $\mu$ and its dissipation \eqref{eq:Lyap_deriv} along \eqref{eq:ksd_flow} are
\begin{align}
\nabla_{W_2}\cF(\mu)&=\mathbb{E}_{x \sim \mu}[\nabla_{2}\kpi(x,\cdot)],\label{eq:ksd_w2_grad}\\
\dot{\cF}(\mu_t)&=-\E_{y\sim \mu_t}\left[ \| \mathbb{E}_{x \sim \mu_t}[\nabla_{2}\kpi(x,y)] \|^2\right] \label{eq:dissipation}.
\end{align}
\end{proposition}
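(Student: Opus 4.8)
The plan is to treat $\cF$ as a quadratic form in $\mu$ with the symmetric Stein kernel $\kpi$ and to differentiate it along pushforward curves, then to read off the dissipation from the definition \eqref{eq:Lyap_deriv}. First I would fix a smooth compactly supported vector field $\xi\in C_c^\infty(\X,\X)$ and consider the curve $\mu_t=(\mathrm{id}+t\xi)_\#\mu$, so that
\[
\cF(\mu_t)=\tfrac12\iint \kpi\bigl(x+t\xi(x),\,y+t\xi(y)\bigr)\,\d\mu(x)\,\d\mu(y).
\]
Differentiating at $t=0$ (legitimate because $k\in C^{3,3}$ and $s\in C^2$ make $\kpi$ continuously differentiable, and because the compact support of $\xi$ together with \Cref{ass:lipschitz} provides, on a neighbourhood of $t=0$, the $\mu\otimes\mu$-integrable domination of $\nabla_1\kpi$ and $\nabla_2\kpi$ needed to differentiate under the integral sign) yields
\[
\frac{\d}{\d t}\Big|_{t=0}\cF(\mu_t)=\tfrac12\iint\bigl(\nabla_1\kpi(x,y)\cdot\xi(x)+\nabla_2\kpi(x,y)\cdot\xi(y)\bigr)\,\d\mu(x)\,\d\mu(y).
\]

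Next I would exploit the symmetry $\kpi(x,y)=\kpi(y,x)$, which follows from the symmetry of $k$ and the commutation of mixed partial derivatives in \eqref{eq:stein_kernel}, and which entails $\nabla_1\kpi(y,x)=\nabla_2\kpi(x,y)$. Relabelling $x\leftrightarrow y$ in the first term of the last display shows that the two terms coincide, hence
\[
\frac{\d}{\d t}\Big|_{t=0}\cF(\mu_t)=\int\Bigl(\int \nabla_2\kpi(x,y)\,\d\mu(x)\Bigr)\cdot\xi(y)\,\d\mu(y)=\bigl\langle \E_{x\sim\mu}[\nabla_2\kpi(x,\cdot)],\,\xi\bigr\rangle_{L^2(\mu)}.
\]
Since $\E_{x\sim\mu}[\nabla_2\kpi(x,\cdot)]=\nabla\bigl(\int\kpi(\cdot,y)\,\d\mu(y)\bigr)$ is itself a gradient it lies in $\T_\mu\cP_2(\X)$, and as the identity holds for every $\xi$, the characterisation of the Wasserstein gradient recalled in \Cref{sec:W2_diff} identifies $\nabla_{W_2}\cF(\mu)$ with this expression, which is \eqref{eq:ksd_w2_grad}. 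For \eqref{eq:dissipation} I would then use that, under \Cref{ass:lipschitz,ass:growth}, the KSD flow $\mu_t$ of \eqref{eq:ksd_flow} exists, is unique, and is a locally absolutely continuous curve in $(\cP_2(\X),W_2)$ with velocity $v_{\mu_t}=-\nabla_{W_2}\cF(\mu_t)$ (as established in \Cref{sec:Cauchy-Lip_flow}); the chain rule \eqref{eq:Lyap_deriv} then gives $\dot{\cF}(\mu_t)=\langle\nabla_{W_2}\cF(\mu_t),v_{\mu_t}\rangle_{L^2(\mu_t)}=-\|\nabla_{W_2}\cF(\mu_t)\|^2_{L^2(\mu_t)}$, and expanding the $L^2(\mu_t)$-norm via \eqref{eq:ksd_w2_grad} produces \eqref{eq:dissipation}.

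The first-variation algebra is routine; the two steps that require genuine care are (i) the uniform-in-$t$ domination for differentiating under the integral sign, which is precisely what \Cref{ass:lipschitz} is designed to supply, since the Lipschitzness of $\nabla_i\kpi$ in each argument controls $\|\nabla_1\kpi(x+t\xi(x),y)\|$ and $\|\nabla_2\kpi(x,y+t\xi(y))\|$ on a compact neighbourhood of the identity by $\|\nabla_i\kpi(x,y)\|+L(\cdot)\|\xi\|_\infty$, whose right-hand side is $\mu\otimes\mu$-integrable by the standing assumptions and $\mu\in\cP_2(\X)$; and (ii) the appeal to the Wasserstein chain rule \eqref{eq:Lyap_deriv}, which presupposes that the flow is a bona fide absolutely continuous curve — here \Cref{ass:growth} is essential, as it rules out finite-time explosion of the particle trajectories — and that $\cF$ is regular enough for the derivative along the flow to equal the $L^2(\mu_t)$-pairing with $\nabla_{W_2}\cF(\mu_t)$. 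I expect (ii) to be the main obstacle, since it is where the global existence theory of \Cref{sec:Cauchy-Lip_flow} is genuinely needed rather than a mere formal computation; (i) is technical but entirely controlled by \Cref{ass:lipschitz}.
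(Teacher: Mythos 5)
Your argument is correct but follows a genuinely different route from the paper's. You work in the Lagrangian picture: you differentiate $\cF$ along pushforward curves $\mu_t=(\mathrm{id}+t\xi)_\#\mu$, pass the $t$-derivative under the double integral, use symmetry of $\kpi$ to collapse the two terms, and read off $\nabla_{W_2}\cF(\mu)$ as the vector field whose $L^2(\mu)$-pairing against arbitrary test fields $\xi$ reproduces the derivative. The paper instead proceeds in two Eulerian stages (following \Cref{def:w2grad}, i.e.\ Lemma 10.4.1 of Ambrosio--Gigli--Savar\'e): it first computes the first variation $\partial\cF(\mu)/\partial\mu(y)=\int\kpi(x,y)\,\d\mu(x)$ by the purely algebraic perturbation $\mu\mapsto\mu+\epsilon\xi$ --- a step that requires no differentiation of $\kpi$ at all, only the quadratic dependence of $\cF$ on $\mu$ --- and then takes the spatial gradient of this scalar function, justifying the interchange of $\nabla_y$ with the $x$-integral via a second-order Taylor estimate $\bigl|\epsilon^{-1}(\kpi(x,y+\epsilon v)-\kpi(x,y))-\langle\nabla_2\kpi(x,y),v\rangle\bigr|\le\tfrac12 L(x)\epsilon\|v\|^2$, which is dominated by the $\mu$-integrable function $\tfrac12 L(x)\|v\|^2$ directly from \Cref{ass:lipschitz}. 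For the dissipation, you invoke the chain rule \eqref{eq:Lyap_deriv} as given, while the paper re-derives the same identity by pairing the first variation against the continuity equation and integrating by parts; this is the same computation written out.

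What the paper's split buys is a lighter domination argument: only a single Lipschitz--Taylor remainder in one variable has to be controlled. Your route is geometrically more transparent but slightly heavier on the interchange step: to dominate the $t$-derivative of the integrand in a $\mu\otimes\mu$-integrable way on a neighbourhood of $t=0$, you need both the Lipschitz correction $L(\cdot)\|\xi\|_\infty|t|$ \emph{and} $\mu\otimes\mu$-integrability of the base quantity $\|\nabla_1\kpi(x,y)\|$ on $\supp(\xi)\times\X$. The latter does follow from \Cref{ass:lipschitz} (Lipschitzness in both arguments plus $\mu$-integrability of $L$, combined with finite first moments of $\mu\in\cP_2$), but it is an extra step that the paper's version avoids and that your write-up currently glosses over; it is worth making explicit. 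With that point spelled out, the two arguments are equivalent in rigour and lead to the same formulas.
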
\vspace{-3mm}
Since the r.h.s. of \eqref{eq:dissipation} is negative, \Cref{prop:dissipation} shows that the squared KSD w.r.t.\ $\pi$ decreases along the KSD gradient flow dynamics. In other words, $\cF$ is indeed a Lyapunov functional for the dynamics \eqref{eq:ksd_flow} as discussed in \Cref{sec:sub_Lyap}.

%
\subsection{Discrete time and discrete measures}

A straightforward time-discretization of \eqref{eq:ksd_flow} is a gradient descent in the Wasserstein space applied to ${\cF(\mu)=\frac{1}{2}\KSD^2(\mu|\pi)}$. Starting from an initial distribution $\mu_0 \in \cP_2(\X)$, it writes as follows at iteration $n\in \N$,
\begin{equation}\label{eq:ksd_descent}
    \mu_{n+1} = \left(I -\gamma \nabla_{W_2} \cF(\mu_n)\right)_{\#} \mu_n,
\end{equation}
for a step-size $\gamma>0$. However for discrete measures $\hat{\mu}=\frac{1}{N}\sum_{j=1}^{N}\delta_{x^{j}}$, we can make the problem more explicit setting a loss function
\begin{equation}
    F([x^j]_{j=1}^{N}) :=
    \cF(\hat{\mu})
    =\frac1{2N^2}\sum_{i, j=1}^N\kpi(x^i, x^j).
    \label{eq:discrete_loss}
\end{equation}
Problem \eqref{eq:min_D} then corresponds to a standard non-convex optimization problem over the finite-dimensional, Euclidean, space of particle positions. The gradient of $F$ is readily obtained as\vspace{-2mm}
\begin{equation*}
     \nabla_{x^i} F([x^j]_{j=1}^{N}) = \frac{1}{N^2}\sum_{j=1}^N \nabla_{2}\kpi( x^{j},x^{i}).
\end{equation*}
since, by symmetry of $\kpi$, $\nabla_1\kpi(x,y)=\nabla_2\kpi(y,x)$. As both $F$ and $\nabla_{x^i} F$ can be explicitly computed, one can implement the KSD Descent either using a gradient descent (\Cref{alg:ksd_descent_GD}) or through a quasi-Newton algorithm such as L-BFGS (\Cref{alg:ksd_descent_LBFGS}). As a matter of fact, L-BFGS \cite{liu1989limited} is  often faster and more robust than the conventional gradient descent. It also does not require choosing critical hyper-parameters, such as a learning rate, since L-BFGS performs a line-search to find suitable step-sizes. It only requires a tolerance parameter on the norm of the gradient, which is in practice set to machine precision.
\begin{algorithm}[ht]
   \caption{KSD Descent GD}
   \label{alg:ksd_descent_GD}
\begin{algorithmic}
   \STATE {\bfseries Input:} initial particles $(x_0^i)_{i=1}^{N}\sim \mu_0$, number of iterations $M$, step-size $\gamma$
   \FOR{$n=1$ {\bfseries to} $M$}
   \STATE
      \vspace{-0.8cm}
{\begin{equation}\label{eq:ksd_update}
    \hspace{-0.5cm}[x_{n+1}^{i}]_{i=1}^{N}=[x_n^{i}]_{i=1}^{N} - \frac{\gamma}{N^2}\sum_{j=1}^N [\nabla_{2}\kpi( x_n^{j},x_n^{i})]_{i=1}^{N},
    \end{equation}}
   \vspace{-0.8cm}
   \ENDFOR
   \STATE \textbf{Return: }$[x_{M}^i]_{i=1}^{N}$.
   \end{algorithmic}
\end{algorithm}
\begin{algorithm}[ht]
   \caption{KSD Descent L-BFGS}
   \label{alg:ksd_descent_LBFGS}
\begin{algorithmic}
   \STATE {\bfseries Input:} initial particles $(x_0^i)_{i=1}^{N}\sim \mu_0$, tolerance $\mathrm{tol}$
   \vspace{0.2cm}
   \STATE \textbf{Return: }$[x_*^i]_{i=1}^{N} = \mathrm{L}\text{-}\mathrm{BFGS}(F, \nabla F, [x_0^i]_{i=1}^{N}, \mathrm{tol})$.
    \end{algorithmic}
\end{algorithm}
A technical descent lemma for \eqref{eq:ksd_descent} (\Cref{prop:descent_lemma_appendix}) showing that $\cF$ decreases at each iteration \eqref{eq:ksd_descent} is provided in \Cref{sec:descent_lemma}. It requires the boundedness of $(\|L(\cdot)\|_{L^2(\mu_n)})_{n\ge0}$, the $L^2$-norm of the Lipschitz constants of \Cref{ass:lipschitz} along the flow, as well as the convexity of $L(\cdot)$ and a compactly-supported initialization.

\begin{remark}
As L-BFGS requires access to exact gradients, \Cref{alg:ksd_descent_LBFGS} cannot be used in a stochastic setting. However this can be done for \Cref{alg:ksd_descent_GD} by subsampling over particles in the double sum in \Cref{eq:discrete_loss}. Moreover, in some settings like Bayesian inference, the score itself writes as a sum over observations. In this case, the loss $F$ writes as a \emph{double} sum over observations, and a stochastic variant of \Cref{alg:ksd_descent_GD} tailored for this problem could be devised, in the spirit of~\citet{clemenccon2016scaling}. 
\end{remark}

\subsection{Related work}\label{sec:related_work}
Several recent works fall within the framework sketched in Section \ref{sec:sub_Lyap}. In SVGD, \citet{liu2016kernelized} take $\cF$ as the KL, and set $v_{\mu_t}=-S_{\mu, k}\nabla\ln\left(\frac{d\mu_t}{d\pi}\right)$ to obtain $\dcF$ as the (squared) KSD. Integrating this inequality w.r.t. time yields a $1/T$ convergence rate for the average $\KSD$ between $\mu_t$ and $\pi$ for $t\in [0,T]$. This enabled \citet[Proposition 5, Corollary 6]{korba2020non} to obtain a discrete-time descent lemma for bounded kernels, as well as rates of convergence for   
the averaged $\KSD$. In contrast, since the dissipation \eqref{eq:dissipation} of the $\KSD$ along its $W_2$ gradient flow does not correspond to any dissimilarity, our descent lemma for \eqref{eq:ksd_descent} (\Cref{prop:descent_lemma_appendix}) does not yield similar rates of convergence.
Alternatively, in the LAWGD algorithm recently proposed by \citet{chewi2020svgd}, $\cF$ is the KL, and $v_{\mu_t}=-\nabla S_{\pi, k_{\cL_\pi}}\left(\frac{d\mu_t}{d\pi}\right)$ with $k_{\cL_\pi}$ chosen such that $\dcF$ is the $\chi^2$, by taking $S_{\pi, k_{\cL_\pi}}$ as the inverse of the diffusion operator:
\begin{equation}\label{eq:def_diffusion_op}
    \cL_\pi : f\mapsto-\Delta f-\langle \nabla \log\pi,\nabla f \rangle.
\end{equation}
Their elegant approach results in a linear convergence of the KL along their flow, but implementing LAWGD in practice requires to compute the spectrum of $\cL_\pi$. It is in general as difficult as solving a linear PDE, and \citet{chewi2020svgd} admit it is unlikely to scale in high dimensions.\footnote{The update rules of SVGD, LAWGD and MMD-GD can be found in \Cref{sec:updates_related_work}.}

Beyond studies on the KL, \citet{mroueh2019sobolev} considered $\cF$ as the MMD and pick $v_{\mu_t}$ based on a kernelized Sobolev norm so that $\dcF$ resembles the MMD, but without proving convergence of their scheme. \citet{arbel2019maximum} also analyzed $\cF$ as the MMD, but for $v_{\mu_t}=-\nabla_{W_2}\frac{1}{2}\MMD^2(\mu_t,\pi)$ with similar $\dcF$ as ours and with a dedicated analysis of their MMD-GD flow. We recall that $\MMD^2(\mu,\pi)=\|\int k(x,\cdot)d\mu(x)-\int k(x,\cdot)d\pi(x)\|_{\cH_k}^2$.
Since the Stein kernel satisfies the Stein's identity $\int \kpi(x,\cdot)d\pi(x)=0$, the KSD  \eqref{eq:ksd} can be identified to an MMD with the Stein kernel \cite{chen2018stein}. However, the assumptions of \citet{arbel2019maximum} -$\nabla k$ is $L$-Lipschitz for $L\in \R^+$- do not hold in general for unbounded Stein kernels. Here, we provide the right set of assumptions~\ref{ass:lipschitz}-\ref{ass:growth} on $\kpi$ for the flow to exist and for a descent lemma to hold. Also, as noted on \Cref{fig:toy_example}, the sample-based MMD flow, defined through
\begin{multline}\label{eq:mmd_grad}
    \nabla_{W_2}\frac{1}{2}\MMD^2(\mu,\pi)= \int \nabla_{2}k(x,\cdot) \d(\mu-\pi)(x)
\end{multline}
can fail dramatically while KSD flow succeeds. 
This suggests that the geometrical properties of the KSD flow are more favorable than the ones of the regular MMD flow. In other words, choosing an appropriate (target-dependent) kernel, as in our method or in LAWGD, appears more propitious than 
taking a 
kernel $k$ unrelated to $\pi$.

Related to the optimization of the $\KSD$, Stein points~\citep{chen2018stein} also propose to use the KSD loss for sampling, but the loss is minimized using very different tools. While KSD descent uses a first order information by following the gradient (or L-BFGS) direction with a fixed number of particles, Stein points use a Frank-Wolfe scheme, adding particles one by one in a greedy fashion. Given $N$ particles $x^1, \dots, x^N$, in Stein points, the next particle is set as
$$
x^{N+1} \in \argmin_{x} \frac12k_{\pi}(x, x) + \sum_{i=1}^N k_{\pi}(x, x^i).
$$
This problem is solved using derivative-free (a.k.a. zeroth-order) algorithms like grid-search or random sampling. The main drawback of such an approach is that it scales poorly with the dimension $d$ when compared to first-order methods. In the same spirit, \cite{futami2019bayesian} have recently proposed a similar algorithm to optimize the MMD.

\section{Theoretical properties of the KSD flow}\label{sec:convergence}

In this section, we provide a theoretical study of the convergence of the KSD Wasserstein gradient flow, assessing the convexity of $\cF$ and discussing the stationary points of its gradient flow. Remarkably, we encounter pitfalls similar to other deterministic flows derived from IPMs. This issue arises because IPMs are always mixture convex, but seldom geodesically convex. We first investigate the convexity properties of $\cF$ along $W_2$ geodesics and show that exponential convergence near equilibrium cannot hold in general. Then, we examine some stationary points of its $W_2$ gradient flow, which explain the failure cases met in \Cref{sec:experiments}, where $\hat{\mu}_n$ converges to a degenerate measure.


\subsection{Convexity properties of the KSD flow}

As is well-known, decay along $W_2$ gradient flows can be obtained from convexity properties along geodesics. A natural object of interest is then the Hessian of the objective $\cF$. We define below this object, in a similar way as \citet{duncan2019geometry}. We recall that  $\{\nabla \psi,\enspace \psi\in C_c^{\infty}(\X)\}$ is by definition dense in $\mathcal{T}_{\mu}\cP_2(\R^d)\subset L^2(\mu)$ for any $\mu\in \cP_2(\X)$ \citep[Definition 8.4.1]{ambrosio2008gradient}.
\begin{definition}\label{def_Hessian}
Consider $\psi\in C_c^{\infty}(\X)$ and the path $\rho_t$ from  $\mu$ to $(I+\nabla\psi)_{\#}\mu$ given by: $\rho_t=  (I+t\nabla\psi)_{\#}\mu$, for all $t\in [0,1]$.
The Hessian of $\cF$ at $\mu$, $H\cF_{|\mu}$, is defined as a symmetric bilinear form on $C_c^{\infty}(\X)$ associated with the quadratic form
$\Hess_{\mu}\cF(\psi,\psi) := \frac{d^2}{dt^2}\Bigr|_{\substack{t=0}}\mathcal{F}(\rho_t).$
\end{definition}
\Cref{def_Hessian} can be straightforwardly related to the usual symmetric bilinear form defined on $\mathcal{T}_{\mu}\cP_2(\R^d)\times \mathcal{T}_{\mu}\cP_2(\R^d)$ \citep[Section 3]{otto2000generalization}\footnote{The $W_2$ Hessian of $\cF$, denoted $H\cF_{|\mu}$ is an operator over $\mathcal{T}_{\mu}\mathcal{P}_2(\X)$ verifying $\ps{H\cF_{|\mu}v_t, v_t}_{L^2(\mu)}=\frac{d^2}{dt^2}\Bigr|_{\substack{t=0}}\mathcal{F}(\rho_t)$ if $t\mapsto \rho_t$ is a geodesic starting at $\mu$ with vector field $t\mapsto v_t$.}.
\begin{proposition}\label{prop:hessian_ksd_mu}
Under \Cref{ass:lipschitz,ass:square_der_int}, 
the Hessian of $\cF$ at $\mu$ is given, for any $\psi \in C_c^{\infty}(\R^d)$, by
\begin{multline}\label{eq:hessian_ksd_mu}
\hspace{-0.4cm}   \Hess_{\mu}\cF(\psi,\psi) = \E_{x,y\sim \mu}\left[\nabla\psi(x)^T\nabla_{1}\nabla_2 \kpi(x,y)\nabla\psi(y) \right] \\
    + \E_{x,y\sim \mu}\left[\nabla\psi(x)^T H_{1}\kpi(x,y)\nabla\psi(x) \right].
\end{multline}
\end{proposition}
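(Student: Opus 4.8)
The strategy is to compute $\frac{d^2}{dt^2}\big|_{t=0}\cF(\rho_t)$ directly, where $\rho_t = (I+t\nabla\psi)_{\#}\mu$, by writing $\cF(\rho_t)$ explicitly as a double integral and then differentiating twice under the integral sign. By the pushforward formula and definition \eqref{eq:discrete_loss}--\eqref{eq:ksd}, we have
\begin{equation*}
\cF(\rho_t) = \frac12\iint \kpi\big(x+t\nabla\psi(x),\, y+t\nabla\psi(y)\big)\,\d\mu(x)\,\d\mu(y).
\end{equation*}
Set $\Phi_t(x)=x+t\nabla\psi(x)$. The first step is to justify differentiating twice in $t$ inside the integral: this is where \Cref{ass:lipschitz} and \Cref{ass:square_der_int} enter. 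Since $\psi\in C_c^\infty$, $\nabla\psi$ is bounded with bounded support, so $\Phi_t(x)$ stays in a fixed compact neighborhood of $\supp\mu$'s relevant region for $t\in[0,1]$; the Lipschitz bounds on $\nabla_1\kpi,\nabla_2\kpi$ (with $\mu$-integrable constant $L(\cdot)$) control the first derivatives, and the operator-norm integrability of $H_1\kpi$ together with symmetry ($\nabla_1\nabla_2\kpi(x,y)$ bounds following from \Cref{ass:lipschitz}) controls the second derivatives uniformly in $t$, so dominated convergence applies.

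The second step is the actual chain rule computation. Differentiating once,
\begin{equation*}
\frac{d}{dt}\cF(\rho_t) = \iint \nabla_1\kpi\big(\Phi_t(x),\Phi_t(y)\big)^T\nabla\psi(x)\,\d\mu(x)\,\d\mu(y),
\end{equation*}
using symmetry of $\kpi$ to combine the two symmetric terms into one (the factor $\frac12$ cancels). Differentiating again and evaluating at $t=0$ produces two types of terms: one where the derivative hits the first argument of $\nabla_1\kpi$ again (giving $H_1\kpi(x,y)$ contracted with $\nabla\psi(x)$ on both sides), and one where it hits the second argument (giving $\nabla_1\nabla_2\kpi(x,y)$, i.e. the mixed second derivative, contracted with $\nabla\psi(x)$ and $\nabla\psi(y)$). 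At $t=0$, $\Phi_0=I$, so the arguments become $(x,y)$ and one recovers exactly \eqref{eq:hessian_ksd_mu}. A small bookkeeping point: one must identify $H_1\kpi$ as the Hessian in the first variable and check that the "cross" term is $\nabla_1(\nabla_2\kpi) = \nabla_2(\nabla_1\kpi)$ as a $d\times d$ matrix, consistent with the notation in the statement.

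The main obstacle is the differentiation-under-the-integral justification, i.e. producing an integrable (in $\mu\otimes\mu$) dominating function for the second $t$-derivative of the integrand, valid uniformly for $t$ in a neighborhood of $0$. The integrand's second derivative involves $H_1\kpi$ and $\nabla_1\nabla_2\kpi$ evaluated at shifted points $(\Phi_t(x),\Phi_t(y))$; one needs that \Cref{ass:lipschitz} (which gives a growth/Lipschitz handle on $\nabla_1\kpi,\nabla_2\kpi$ hence bounds on their further derivatives through difference quotients) combined with \Cref{ass:square_der_int} yields domination. Because $\nabla\psi$ has compact support, the shift $\Phi_t(x)-x$ is bounded, so it suffices to bound $\|H_1\kpi(x',y')\|_{op}$ and $\|\nabla_1\nabla_2\kpi(x',y')\|_{op}$ for $(x',y')$ ranging over a bounded perturbation of $(\supp(\nabla\psi)\cup\text{diag})$-type sets; \Cref{ass:lipschitz} gives that these are dominated by $L(y')$ (resp. $L(x')$), which is $\mu$-integrable, and \Cref{ass:square_der_int} closes the $H_1$ term. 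Once this domination is in hand, the rest is the routine chain-rule calculation sketched above; I would relegate the detailed estimate to the appendix and present the chain rule in the main text.
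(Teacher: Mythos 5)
Your proposal matches the paper's proof: the paper also writes $\cF(\rho_t)=\tfrac12\iint\kpi(\Phi_t(x),\Phi_t(y))\,\d\mu(x)\d\mu(y)$, differentiates twice under the integral via a dominated-convergence argument (isolated as a separate technical lemma, \Cref{lem:derivative_ksd}), uses the symmetry of $\kpi$ to collapse the first derivative into one term, and then evaluates $\ddot{\cF}(\rho_t)$ at $t=0$ to split into the $\nabla_1\nabla_2\kpi$ and $H_1\kpi$ contributions exactly as you describe. Your domination strategy (compact support of $\nabla\psi$ plus the Lipschitz bound from \Cref{ass:lipschitz} for the mixed term and \Cref{ass:square_der_int} for the Hessian term) is the same one used in the paper's proof of that lemma.
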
\vspace{-3mm}
A proof of \Cref{prop:hessian_ksd_mu} is provided in \Cref{sec:proof_hessian_ksd_mu}. Our computations are similar to the ones in \citep[Lemma 23]{arbel2019maximum} with some terms getting simpler owing to the Stein's identity satisfied by the Stein kernel. As for the squared MMD \citep[Proposition 5]{arbel2019maximum}, the squared KSD is unlikely to be geodesically convex. Indeed, while the first term is always positive, the second term in \eqref{eq:hessian_ksd_mu} can in general take negative values, unless $H_1 \kpi(x,y)$ is positive for all $x,y\in \supp(\mu)$. 
Nevertheless, at $\mu=\pi$, this second term vanishes, again owing to the Stein's property of $\kpi$.

\begin{corollary}\label{prop:hessian_ksd_pi}
Under \Cref{ass:lipschitz,ass:square_der_int,ass:integrable}, the Hessian of $\cF$ at $\pi$ is given, for any $\psi \in C_c^{\infty}(\R^d)$, by
\begin{equation*}
    \Hess_{\pi}\cF(\psi,\psi) = \| S_{\pi,\kpi}\cL_{\pi} \psi\|^2_{\Hkpi}
\end{equation*}
where $S_{\pi,\kpi}$ and $\cL_{\pi}$ are defined in \Cref{eq:integral_operator,eq:def_diffusion_op}.
\end{corollary}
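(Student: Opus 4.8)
The plan is to start from \eqref{eq:hessian_ksd_mu} in \Cref{prop:hessian_ksd_mu}, specialize to $\mu=\pi$, and use the Stein identity $\int\kpi(x,\cdot)\,\d\pi(x)=0$ twice. First I would dispose of the second summand of \eqref{eq:hessian_ksd_mu}: its two $\nabla\psi$ factors are both evaluated at $x$, so integrating $y$ out turns it into $\E_{x\sim\pi}\bigl[\nabla\psi(x)^T\bigl(\int H_1\kpi(x,y)\,\d\pi(y)\bigr)\nabla\psi(x)\bigr]$. By \Cref{ass:lipschitz} one has $\|H_1\kpi(x,y)\|_{op}\le L(y)$ uniformly in $x$ with $L(\cdot)\in L^1(\pi)$, and \Cref{ass:square_der_int} gives $\pi\otimes\pi$-integrability of $\|H_1\kpi\|_{op}$; these license differentiation under the integral sign, so $\int H_1\kpi(x,y)\,\d\pi(y)=H_1\bigl[\int\kpi(x,y)\,\d\pi(y)\bigr]=H_1[0]=0$ using the symmetry $\kpi(x,y)=\kpi(y,x)$ and the Stein identity. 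Hence $\Hess_{\pi}\cF(\psi,\psi)=T$ with $T:=\E_{x,y\sim\pi}\bigl[\nabla\psi(x)^T\nabla_1\nabla_2\kpi(x,y)\nabla\psi(y)\bigr]$, and it remains to show $T=\|S_{\pi,\kpi}\cL_{\pi}\psi\|^2_{\Hkpi}$.

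Next I would integrate $T$ by parts twice against the Langevin generator, using the elementary identity $\int\langle\nabla\varphi,\nabla g\rangle\,\d\pi=\int(\cL_{\pi}\varphi)\,g\,\d\pi$ valid for $\varphi\in C_c^{\infty}(\X)$ and $g\in C^1(\X)$; it follows from $(\cL_{\pi}\varphi)\pi=-\divT(\pi\nabla\varphi)$ and has no boundary term since $\varphi$ is compactly supported (here $\pi$ is smooth and positive as $s=\nabla\log\pi\in C^2$). Writing the integrand of $T$ as $\langle\nabla\psi(x),\nabla_x G_y(x)\rangle$ with $G_y(x):=\langle\nabla_2\kpi(x,y),\nabla\psi(y)\rangle$, which is $C^1$ in $x$ because $k\in C^{3,3}$, applying the identity in $x$ gives $T=\E_{x,y\sim\pi}\bigl[(\cL_{\pi}\psi)(x)\,\langle\nabla_2\kpi(x,y),\nabla\psi(y)\rangle\bigr]$; applying it again in $y$ with $g=\kpi(x,\cdot)$ gives $T=\E_{x,y\sim\pi}\bigl[(\cL_{\pi}\psi)(x)\,(\cL_{\pi}\psi)(y)\,\kpi(x,y)\bigr]$. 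Fubini's theorem is invoked at each step and is legitimate because all integrands are continuous and supported in the compact set $\supp\psi\times\supp\psi$.

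Finally I would identify this with an $\Hkpi$-norm. The function $\cL_{\pi}\psi=-\Delta\psi-\langle s,\nabla\psi\rangle$ is continuous with compact support, hence in $L^2(\pi)$, and by \Cref{ass:integrable} the operator $S_{\pi,\kpi}$ of \eqref{eq:integral_operator} maps $L^2(\pi)$ into $\Hkpi$. Using the reproducing property $\langle\kpi(x,\cdot),\kpi(y,\cdot)\rangle_{\Hkpi}=\kpi(x,y)$ and exchanging the $\Hkpi$ inner product with the Bochner integral, $T=\bigl\langle S_{\pi,\kpi}(\cL_{\pi}\psi),\,S_{\pi,\kpi}(\cL_{\pi}\psi)\bigr\rangle_{\Hkpi}=\|S_{\pi,\kpi}\cL_{\pi}\psi\|^2_{\Hkpi}$, which is the desired identity.

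The main obstacle, such as it is, is purely bookkeeping: verifying that each integration by parts has vanishing boundary terms and that the differentiation under the integral sign and the applications of Fubini are licensed. These follow routinely from the compact support of $\psi$, the standing smoothness $k\in C^{3,3}$ and $s\in C^2$, and the integrability provided by \Cref{ass:lipschitz,ass:square_der_int,ass:integrable} --- the same ingredients that make the Hessian of \Cref{prop:hessian_ksd_mu} well defined. The one genuinely structural point is that pairing $\nabla_1\nabla_2\kpi$ with $\nabla\psi\otimes\nabla\psi$ collapses, after the two integrations by parts, into $\cL_{\pi}\psi\otimes\cL_{\pi}\psi$ paired with $\kpi$; this reflects the fact that $\cL_{\pi}$ coincides, up to sign, with the composition of the gradient with the vector-valued Stein operator used to build $\kpi$, so that the two Stein operators inside $\kpi$ are exactly absorbed by the two gradients in the Hessian.
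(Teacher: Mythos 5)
Your proposal is correct and follows essentially the same route as the paper's proof in Appendix B.4: specialize the Hessian formula of Proposition~\ref{prop:hessian_ksd_mu} at $\mu=\pi$, kill the second summand by differentiating the Stein identity under the integral (licensed by \Cref{ass:square_der_int}), integrate the remaining term by parts twice to produce $\cL_\pi\psi\otimes\cL_\pi\psi$ paired with $\kpi$, and identify the result as $\|S_{\pi,\kpi}\cL_\pi\psi\|^2_{\Hkpi}$ via the reproducing property (the paper's \Cref{lem:kernel_operator_sp}). The only cosmetic difference is that you package the integration by parts as the abstract identity $\int\langle\nabla\varphi,\nabla g\rangle\,\d\pi=\int(\cL_\pi\varphi)g\,\d\pi$ rather than writing it coordinate-by-coordinate as the paper does.
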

A proof of \Cref{prop:hessian_ksd_pi} is provided in \Cref{sec:proof_hessian_ksd_pi}. We now study the curvature properties near equilibrium, characterized by $\Hess_{\pi}\cF$. In particular, inspired by the methodology described in \citet{villani2003optimal} and recently applied by \citet{duncan2019geometry}, we expect exponential convergence of solutions initialized near $\pi$ whenever the Hessian is bounded from below by a quadratic form on the tangent space of $\cP_2(\X)$ at $\pi$, included in $L^2(\pi)$.
\begin{definition}\label{def:exp_decay}
We say that \textit{exponential decay near equilibrium} holds if there exists $\lambda>0$ such that for any $\psi \in C_c^{\infty}(\R^d)$,
\begin{equation}\label{eq:exp_decay}
\Hess_{\pi}\cF(\psi,\psi) \ge \lambda  \|\nabla \psi\|^2_{L^2(\pi)}.
\end{equation}
\end{definition}\vspace{-3mm}
According to \Cref{prop:hessian_ksd_pi}, \eqref{eq:exp_decay} can be seen as a kernelized version of the following form of the Poincaré inequality for $\pi$ \citep[Chapter 5]{bakry2013analysis}
\begin{equation} \label{eq:poincare_dual_form}
    \Vert \cL_{\pi} \psi \Vert_{L_2(\pi)}^2 \geq \lambda_{\pi} \Vert \nabla \psi \Vert_{L_2(\pi)}^2.
\end{equation}
Condition \eqref{eq:exp_decay} is similar to the Stein-Poincaré inequality \citep[Lemma 32]{duncan2019geometry}. We will now argue that \eqref{eq:exp_decay} is hardly ever satisfied, thus obtaining an impossibility result reminiscent of the one for SVGD in \citep[Lemma 36]{duncan2019geometry}, which states that exponential convergence (of the KL) for the SVGD gradient flow does not hold whenever $\pi$ has exponential tails and the derivatives of $\nabla \log \pi$ and $k$ grow at most at a polynomial rate. We start with the following characterization of exponential decay near equilibrium:

\begin{proposition}\label{prop:no_exp_cv}  Let $T_{\pi,\kpi}=S_{\pi,\kpi}^* \circ S_{\pi,\kpi}$ and $L_0^2(\pi)=\{\phi \in L^2(\pi), \int \phi d\pi = 0\}$. The exponential decay near equilibrium \eqref{eq:exp_decay} holds if and only if $\cL_{\pi}^{-1} : L_0^2(\pi) \rightarrow L_0^2(\pi)$, the inverse of $\cL_{\pi}\vert_{L_0^2(\pi)}$, is well-defined, bounded, and for all $\phi \in  L_0^2(\pi)$ we have
 \begin{equation}\label{eq:proof_prop_no_exp_cv_reformulation_main}
     \langle \phi , T_{\pi, \kpi} \phi \rangle_{L_2(\pi)} \geq \lambda \langle \phi , \cL_{\pi}^{-1} \phi \rangle_{L_2(\pi)}.
 \end{equation}
 \end{proposition}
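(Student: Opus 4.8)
The plan is to reformulate \eqref{eq:exp_decay} as a spectral statement about the diffusion operator $\cL_{\pi}$ through the change of variable $\phi=\cL_{\pi}\psi$, relying on two facts. First, by \Cref{prop:hessian_ksd_pi} we have $\Hess_{\pi}\cF(\psi,\psi)=\|S_{\pi,\kpi}\cL_{\pi}\psi\|^2_{\Hkpi}$, and since (as recalled in \Cref{sec:background_ksd} under \Cref{ass:integrable}) $S_{\pi,\kpi}^{*}$ is the inclusion $\Hkpi\hookrightarrow L^2(\pi)$, this equals $\langle\cL_{\pi}\psi,T_{\pi,\kpi}\cL_{\pi}\psi\rangle_{L^2(\pi)}$. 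Second, for $\psi,\varphi\in C_c^{\infty}(\R^d)$ the integration-by-parts identity $\langle\cL_{\pi}\psi,\varphi\rangle_{L^2(\pi)}=\langle\nabla\psi,\nabla\varphi\rangle_{L^2(\pi)}$ holds (valid since $\pi$ is a $C^1$ density, $s=\nabla\log\pi\in C^2$); taking $\varphi=1$ shows $\cL_{\pi}\psi\in L_0^2(\pi)$, and taking $\varphi=\psi$ gives $\langle\cL_{\pi}\psi,\psi\rangle_{L^2(\pi)}=\|\nabla\psi\|^2_{L^2(\pi)}$. Therefore \eqref{eq:exp_decay} is equivalent to the statement that $\langle\cL_{\pi}\psi,T_{\pi,\kpi}\cL_{\pi}\psi\rangle_{L^2(\pi)}\ge\lambda\langle\cL_{\pi}\psi,\psi\rangle_{L^2(\pi)}$ for every $\psi\in C_c^{\infty}(\R^d)$.

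For the implication from right to left, suppose $\cL_{\pi}^{-1}$ is well-defined and bounded on $L_0^2(\pi)$ and that \eqref{eq:proof_prop_no_exp_cv_reformulation_main} holds for all $\phi\in L_0^2(\pi)$. Given $\psi\in C_c^{\infty}(\R^d)$, set $\phi:=\cL_{\pi}\psi\in L_0^2(\pi)$; since the kernel of $\cL_{\pi}$ is the constants, $\cL_{\pi}^{-1}\phi=\psi-\int\psi\,d\pi$. The left-hand side of \eqref{eq:proof_prop_no_exp_cv_reformulation_main} is then $\langle\cL_{\pi}\psi,T_{\pi,\kpi}\cL_{\pi}\psi\rangle_{L^2(\pi)}=\Hess_{\pi}\cF(\psi,\psi)$, and its right-hand side is $\lambda\langle\cL_{\pi}\psi,\psi-\int\psi\,d\pi\rangle_{L^2(\pi)}=\lambda\|\nabla\psi\|^2_{L^2(\pi)}$ by the two integration-by-parts identities; this is exactly \eqref{eq:exp_decay}.

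For the converse, assume \eqref{eq:exp_decay}. Since $S_{\pi,\kpi}$ is Hilbert-Schmidt (hence bounded) under \Cref{ass:integrable}, we may bound $\|S_{\pi,\kpi}\cL_{\pi}\psi\|^2_{\Hkpi}\le\|T_{\pi,\kpi}\|_{op}\|\cL_{\pi}\psi\|^2_{L^2(\pi)}$, which turns \eqref{eq:exp_decay} into the dual Poincaré inequality \eqref{eq:poincare_dual_form}, $\|\cL_{\pi}\psi\|^2_{L^2(\pi)}\ge\lambda_{\pi}\|\nabla\psi\|^2_{L^2(\pi)}$ with $\lambda_{\pi}=\lambda/\|T_{\pi,\kpi}\|_{op}$, for all $\psi\in C_c^{\infty}(\R^d)$. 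Writing this as $\langle\cL_{\pi}^{2}\psi,\psi\rangle_{L^2(\pi)}\ge\lambda_{\pi}\langle\cL_{\pi}\psi,\psi\rangle_{L^2(\pi)}$ and extending it to the domain of the non-negative self-adjoint operator $\cL_{\pi}$ (using that $C_c^{\infty}(\R^d)$ is a core), the spectral theorem gives $\mathrm{spec}(\cL_{\pi}|_{L_0^2(\pi)})\subseteq\{0\}\cup[\lambda_{\pi},\infty)$; were $0$ in this spectrum it would be an isolated point, hence an eigenvalue, contradicting that $\cL_{\pi}$ has trivial kernel on $L_0^2(\pi)$, so $\mathrm{spec}(\cL_{\pi}|_{L_0^2(\pi)})\subseteq[\lambda_{\pi},\infty)$ and $\cL_{\pi}^{-1}:L_0^2(\pi)\to L_0^2(\pi)$ is well-defined and bounded (with operator norm at most $1/\lambda_{\pi}$). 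Finally, given $\phi\in L_0^2(\pi)$, pick $\psi_n\in C_c^{\infty}(\R^d)$ converging to $\cL_{\pi}^{-1}\phi$ in the graph norm of $\cL_{\pi}$; then $\cL_{\pi}\psi_n\to\phi$ in $L^2(\pi)$ and $\langle\cL_{\pi}\psi_n,\psi_n\rangle_{L^2(\pi)}\to\langle\phi,\cL_{\pi}^{-1}\phi\rangle_{L^2(\pi)}$, so passing to the limit in the reformulation of \eqref{eq:exp_decay} from the first paragraph (using boundedness of $T_{\pi,\kpi}$) yields \eqref{eq:proof_prop_no_exp_cv_reformulation_main}.

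The reformulation in the first paragraph and the substitution $\phi=\cL_{\pi}\psi$ are routine; I expect the delicate points to lie entirely in the converse direction, namely justifying that $C_c^{\infty}(\R^d)$ is a core for $\cL_{\pi}$ so that the $C_c^{\infty}$-level inequalities upgrade to operator inequalities on $L^2(\pi)$, and the short spectral argument promoting the dual Poincaré inequality \eqref{eq:poincare_dual_form} to an actual spectral gap for $\cL_{\pi}$ on $L_0^2(\pi)$, that is, to the classical Poincaré inequality for $\pi$ that \eqref{eq:exp_decay} is meant to mimic.
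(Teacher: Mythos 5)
Your argument tracks the paper's proof almost step for step: rewrite the Hessian at $\pi$ via \Cref{prop:hessian_ksd_pi} and the adjoint identity $S_{\pi,\kpi}^*=\iota$, replace $\|\nabla\psi\|^2_{L^2(\pi)}$ by $\langle\psi,\cL_\pi\psi\rangle_{L^2(\pi)}$ through the Dirichlet form, substitute $\phi=\cL_\pi\psi$ for the easy direction, and for the converse use Cauchy--Schwarz with $\|T_{\pi,\kpi}\|_{op}$ to obtain the dual Poincar\'e inequality, deduce the spectral gap (the paper invokes Weyl's criterion where you use the ``isolated point of the spectrum is an eigenvalue'' fact, but these are interchangeable), and close by density of $C_c^\infty(\R^d)$ in the graph norm, which is exactly the content of \Cref{lem:diffusion_background}. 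The proof is correct and essentially identical in structure to the one in the paper.
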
\vspace{-3mm}
See \Cref{sec:no_exp_cv} for a proof. By the spectral theorem for compact, self-adjoint operators \citep[Section 8.3]{kreyszig1978introductory}, $T_{\pi,\kpi}$ has a discrete spectrum $(l_n)_{n\in \mathbb{N}^*}$ which satisfies $l_n\ge0$ and $l_n\to 0$. Under mild assumptions on $\pi$, the operator $\cL_{\pi}$ also has a discrete, positive spectrum \citep[Appendix A]{chewi2020svgd}. \Cref{prop:no_exp_cv} implies the following necessary condition on the spectrum of $\cL_{\pi}^{-1}$ and $T_{\pi,\kpi}$ for the exponential decay near equilibrium \eqref{eq:exp_decay} to hold.

\begin{corollary} \label{coro:spectr_decay}\
If $\cL_{\pi}^{-1}$ has a discrete spectrum $(\lambda_n)_{n\in \mathbb{N}^*}$ and \eqref{eq:exp_decay} 
holds, then
   $  \lambda_n=\mathcal{O}(l_n)$,
 i.e.\ the eigenvalue decay of $\cL_{\pi}^{-1}$ is at least as fast as the one of $T_{\pi,\kpi}$.
\end{corollary}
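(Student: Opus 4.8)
The plan is to deduce \Cref{coro:spectr_decay} directly from the characterization in \Cref{prop:no_exp_cv}, specializing the inequality \eqref{eq:proof_prop_no_exp_cv_reformulation_main} to eigenfunctions. First I would note that since $\cL_{\pi}^{-1}$ is compact, self-adjoint and positive on $L_0^2(\pi)$, it admits an orthonormal basis of eigenfunctions $(\phi_n)_{n\in\mathbb N^*}$ of $L_0^2(\pi)$ with $\cL_{\pi}^{-1}\phi_n=\lambda_n\phi_n$ and $\lambda_n\to 0$ (and, up to relabeling, $\lambda_n$ non-increasing). Plugging $\phi=\phi_n$ into \eqref{eq:proof_prop_no_exp_cv_reformulation_main} and using $\langle\phi_n,\cL_{\pi}^{-1}\phi_n\rangle_{L_2(\pi)}=\lambda_n$ gives
\begin{equation*}
\langle \phi_n , T_{\pi,\kpi}\phi_n\rangle_{L_2(\pi)} \ge \lambda\,\lambda_n .
\end{equation*}

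Next I would bound the left-hand side from above using the spectral data of $T_{\pi,\kpi}$. Since $T_{\pi,\kpi}$ is compact, self-adjoint and positive with eigenvalues $(l_n)_{n\in\mathbb N^*}$ listed in non-increasing order, for any unit vector $\phi$ in a subspace of dimension $m$ one cannot in general bound $\langle\phi,T_{\pi,\kpi}\phi\rangle$ by $l_m$; so the cleaner route is via the min–max (Courant–Fischer) principle. Consider the $n$-dimensional subspace $V_n=\Sp(\phi_1,\dots,\phi_n)$. For $\phi\in V_n$ with $\|\phi\|_{L_2(\pi)}=1$ we have $\langle\phi,\cL_{\pi}^{-1}\phi\rangle_{L_2(\pi)}\ge \lambda_n$ (smallest eigenvalue of $\cL_{\pi}^{-1}$ restricted to $V_n$), hence by \eqref{eq:proof_prop_no_exp_cv_reformulation_main}, $\langle\phi,T_{\pi,\kpi}\phi\rangle_{L_2(\pi)}\ge\lambda\lambda_n$ for every unit $\phi\in V_n$. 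But the min–max characterization of the $n$-th largest eigenvalue of $T_{\pi,\kpi}$ gives
\begin{equation*}
l_n = \max_{\dim W = n}\ \min_{\substack{\phi\in W\\ \|\phi\|_{L_2(\pi)}=1}} \langle \phi, T_{\pi,\kpi}\phi\rangle_{L_2(\pi)} \ \ge\ \min_{\substack{\phi\in V_n\\ \|\phi\|_{L_2(\pi)}=1}} \langle \phi, T_{\pi,\kpi}\phi\rangle_{L_2(\pi)} \ \ge\ \lambda\,\lambda_n .
\end{equation*}
Therefore $\lambda_n \le \lambda^{-1} l_n$ for all $n$, which is exactly $\lambda_n = \mathcal{O}(l_n)$.

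I would close by recording the underlying hypotheses explicitly: \eqref{eq:exp_decay} holding is what licenses \Cref{prop:no_exp_cv} and hence \eqref{eq:proof_prop_no_exp_cv_reformulation_main}; the discreteness of the spectrum of $T_{\pi,\kpi}$ is guaranteed by the spectral theorem for compact self-adjoint operators once $S_{\pi,\kpi}$ is Hilbert–Schmidt (which follows from \Cref{ass:integrable}); and the discreteness of the spectrum of $\cL_{\pi}^{-1}$ is assumed in the statement (and holds under the mild conditions on $\pi$ cited from \citet{chewi2020svgd}). One should also check that the ordering conventions match — both spectra listed so that the min–max indexing is consistent — but this is a bookkeeping point.

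The main obstacle, and the only place requiring care, is the interplay of the two operators' spectral decompositions: $T_{\pi,\kpi}$ and $\cL_{\pi}^{-1}$ need not commute, so one cannot argue eigenvalue-by-eigenvalue on a common basis. The min–max argument sidesteps this by using only the eigenspaces $V_n$ of $\cL_{\pi}^{-1}$ as test subspaces against the variational characterization of $l_n$; getting the direction of the inequalities right (lower bound on the Rayleigh quotient of $T_{\pi,\kpi}$ on $V_n$, versus $l_n$ being a $\max$ over all $n$-dimensional subspaces) is the crux. A secondary subtlety is ensuring $V_n\subset L_0^2(\pi)$ so that \eqref{eq:proof_prop_no_exp_cv_reformulation_main} genuinely applies to all $\phi\in V_n$; this is immediate since the $\phi_n$ are eigenfunctions of $\cL_{\pi}^{-1}$ acting on $L_0^2(\pi)$.
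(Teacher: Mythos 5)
Your proof is correct and takes essentially the same approach as the paper: both apply the Courant--Fischer max--min characterization of eigenvalues to the inequality $\langle\phi,T_{\pi,\kpi}\phi\rangle\ge\lambda\langle\phi,\cL_\pi^{-1}\phi\rangle$ from \Cref{prop:no_exp_cv}. The paper compares the two variational quantities abstractly in one line (see \eqref{eq:rayleigh_quotients_eigenvalue_decay}), while you make the test subspace $V_n=\Sp(\phi_1,\dots,\phi_n)$ explicit --- a presentational difference, not a different route.
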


We also show that if $\Hkpi$ is infinite dimensional and exponential convergence near equilibrium holds, then $\cL_{\pi}^{-1}$ has a discrete spectrum (\Cref{lem:discrete_spectrum_hkpi}). 
We now present our impossibility result on the exponential decay near equilibrium.

\begin{theorem}\label{cor:no_exp_decay}
Let $\pi \propto e^{-V}$. Assume that $V \in C^2(\R^d)$, $\nabla V$ is Lipschitz and $\cL_{\pi}$ has a discrete spectrum. Then exponential decay near equilibium does not hold.
\end{theorem}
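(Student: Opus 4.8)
## Proof proposal

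The plan is to derive a contradiction from the combination of three facts: (i) by \Cref{prop:no_exp_cv} and \Cref{coro:spectr_decay}, exponential decay near equilibrium forces the eigenvalues $\lambda_n$ of $\cL_\pi^{-1}$ to decay at least as fast as the eigenvalues $l_n$ of $T_{\pi,\kpi}$; (ii) because $\nabla V$ is Lipschitz, $\cL_\pi$ has bounded spectral gaps in a suitable sense, so its spectrum $(\nu_n)_n$ cannot grow too fast, hence $\lambda_n = 1/\nu_n$ cannot decay too fast; (iii) the operator $T_{\pi,\kpi} = S_{\pi,\kpi}^*\circ S_{\pi,\kpi}$ is a (Hilbert--Schmidt, indeed trace-class under \Cref{ass:integrable}) integral operator built from the \emph{smooth} Stein kernel paired with $\pi$, so its eigenvalues $l_n$ decay \emph{fast} (faster than any polynomial if $k$ is smooth enough, in particular faster than the $\lambda_n$ coming from (ii)). Matching the contradiction "$\lambda_n$ slow, $l_n$ fast, yet $\lambda_n = \mathcal O(l_n)$" gives the result.

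Concretely, I would proceed as follows. First, invoke \Cref{prop:no_exp_cv}: exponential decay near equilibrium implies $\cL_\pi^{-1}$ is well-defined and bounded on $L_0^2(\pi)$, and \eqref{eq:proof_prop_no_exp_cv_reformulation_main} holds; by the hypothesis that $\cL_\pi$ has discrete spectrum and \Cref{coro:spectr_decay}, this yields $\lambda_n = \mathcal O(l_n)$ where $(\lambda_n)$ is the spectrum of $\cL_\pi^{-1}$ and $(l_n)$ that of $T_{\pi,\kpi}$. Second, I establish a lower bound on $\lambda_n$, equivalently an upper bound on the growth of the spectrum $\nu_n = 1/\lambda_n$ of $\cL_\pi$. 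The key tool is that $\nabla V$ Lipschitz, say $\|H V\|_{op}\le C$, controls $\cL_\pi$: using the test functions $\psi$ built from coordinate-type or Hermite-type functions (or, more robustly, a Weyl-law / min-max comparison of $\cL_\pi$ with the Ornstein--Uhlenbeck generator associated with a Gaussian of comparable curvature), one shows $\nu_n = \mathcal O(n^{2/d})$ or at worst polynomial growth in $n$; hence $\lambda_n$ decays at most polynomially, $\lambda_n \ge c\, n^{-\alpha}$ for some $\alpha$. Third, I bound $l_n$ from above. Since $k\in C^{3,3}$ with the boundedness assumed for \Cref{lem:lipschitz_ass}-type kernels, and $\pi$ is fixed, the Mercer-type eigenvalue decay of the integral operator $S_{\pi,\kpi}^*S_{\pi,\kpi}$ associated with the smooth kernel $\kpi$ restricted to the measure $\pi$ is super-polynomial — in fact one can exploit that $\kpi$ inherits differentiability from $k$ and that smoothness of a Mercer kernel forces fast eigenvalue decay \citep[cf.][]{steinwart2008support}. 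Fourth, combine: $\lambda_n \ge c n^{-\alpha}$ while $l_n = o(n^{-\alpha})$ contradicts $\lambda_n = \mathcal O(l_n)$, unless $\Hkpi$ is finite-dimensional, which is excluded (or handled separately) because $\cL_\pi$ having an infinite discrete spectrum forces infinitely many nonzero $l_n$ via \Cref{lem:discrete_spectrum_hkpi}. This contradiction shows exponential decay near equilibrium cannot hold.

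The main obstacle is step two/three made rigorous and \emph{quantitative enough}: I need a genuine lower bound on the decay rate of $\lambda_n$ (equivalently, an upper bound on how fast $\nu_n\to\infty$) that is \emph{compatible in exponent} with a genuinely faster upper bound on $l_n$. The cleanest route is probably not sharp Weyl asymptotics but a softer comparison: pick an explicit Gaussian $\gamma$ with $\Hess(-\log\gamma) = C\cdot I$ dominating the curvature bound from $\nabla V$ Lipschitz, compare the Dirichlet forms of $\cL_\pi$ and $\cL_\gamma$, note $\cL_\gamma$ is a scaled OU operator with eigenvalues growing like $C\cdot(\text{integer})$, and deduce $\nu_n \le C'\, n^{c/d}$ or similar — enough to conclude $\lambda_n$ cannot be super-polynomially small. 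Meanwhile $l_n$ \emph{is} super-polynomially small because $\kpi$ is $C^{2,2}$-smooth at least (from $k\in C^{3,3}$, $s\in C^2$) and lives on all of $\R^d$ with the Gaussian-type tail of $\pi$; the standard statement is that eigenvalues of integral operators with kernels in $C^{2r}$ decay like $O(n^{-(1+2r)/d})$, and here $r$ can be pushed up, or in the genuinely-smooth-$k$ case the decay is exponential. Reconciling the two exponents carefully (and isolating exactly which smoothness/tail hypotheses on $k$ and $V$ are needed — note the theorem only assumes $V\in C^2$ and $\nabla V$ Lipschitz, so the argument must rely only on those plus the blanket $k\in C^{3,3}$) is the delicate part of the write-up.
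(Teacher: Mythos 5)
Your skeleton is right — invoke \Cref{prop:no_exp_cv} and \Cref{coro:spectr_decay} to get $\lambda_n = \mathcal{O}(l_n)$, lower-bound $\lambda_n$ via a Lipschitz-curvature comparison with a Gaussian Ornstein--Uhlenbeck/Schr\"odinger operator (this matches the paper's \Cref{lem:eigenvalue_decay,cor:eigenvalue_decay,lem:main_lemma_no_exp_decay}, which give $\tilde\lambda_n = \mathcal{O}(n^{1/d})$), and then contradict the upper bound on $l_n$. But your step (iii), the claim that $l_n$ decays super-polynomially because $\kpi$ is smooth, is a genuine gap, and you correctly flag it as the delicate part. The Mercer-type results you invoke (eigenvalue decay $\mathcal{O}(n^{-(1+2r)/d})$ for $C^{2r}$ kernels) are stated for compact domains or compactly supported measures; here $\pi$ is supported on all of $\R^d$, $\kpi(x,x)$ is generically unbounded (e.g.\ $\kpi(x,x)\sim\|x\|^2$ for Gaussian $\pi$ and bounded $k$), and the hypotheses of \Cref{cor:no_exp_decay} impose only $V\in C^2$ with $\nabla V$ Lipschitz — nothing that forces super-polynomial spectral decay of $T_{\pi,\kpi}$. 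So your plan to ``reconcile the two exponents'' cannot be carried out from the stated assumptions.

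The fix is to aim for much less. The paper does not need \emph{any} rate on $l_n$ beyond summability: under \Cref{ass:integrable}, $T_{\pi,\kpi}$ is trace class \citep[Theorem 4.27]{steinwart2008support}, hence $\sum_n l_n < \infty$. Combined with $\lambda_n = \mathcal{O}(l_n)$ from \Cref{coro:spectr_decay}, this forces $\sum_n \lambda_n < \infty$. But the Gaussian comparison gives $\tilde\lambda_n \le C n^{1/d}$, i.e.\ $\lambda_n = \tilde\lambda_n^{-1} \ge C^{-1} n^{-1/d}$ for large $n$, and $\sum n^{-1/d}$ diverges for every $d\ge 1$. Contradiction — no exponent-matching required. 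You mention trace class in passing but never exploit its key consequence (summability of eigenvalues), and instead try to prove a strictly stronger, and here unobtainable, decay estimate. Keeping only the qualitative ``summable vs.\ non-summable'' dichotomy is what makes the argument close under the weak hypotheses on $V$ and $k$.

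One smaller point: you should also handle the degenerate case where $\Hkpi$ is finite-dimensional (so $l_n = 0$ eventually), but, as you note, \Cref{lem:discrete_spectrum_hkpi} and the infinitude of the discrete spectrum of $\cL_\pi$ take care of it.
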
 

The main idea behind the proof of \Cref{cor:no_exp_decay}  (\Cref{sec:eigenvalue_decay_proof}) is that $T_{\pi,\kpi}$ is nuclear \citep[Theorem 4.27]{steinwart2008support}, which implies that its eigenvalues $(l_n)_{n\in \mathbb{N}^*}$ are summable. On the other hand the eigenvalue decay of $\cL_{\pi}^{-1}$ when $\pi$ is a Gaussian can be seen to be of order $O(1/n^{1/d})$ (\Cref{lem:eigenvalue_decay} in \Cref{sec:eigenvalue_decay_proof}), which is not summable. The general case is obtained by comparison with a Gaussian.
Despite the lack of (strong) geodesic convexity near equilibrium, we still observe empirically good convergence properties of the KSD flow for discrete measures to a stationary measure. Hence, we now investigate these stationary measures.

\subsection{Stationary measures of the KSD flow}\label{sec:spurious_minima}

The KSD gradient flow leads to a deterministic algorithm, as for SVGD and LAWGD. To study the convergence of these algorithms in continuous time, it is relevant to characterize the stationary measures, i.e.\ the ones which cancel the dissipation $\dcF$ \eqref{eq:Lyap_deriv} of the objective functional $\cF$ along the relative gradient flow dynamics. Unfortunately, unlike for the SVGD and LAWGD algorithms, the dissipation related to the KSD flow \eqref{eq:dissipation} does not yield a dissimilarity between measures. 
Consequently, the study of the stationary measures of the KSD is more involved. We discuss below when failure cases may happen.

\begin{lemma}\label{lem:no_cst}
Assume \Cref{ass:integrable} holds. Then $\Hkpi$ does not contain non-zero constant functions.
\end{lemma}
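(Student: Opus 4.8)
The plan is to argue by contradiction, using the Stein identity satisfied by $\kpi$ together with \Cref{ass:integrable}. Suppose that $\Hkpi$ contains a non-zero constant function $c \neq 0$. The reproducing property in $\Hkpi$ gives, for every $x \in \X$, that $c = \ps{c, \kpi(x,\cdot)}_{\Hkpi}$. I would then integrate this identity against $\pi$: since $\int_{\X} \kpi(x,\cdot)\,\d\pi(x) = 0$ in $\Hkpi$ by the Stein identity, and since a constant is certainly $\pi$-integrable, Fubini/linearity of the inner product yields $c = \ps{c, \int_{\X}\kpi(x,\cdot)\,\d\pi(x)}_{\Hkpi} = \ps{c, 0}_{\Hkpi} = 0$, contradicting $c \neq 0$.

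The point requiring a little care is justifying that the Stein identity $\int_{\X}\kpi(x,\cdot)\,\d\pi(x) = 0$ holds as a Bochner integral in $\Hkpi$ (not merely pointwise), so that we may pull the inner product with $c$ inside the integral. This is where \Cref{ass:integrable} enters: it guarantees $\int \kpi(x,x)\,\d\pi(x) < \infty$, hence $\int \|\kpi(x,\cdot)\|_{\Hkpi}\,\d\pi(x) = \int \sqrt{\kpi(x,x)}\,\d\pi(x) < \infty$ by Jensen, so $x \mapsto \kpi(x,\cdot)$ is Bochner-integrable in $\Hkpi$ with respect to $\pi$ and the integral operator $S_{\pi,\kpi}$ is well-defined. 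Then exchanging $\ps{c,\cdot}_{\Hkpi}$ with $\int \cdot\,\d\pi$ is exactly the defining property of the Bochner integral. The earlier excerpt already states that the Stein kernel satisfies $\int_{\X}\kpi(x,\cdot)\,\d\pi(x) = 0$ under the standing regularity assumptions on $k$ and $\pi$, so I would simply invoke that fact.

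An alternative, essentially equivalent route avoids even invoking Bochner integration explicitly: pair the candidate constant $c$ with the Stein identity in its scalar form. For any $f \in \Hkpi$, the function $x \mapsto \ps{f, \kpi(x,\cdot)}_{\Hkpi} = f(x)$ is $\pi$-integrable (again by \Cref{ass:integrable} and Cauchy--Schwarz, $|f(x)| \le \|f\|_{\Hkpi}\sqrt{\kpi(x,x)}$), and the Stein identity gives $\int_{\X} f(x)\,\d\pi(x) = 0$ for all $f \in \Hkpi$; applying this to $f = c$ the constant function gives $c = c \cdot \int_{\X} \d\pi(x) = \int_{\X} c\,\d\pi(x) = 0$. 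Either way the contradiction is immediate.

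The main (and only) obstacle is thus the measure-theoretic bookkeeping needed to move the $\Hkpi$-inner product through the $\pi$-integral; once \Cref{ass:integrable} supplies the integrability of $x \mapsto \sqrt{\kpi(x,x)}$, everything is routine. I expect the written proof to be only a few lines.
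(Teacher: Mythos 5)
Your proof is correct and rests on the same two ingredients as the paper's: the Stein identity $\int\kpi(x,\cdot)\,\d\pi(x)=0$ and \Cref{ass:integrable} to control the norm $\|\kpi(x,\cdot)\|_{\Hkpi}=\sqrt{\kpi(x,x)}$. The paper reaches the conclusion by a density argument (first disposing of constants in the pre-Hilbert span $\cH_0$ and then passing to the closure using the boundedness of the embedding $\Hkpi\hookrightarrow L^2(\pi)$), whereas you pull the inner product through a Bochner integral directly; your second ``alternative route'' is really the same density argument stated abstractly as ``every $f\in\Hkpi$ integrates to zero against $\pi$.'' Either way this is essentially the paper's proof, just packaged slightly more compactly.
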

A proof of \Cref{lem:no_cst} is provided in \Cref{sec:proof_no_cst}. This result has the immediate consequence of considerably restricting the number of candidate fully-supported measures that are stationary for the KSD gradient flow. Consider one such measure $\mu_{\infty}$. At equilibrium, $\dot{\cF}(\mu_{\infty})=0$; which implies that $\int \kpi(x,.)d\mu_{\infty}(x)$ is $\mu_{\infty}$-a.e. equal to a constant function $c$. Since $x\mapsto c$ is then also an element of $\Hkpi$, the previous lemma implies that $c=0$. Hence, if $\mu_{\infty}$ and $\pi$ are full-support, $\cF(\mu_\infty)=0$. Provided $\kpi$ is characteristic \citep{bharath2011charac}, then $\mu_{\infty}=\pi$.

However, as  \Cref{alg:ksd_descent_GD,alg:ksd_descent_LBFGS} rely on discrete measures, the dissipation $\dcF$ \eqref{eq:dissipation} can vanish even for $\mu\neq \pi$ because $\mu$ is not full-support. Depending on the properties of $\pi$ and $k$, this may happen even for trivial measures such as a single Dirac mass, as stated in the following Lemma.
\begin{lemma}\label{cor:stable Dirac}
Let $x_0$ such that $s(x_0)=0$ and $Js(x_0)$ is invertible, and consider a translation-invariant kernel $k(x,y)=\phi(x-y)$, for $\phi\in C^3(\X)$. Then $\delta_{x_0}$ is a stable fixed measure  of \eqref{eq:ksd_flow}, i.e. it is stationary
and any small push-forward of $\delta_{x_0}$ is attracted back by the flow.
\end{lemma}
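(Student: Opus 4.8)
\textbf{Proof proposal for Lemma~\ref{cor:stable Dirac}.}

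The plan is to verify stationarity first, and then analyze the linearized dynamics of a small perturbation of $\delta_{x_0}$ to show it is attracted back. For stationarity: by \eqref{eq:dissipation}, $\delta_{x_0}$ is stationary for \eqref{eq:ksd_flow} iff $\nabla_2\kpi(x_0,x_0)=0$. Using \eqref{eq:stein_kernel} with a translation-invariant kernel $k(x,y)=\phi(x-y)$, one computes $\nabla_2\kpi(x,y)$ explicitly; evaluating at $x=y=x_0$ and using $s(x_0)=0$, every surviving term carries a factor $s(x_0)$ or an odd derivative of $\phi$ at $0$ (which vanishes since $\phi\in C^3$ is even, as $k$ is a p.s.d.\ kernel), so $\nabla_2\kpi(x_0,x_0)=0$. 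Hence $\delta_{x_0}$ is stationary.

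For stability, I would consider a pushforward $\mu = (I+\nabla\psi)_{\#}\delta_{x_0}$, i.e.\ a single particle moved to $x_0 + \varepsilon$ with $\varepsilon = \nabla\psi(x_0)$ small; more generally a finite cloud of particles near $x_0$, but by linearity of the flow on the particle positions it suffices to track one particle. A particle at position $x$ near $x_0$ evolves under \eqref{eq:ksd_update}-type dynamics as $\dot x = -\nabla_2\kpi(x,x)$ (the self-interaction term, since with one particle $i=j$). Taylor-expanding around $x_0$: $\dot x = -H\,(x-x_0) + O(\|x-x_0\|^2)$, where $H := \nabla_x\big[\nabla_2\kpi(x,x)\big]\big|_{x=x_0}$ is the relevant Jacobian. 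The key computation is to show $H$ is positive definite. I would differentiate the explicit expression for $\nabla_2\kpi(x,x)$ (with $k(x,y)=\phi(x-y)$), and evaluate at $x_0$ using $s(x_0)=0$: the terms involving $s$ and its derivatives collapse, and one is left with a contribution of the form $c\, Js(x_0)^T Js(x_0)$ (or $Js(x_0)$ symmetrized) times a positive constant coming from $-\Delta\phi(0)>0$ — more precisely the dominant surviving term should be proportional to $Js(x_0)^\top Js(x_0)$ with a strictly positive scalar, which is positive definite since $Js(x_0)$ is invertible. Then by linearization of ODEs (Hartman–Grobman / Lyapunov's indirect method), $x_0$ is a locally asymptotically stable equilibrium of the single-particle flow, so any sufficiently small pushforward of $\delta_{x_0}$ is attracted back.

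The main obstacle is the explicit computation of $H = \nabla_x[\nabla_2\kpi(x,x)]|_{x_0}$ and checking its positive-definiteness: one must carefully expand the four terms of $\kpi$ in \eqref{eq:stein_kernel}, apply the chain rule for the diagonal restriction $x\mapsto \kpi(x,x)$ (which contributes both a $\nabla_1$ and a $\nabla_2$ piece), and then use evenness of $\phi$ (so $\nabla\phi(0)=0$, $\nabla\Delta\phi(0)=0$, and $H_1\phi(0)$ is a negative-definite multiple of the identity in the isotropic case, or more generally a fixed negative-semidefinite matrix) together with $s(x_0)=0$ to kill all terms except the one quadratic in $Js(x_0)$. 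A secondary point is to make rigorous the reduction ``small pushforward of $\delta_{x_0}$'' $\Rightarrow$ ``track finitely many particles near $x_0$'': since the flow \eqref{eq:ksd_flow} restricted to empirical measures is exactly the ODE system \eqref{eq:ksd_update} and each particle near $x_0$ feels, to leading order, the same linearized drift $-H(x-x_0)$ plus lower-order inter-particle corrections, the whole cloud contracts to $x_0$; invoking \Cref{ass:lipschitz} guarantees the ODE is well-posed so that the linearized analysis controls the true trajectory on a neighborhood of $x_0$.
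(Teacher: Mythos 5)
Your overall strategy is sound and is essentially the linearization-dynamics rephrasing of the paper's energy argument: the paper Taylor-expands $x\mapsto \tfrac12\kpi(x,x)$ to second order around $x_0$ to show $\delta_{x_0}$ is a strict local minimum of $\cF$, and since the single-particle flow is exactly $\dot x = -\nabla_x\big[\tfrac12\kpi(x,x)\big] = -\nabla_2\kpi(x,x)$, your Jacobian $H$ is precisely that Hessian, so positive definiteness of $H$ and local minimality of $\cF$ are the same statement. Your stationarity check ($\nabla_2\kpi(x_0,x_0)=0$ via evenness of $\phi$ and $s(x_0)=0$) is correct and is actually a bit more explicit than the paper, which only argues minimality.

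Two things to fix. First, your identification of the positive scalar is wrong. A much shorter route, which the paper uses, is to note that on the diagonal all cross-terms of the Stein kernel cancel for translation-invariant $k$, so that
\begin{equation*}
\kpi(x,x) = \phi(0)\,\|s(x)\|^2 - \Delta\phi(0),
\end{equation*}
a constant plus $\phi(0)\|s(x)\|^2$. Taylor-expanding $\|s(x)\|^2$ at $x_0$ with $s(x_0)=0$ gives $\|Js(x_0)(x-x_0)\|^2 + o(\|x-x_0\|^2)$, so the Hessian of $\tfrac12\kpi(x,x)$ at $x_0$ is $\phi(0)\,Js(x_0)^\top Js(x_0)$. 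The strictly positive scalar is therefore $\phi(0)=k(x,x)>0$ (positive for any nontrivial p.s.d.\ kernel), \emph{not} $-\Delta\phi(0)$; the latter only contributes the additive constant $\kpi(x_0,x_0)$ and plays no role in the stability. If you instead differentiate the full expression of $\nabla_2\kpi(x,y)$ and restrict to the diagonal you will eventually get the same answer, but you will have to chase many more terms and you have currently guessed the wrong one. Second, the reduction to a single particle needs no ``cloud'' discussion nor any appeal to ``linearity of the flow on particle positions'' (which is false anyway): a pushforward $T_\#\delta_{x_0}$ is always the single Dirac $\delta_{T(x_0)}$, so the claim is from the start a statement about the ODE for one particle, exactly as the paper treats it with $\mu_\varepsilon = (I+\varepsilon\psi)_\#\delta_{x_0} = \delta_{x_0+\varepsilon\psi(x_0)}$.
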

\emph{Proof:} For $\varepsilon > 0$ and $\psi\in C^\infty_c(\R^d)$, set $\mu_{\varepsilon} = (I + \varepsilon \psi)_\#\delta_{x_0} $. We then have $\cF(\mu_{\varepsilon}) =\frac{1}{2} \kpi(x_0 + \varepsilon \psi(x_0), x_0 + \varepsilon \psi(x_0))$.
Expanding $\kpi(x, x)$ at the first order around $x = x_0$ gives $2\cF(\mu_{\varepsilon}) = \varepsilon^2 \|[Js(x_0)]\psi(x_0)\|^2 \phi(0)-\Delta \phi(0) + o(\varepsilon^2)$.
This quantity is minimized for $\psi(x_0) = 0$, which shows that $\delta_{x_0}$ is indeed a local minimum for $\cF$.

Importantly, this result applies whenever the score $s$ vanishes at $x_0$, not only when $x_0$ is a local stable minimum of the potential $\log(\pi)$. This means that for a single particle, KSD descent is attracted to any stationary point of $\log(\pi)$, whereas SVGD converges only to local maxima of $\log(\pi)$ \cite{liu2016stein}. Nonetheless, if $\pi$ is log-concave, there is no spurious stationary point.

For cases more general than \Cref{cor:stable Dirac}, we are interested in the sets that are kept invariant by the gradient flow. For these sets, an erroneous initialization may prevent the particles from reaching the support of $\pi$. We provide below a general result holding for any deterministic flow, beyond our specific choice \eqref{eq:ksd_flow} of $v_{\mu_t}$, and thus holding also for SVGD. 

\begin{definition}
Let $\cM\subset \R^d$ be a closed nonempty set. We say that $\cM$ is a flow-invariant support set for the flow $(\mu_t)_{t\ge 0}$ of \eqref{eq:continuity_eq} if for any $\mu_0$ s.t.\ $\supp(\mu_0)\subset \cM$, we have that the flow verifies $\supp(\mu_t)\subset \cM$ for all $t\ge 0$.
\end{definition}

\begin{proposition}\label{prop:stable support}(Informal)
Let $\cM\subset \R^d$ be a smooth nonempty submanifold and $\mu_0\in \cP_c(\X)$ with $\supp(\mu_0)\subset\cM$. Assume that, for a deterministic $(v_{\mu_t})_{t\ge 0}$ satisfying classical Caratheodory-Lipschitz assumptions (\Cref{sec:Cauchy-Lip_flow}), we have $v_{\mu_t}(x)\in T_\cM(x)$ where $T_\cM(x)$ is the tangent space to $\cM$ at $x$. Then $\cM$ is flow-invariant for \eqref{eq:continuity_eq}.
\end{proposition}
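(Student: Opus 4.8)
The plan is to reduce the invariance of the support to the invariance of individual particle trajectories, and then to invoke the classical Nagumo--Brezis viability theorem for ODEs on manifolds. First I would recall that, under the Caratheodory--Lipschitz assumptions of \Cref{sec:Cauchy-Lip_flow}, the continuity equation \eqref{eq:continuity_eq} with velocity field $v_{\mu_t}$ is solved by the push-forward $\mu_t = (\Phi_t)_\#\mu_0$, where $\Phi_t$ is the flow map of the nonautonomous ODE $\dot x(t) = v_{\mu_t}(x(t))$, $x(0)=x_0$. This is the standard characteristics representation of solutions to the continuity equation for sufficiently regular (locally Lipschitz, with suitable growth so that trajectories are globally defined) velocity fields; here global existence on $[0,\infty)$ is where a growth condition on $v_{\mu_t}$ (analogous to \Cref{ass:growth}) enters, and $\mu_0\in\cP_c(\X)$ lets us localize. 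Consequently $\supp(\mu_t) = \overline{\Phi_t(\supp(\mu_0))}$, so it suffices to show that every trajectory starting in $\cM$ stays in $\cM$, i.e.\ $x_0\in\cM \implies \Phi_t(x_0)\in\cM$ for all $t\ge0$.

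Next I would set up the viability argument. Fix a trajectory $x(\cdot)$ with $x(0)=x_0\in\cM$. Since $\cM$ is a smooth submanifold, it can be described locally as the zero set of a submersion $G:\mathcal U\to\R^{d-m}$ (with $\cM\cap\mathcal U = G^{-1}(0)$ and $\ker DG(x) = T_\cM(x)$ for $x\in\cM\cap\mathcal U$). Consider $g(t) := G(x(t))$ as long as $x(t)$ remains in the chart. Differentiating, $\dot g(t) = DG(x(t))\,\dot x(t) = DG(x(t))\,v_{\mu_t}(x(t))$. The hypothesis $v_{\mu_t}(x)\in T_\cM(x)$ for $x\in\cM$ gives $\dot g(t) = 0$ whenever $g(t)=0$; combined with the local Lipschitz regularity of $t\mapsto DG(x(t))\,v_{\mu_t}(x(t))$ along the trajectory, Gronwall's inequality (or uniqueness for the scalar ODE $\dot g = A(t)g$ with $A(t)$ a bounded measurable matrix obtained by a mean-value/Hadamard-lemma factorization of $G(x(t)) - 0$) forces $g\equiv 0$, hence $x(t)\in\cM\cap\mathcal U$. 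A continuation argument over a covering of the (compact, since $\mu_0$ has compact support and trajectories stay bounded on compact time intervals) trajectory by such charts then yields $x(t)\in\cM$ for all $t\ge0$. Feeding this back, $\Phi_t(\supp\mu_0)\subset\cM$ and, $\cM$ being closed, $\supp(\mu_t)\subset\cM$, which is the claim.

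The main obstacle — and the reason the statement is flagged ``(Informal)'' — is the regularity bookkeeping needed to make the characteristics representation $\mu_t=(\Phi_t)_\#\mu_0$ and the global existence of $\Phi_t$ rigorous under exactly the Caratheodory--Lipschitz hypotheses invoked: one needs $v_{\mu_t}$ to be locally Lipschitz in $x$ uniformly enough in $t$ (or to work in the Caratheodory/DiPerna--Lions setting), together with a growth bound ensuring no finite-time blow-up, so that the flow map is well-defined on $[0,\infty)$ and the push-forward identity holds. The viability/Nagumo step itself is then routine: the tangency condition $v_{\mu_t}(x)\in T_\cM(x)$ is precisely the infinitesimal condition for $\cM$ to be viable for the ODE, and for a smooth (hence locally sublevel/zero-set) manifold the contingent-cone condition of the general Nagumo theorem reduces to the linear-algebra statement used above. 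I would remark in passing that the result is geometry-only: it never uses that $v_{\mu_t}=-\nabla_{W_2}\cF(\mu_t)$, which is why it applies verbatim to SVGD and to any deterministic flow of the form \eqref{eq:continuity_eq}.
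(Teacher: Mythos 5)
Your proposal is correct, and the overall skeleton matches the paper's: reduce the support invariance to invariance of individual particle trajectories, establish that trajectories starting on $\cM$ remain on $\cM$, then conclude. The place where you diverge is in how the two intermediate steps are carried out. For the reduction to trajectories, the paper invokes the superposition principle of Ambrosio--Crippa, which identifies $\supp(\mu_t)$ with the set of endpoints of integral curves of $v_{\mu_t}$ even without appealing to a global flow map; you instead use the push-forward characteristics representation $\mu_t=(\Phi_t)_\#\mu_0$, which requires the flow map $\Phi_t$ to exist and be unique -- this is fine under the Caratheodory--Lipschitz assumptions $(\textbf{C}_1)$--$(\textbf{C}_4)$, but the superposition principle is the more robust tool and is what the paper uses to stay closer to the assumptions actually imposed. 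For the trajectory invariance, the paper cites Aubin's viability theorem, which is stated for contingent cones of arbitrary closed sets; you instead give a self-contained chart argument (submersion $G$, Hadamard-type factorization, Gronwall), which proves the Nagumo condition from scratch in the smooth case. Your route is more elementary and makes the mechanism transparent, but it is tied to $\cM$ being a smooth submanifold; the paper's route via Aubin is a one-line citation and, crucially, is what lets them state the formal version (Proposition~\ref{prop:stable_support_nonsmooth}) for general closed sets with contingent cones, which they need because some stationary supports they observe empirically (e.g.\ for three-component Gaussian mixtures) are not smooth manifolds. One small caveat on your chart step: since $v_{\mu_t}$ is only assumed Lipschitz in $x$, not $C^1$, the Hadamard-lemma factorization $DG(x)v_{\mu_t}(x)=A(t,x)G(x)$ should be replaced by the Lipschitz estimate $\|DG(x)\,v_{\mu_t}(x)\|\le C\,d_\cM(x)\le C'\|G(x)\|$ near $\cM$, which is all Gronwall needs; your parenthetical already gestures at this, so it is a presentational point rather than a gap.
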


The formal statement, \Cref{prop:stable_support_nonsmooth}, stated and proved in \Cref{sec:proof_stable_support}, can be in particular applied to the ubiquitous radial kernels and to planes of symmetry of $\pi$, i.e.\ affine subspaces $\cM\subset \R^d$ such that the density of $\pi$ is symmetric w.r.t.\ $\cM$. \Cref{lem:Flow-invariant symmetry} is illustrated in \Cref{sec:experiments} for a mixture of two Gaussians with the same variance (\Cref{fig:mog_problem}).

\begin{lemma}\label{lem:Flow-invariant symmetry}
Let $\cM$ be a plane of symmetry of $\pi$ and consider a radial kernel $k(x,y)=\phi(\|x-y\|^2/2)$ with $\phi\in C^3(\R)$. Then, for all $(x,y)\in\cM^2$, $\nabla_{2}\kpi(x,y)\in T_\cM(x)$ and $\cM$ is flow-invariant for \eqref{eq:ksd_flow}.
\end{lemma}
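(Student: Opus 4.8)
The plan is to first prove the pointwise statement ``$\nabla_{2}\kpi(x,y)\in T_\cM(x)$ for $(x,y)\in\cM^2$'' by a direct computation exploiting the reflection symmetry of the score, and then to read off flow-invariance from \Cref{prop:stable support} applied to the KSD velocity field, which by \eqref{eq:ksd_w2_grad} is $v_{\mu_t}(\cdot)=-\E_{x\sim\mu_t}[\nabla_{2}\kpi(x,\cdot)]$.

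\textbf{Step 1 (symmetry of $s$ and $Js$ along $\cM$).} After a translation (which changes neither the radial form of $k$ nor the equivariance of $s$ under shifts), assume $0\in\cM$, write $\R^d=\cM_0\oplus\cM_0^\perp$ with $\cM_0=T_\cM(x)$ the common direction space, and let $R$ be the orthogonal reflection with $R|_{\cM_0}=I$, $R|_{\cM_0^\perp}=-I$, so that $\cM=\cM_0=\ker(R-I)$ and $\pi\circ R=\pi$. Differentiating $\log\pi\circ R=\log\pi$ once gives $R\,s(Rx)=s(x)$, i.e.\ $s(Rx)=R\,s(x)$ since $R=R^{-1}=R^\top$; differentiating once more gives $Js(Rx)\,R=R\,Js(x)$. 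Evaluating both identities at $x\in\cM$ (so $Rx=x$) yields $s(x)=R\,s(x)$, hence $s(x)\in\cM_0$, and $Js(x)\,R=R\,Js(x)$, hence both $Js(x)$ and $Js(x)^\top$ commute with $R$ and therefore map $\cM_0$ into $\cM_0$.

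\textbf{Step 2 (expansion of $\nabla_{2}\kpi$).} For the radial kernel $k(x,y)=\phi(r)$ with $r:=\tfrac12\|x-y\|^2$, one has $\nabla_1 k=\phi'(r)(x-y)$, $\nabla_2 k=\phi'(r)(y-x)$, and $\div_1\nabla_2 k$ is a function of $r$ alone. Differentiating each of the four terms of \eqref{eq:stein_kernel} in $y$, one finds that $\nabla_{2}\kpi(x,y)$ is a linear combination — with scalar coefficients depending on $x,y$ only through $r$ and through the scalars $s(x)^\top s(y)$, $s(x)^\top(x-y)$, $s(y)^\top(x-y)$ — of the five vectors $y-x$, $s(x)$, $s(y)$, $Js(y)^\top s(x)$ and $Js(y)^\top(x-y)$. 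When $x,y\in\cM$ each of these five lies in $\cM_0$: the first is immediate, $s(x),s(y)\in\cM_0$ by Step~1, and $Js(y)^\top s(x),\,Js(y)^\top(x-y)\in\cM_0$ because $Js(y)^\top$ preserves $\cM_0$ and $s(x),x-y\in\cM_0$. Hence $\nabla_{2}\kpi(x,y)\in\cM_0=T_\cM(x)$. I expect this bookkeeping to be the only delicate point: one must check that the $H_1\kpi$- and mixed-derivative contributions never produce $Js$ (or $Js^\top$) applied to a vector outside $\cM_0$, so that the commutation property of Step~1 suffices — it is here that the radiality of $k$ is used, as it keeps every vector direction proportional to $x-y$.

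\textbf{Step 3 (flow-invariance).} Take $\mu_0\in\cP_c(\X)$ with $\supp\mu_0\subset\cM$ and suppose $\supp\mu_t\subset\cM$. By \eqref{eq:ksd_w2_grad}, for $y\in\cM$ the velocity $v_{\mu_t}(y)=-\E_{x\sim\mu_t}[\nabla_{2}\kpi(x,y)]$ is an average over $x\in\supp\mu_t\subset\cM$ of vectors of $\cM_0$, hence $v_{\mu_t}(y)\in\cM_0=T_\cM(y)$. Since $\cM$ is a closed affine subspace, hence a smooth submanifold, and since, under \Cref{ass:lipschitz,ass:growth}, $v_{\mu_t}$ satisfies the Caratheodory--Lipschitz assumptions guaranteeing existence and uniqueness of the KSD flow, \Cref{prop:stable support} (formally \Cref{prop:stable_support_nonsmooth}) applies and gives $\supp\mu_t\subset\cM$ for all $t\ge0$, i.e.\ $\cM$ is flow-invariant for \eqref{eq:ksd_flow}.
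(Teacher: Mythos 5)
Your proof is correct and follows essentially the same route as the paper's: show that $s(x)\in\cM$ and that $Js(x)$ maps $\cM$ into $\cM$ for $x\in\cM$, use the radial structure of $k$ to check term-by-term that $\nabla_2\kpi(x,y)\in\cM$ for $x,y\in\cM$, then invoke \Cref{prop:stable_support_nonsmooth}. The only presentational difference is that you derive the score symmetries from the reflection identity $s\circ R=R\,s$ in coordinate-free form, whereas the paper works in an adapted coordinate system with $\cM=\Sp(e_I,\dots,e_d)$; both are equivalent, and a small slip ($H_1\kpi$ in your Step~2 should read $H_1 k$, the Hessian of the base kernel) does not affect the argument.
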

\emph{Proof idea:} We show that all the terms in $\nabla_{2}\kpi(x,y)$ belong to $T_\cM(x)$. This implies that the convex combination $\nabla_{W_2}\cF(\mu)(y)=\mathbb{E}_{x \sim \mu}[\nabla_{2}\kpi(x,y)]\in T_\cM(x)$. We then apply \Cref{prop:stable support} to conclude.


\section{Experiments}\label{sec:experiments}


In this section, we discuss the performance of KSD Descent to sample from $\pi$ in practice, on toy examples and real-world problems. The code to reproduce the experiments and a package to use KSD Descent are available at \url{https://github.com/pierreablin/ksddescent}. For all our experiments, we use a Gaussian kernel, as we did not notice any difference in practice w.r.t.\ the IMQ kernel. Its bandwith is selected by cross-validation. Implementation details and additional experiments can be found in \Cref{sec:more_experiments}.

\subsection{Toy examples}\label{sec:experiments_toy}

\begin{figure}
    \centering
    \includegraphics[width=.24\columnwidth]{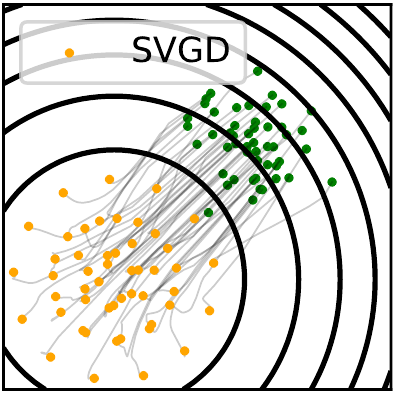}
    \includegraphics[width=.24\columnwidth]{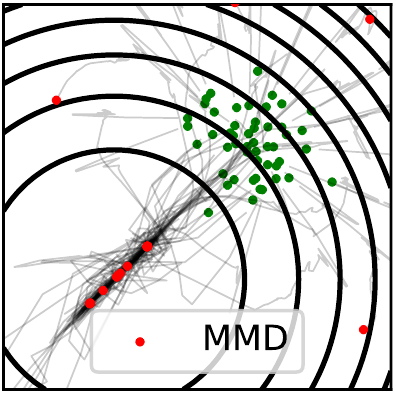}
    \includegraphics[width=.24\columnwidth]{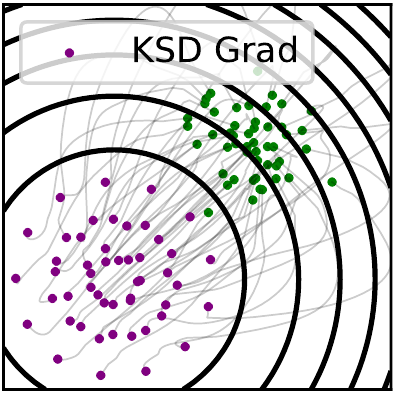}
    \includegraphics[width=.24\columnwidth]{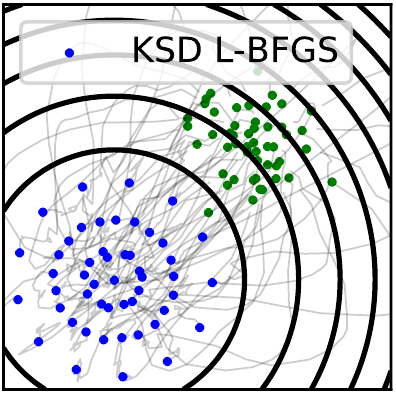}
    \caption{Toy example with 2D standard Gaussian. The green points represent the initial positions of the particles. The light grey curves correspond to their trajectories under the different $v_{\mu_t}$.}
    \label{fig:toy_example}
\end{figure}
In the first example, we choose $\pi$ to be a standard 2D Gaussian, and a Gaussian $k$ with unit bandwidth. We initialize with $50$ particles drawn from a Gaussian with mean $(1,1)$.
\Cref{fig:toy_example} displays the trajectories of several different methods: SVGD, KSD Descent implemented using gradient descent (\Cref{alg:ksd_descent_GD}) and L-BFGS (\Cref{alg:ksd_descent_LBFGS}), and the MMD flow \cite{arbel2019maximum}. To assess the convergence of the algorithms, for SVGD we monitored the norm of the displacement, while for the KSD and MMD gradient flows we used the tolerance parameter of L-BFGS. KSD Descent successfully pushes the particles towards the target distribution, with a final result that is well-distributed around the mode. 
While KSD performs similarly to SVGD, we can notice that the trajectories of the particles behave very differently. Indeed, SVGD trajectories appear to be at first driven by the score term in the update, while the repulsive term acts later to spread the particles around the mode. In contrast, trajectories of the particles updated by KSD Descent are first influenced by the last repulsive term of the update, which seems to determine their global layout, and are then translated and contracted towards the mode under the action of the driving terms. Finally, for the MMD descent, some particles collapse around the mode, while others stay far from the target. This behavior was documented in~\citet{arbel2019maximum}, and can be partly circumvented by injecting some noise in the updates.
 
 \begin{figure}
    \centering
    \includegraphics[width=.99\columnwidth]{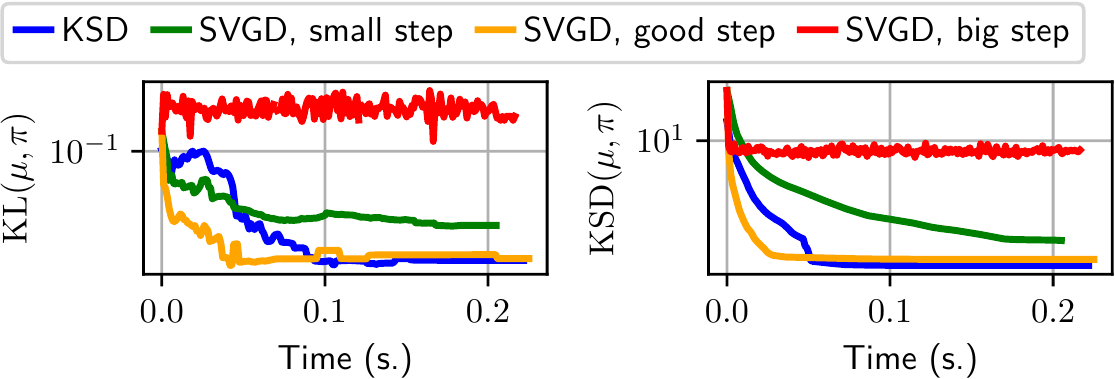}
    \caption{Convergence speed of KSD and SVGD on a Gaussian problem in 1D, with 30 particles.}
    \label{fig:conv_speed}
\end{figure}
 
 We then compare the convergence speed of KSD Descent and SVGD in terms of the KL or KSD objective (see \Cref{fig:conv_speed}). With a fine-tuned step-size, SVGD is the fastest method. However, taking a step-size too large leads to non-convergence, while one too small leads to slow convergence. It should be stressed that it is hard to select a good step-size, or to implement a line-search strategy, since SVGD does not minimize a simple function. In contrast,
 the empirical KSD \eqref{eq:discrete_loss} can be minimized using L-BFGS, which does not have any critical hyper-parameter.
 
\begin{figure}
    \centering
    \includegraphics[width=.48\columnwidth]{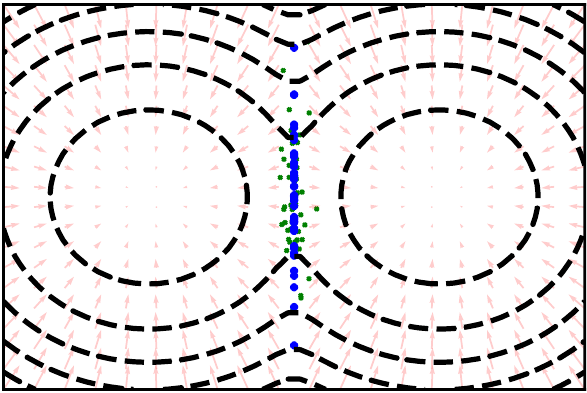}
    \includegraphics[width=.48\columnwidth]{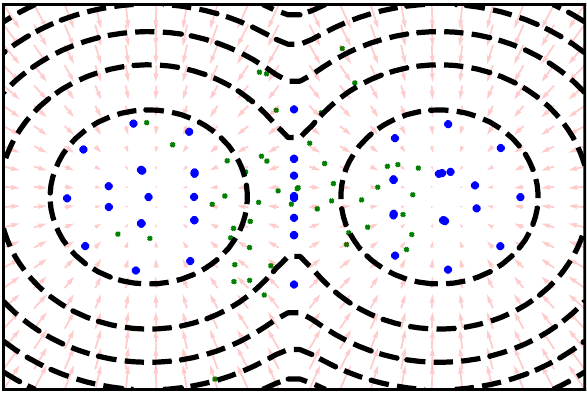}
    \caption{KSD Descent applied on a balanced mixture of Gaussian with small variance (0.1) in 2D. The centroids are at $(-1, 0)$ and $(1, 0)$. The green crosses indicate the initial particle positions, while the blue ones are the final positions. The light red arrows correspond to the score directions. \textbf{Left}: the initial positions are all chosen close to the line $x=0$, which corresponds to an axis of symmetry of $\pi$. \textbf{Right}: even when the initialization is more spread, some particles are still caught in this spurious local minimum.}
    \label{fig:mog_problem}
\end{figure}
 In our second example, we apply KSD Descent for $\pi$ taken as a symmetric mixture of two Gaussians with the same variance. This highlights the results of \Cref{sec:spurious_minima}. If initialized on the axis of symmetry, the particles are indeed stuck on it, as stated in \Cref{lem:Flow-invariant symmetry}. We noticed that, for a large variance of $\pi$ (e.g.\ in $[0.2,1]$), this axis is unstable. However, when the variance is too small (e.g.\ set to 0.1 as in \Cref{fig:mog_problem}), the axis can even become a locally stable set. We also observed that, for a distribution initialized exclusively on one side of the axis, a single component of the mixture can be favored. This is a classical behavior of score-based methods, depending typically on the variance of $\pi$ \cite{wenliang2020blindness}.
 

To fix this issue, we consider an annealing strategy as suggested by \citet{wenliang2020blindness}. It consists in adding an inverse temperature variable $\beta$ to the log density of the model, i.e.\ $\pi^{\beta}(x)\propto \exp(-\beta V(x))$ for $\pi(x)\propto \exp(-V(x))$, with $\beta \in (0, 1]$. This is easily implemented with score-based methods, since it simply corresponds to multiplying $s(x)$ by $\beta$. When $\beta$ is small, annealing smoothes the target distribution and the last term of the Stein kernel, repulsive at short distance, becomes dominant; on the other hand, for $\beta$ close to 1, we recover the true log density. To implement this method, we start with $\beta=0.1$, and run the KSD Descent to obtain particles at `high temperature`. KSD Descent is then re-run starting from these particles, setting now $\beta = 1$. One can see that this strategy successfully solves the issues encountered when the KSD flow was failing to converge to the target $\pi$ (\Cref{fig:toy_example_annealing}).
This correction differs from the noise-injection strategy proposed in \citet{arbel2019maximum} for the MMD flow, which is rather related to randomized smoothing \cite{duchi2012randomized}. Noise-injection would prevent the use of L-BFGS in our case, as it requires exact values of the gradients of previous iterations. Annealing on the other hand is compatible with \Cref{alg:ksd_descent_LBFGS}. 

\begin{figure}
    \centering
    \includegraphics[width=.32\columnwidth]{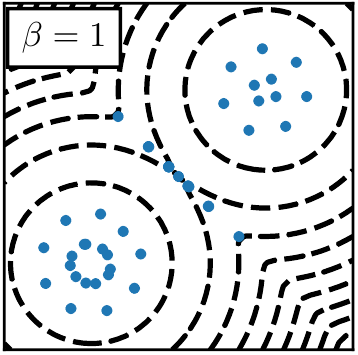}
    \includegraphics[width=.32\columnwidth]{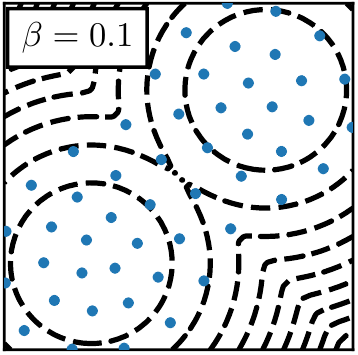}
    \includegraphics[width=.32\columnwidth]{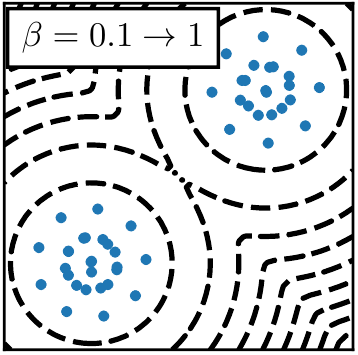}
     \caption{Effect of the annealing strategy on a mixture of Gaussians. \textbf{Left}: without annealing, some particles fall into a spurious minimum. \textbf{Middle}: with a higher temperature ($\beta = 0.1$), the particles are more spread out. \textbf{Right}: starting from the particles in the middle figure and setting $\beta=1$ we converge to a distribution which minimizes the KSD, and has no spurious particles.}
    \label{fig:toy_example_annealing}
\end{figure}
\subsection{Bayesian Independent Component Analysis}
Independent Component Analysis \citep[ICA,][]{comon1994independent} is the generative model $x = W^{-1}s$, where $x$ is an observed sample in $\R^p$, $W\in\R^{p\times p}$ is the unknown square unmixing matrix, and $s\in\R^p$ are the independent sources. We assume that each component has the same density $s_i \sim p_s$. The likelihood of the model is $p(x|W) = \log|W| + \sum_{i=1}^pp_s([Wx]_i)$. For our prior, we assume that $W$ has i.i.d.\ entries, of law $\mathcal{N}(0, 1)$. The posterior is $p(W|x) \propto p(x|W)p(W)$, and the score is given by $s(W) = W^{-\top} - \psi(Wx)x^{\top} - W$, where $\psi = - \frac{p'_s}{p_s}$.
In practice, we choose $p_s$ such that $\psi(\cdot) = \tanh(\cdot)$.
We then use the presented algorithms to draw particles $W\sim p(W|x)$.
We use $N = 10$ particles, and take $1000$ samples $x$ from the ICA model for $p\in\{2,4,8\}$.
Each method outputs $N$ estimated unmixing matrices, $[\tilde{W}_i]_{i=1}^N$.
We compute the Amari distance~\citep{amari1996new} between each $\tilde{W}_i$ and $W$: the Amari distance vanishes if and only if the two matrices are the same up to scale and permutation, which are the natural indeterminacies of ICA.
We repeat the experiment $50$ times, resulting in $500$ values for each algorithm (\Cref{fig:ica}). We also add the results of a random output, where the estimated matrices are obtained with i.i.d.\ $\mathcal{N}(0, 1)$ entries.
\begin{figure}
    \centering
    \includegraphics[width=.32\columnwidth]{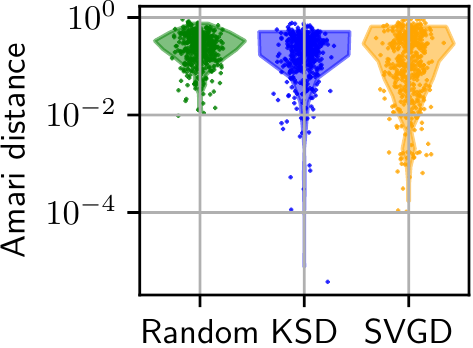}
    \includegraphics[width=.32\columnwidth]{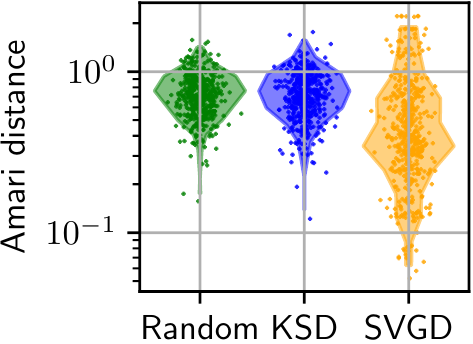}
    \includegraphics[width=.32\columnwidth]{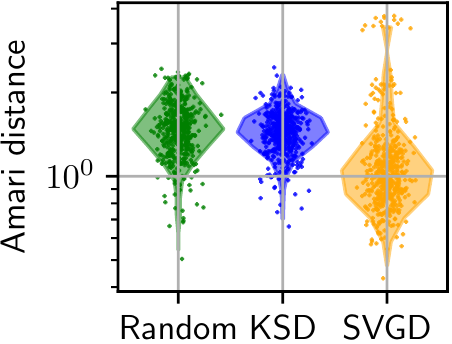}
    \caption{Bayesian ICA results. Left: $p=2$. Middle: $p=4$. Right: $p=8$. Each dot corresponds to the Amari distance between an estimated matrix and the true unmixing matrix.}
    \label{fig:ica}
\end{figure}
We see that for this experiment, KSD performs barely better than random, while SVGD finds matrices with lower Amari distance.
One explanation is that the ICA likelihood is highly non-convex~\citep{cardoso1998blind}. This is easily seen with the invariances of the problem: permuting the rows of $W$ does not change $p(x|W)$. As a consequence, the posterior has many saddle points, in which particles might get trapped. Unfortunately, the annealing strategy proposed above did not improve the achieved performance for this problem.

%

\subsection{Real-world data}

We compare KSD Descent and SVGD in the Bayesian logistic regression setting described in \citet{gershman2012nonparametric, liu2016stein}. Given datapoints $d_1, \dots, d_q \in \R^p$, and labels $y_1, \dots, y_q\in \{\pm 1\}$, the labels $y_i$ are modelled as $p(y_i = 1| d_i, w) = (1 + \exp(-w^{\top}d_i))^{-1}$ for some $w\in \R^p$. The parameters $w$ follow the law $p(w |\alpha) = \mathcal{N}(0, \alpha^{-1}I_p)$, and $\alpha > 0$ is drawn from an exponential law $p(\alpha) = \mathrm{Exp}(0.01)$. The parameter vector is then $x = [w, \log(\alpha)] \in \R^{p+1}$, and we use \Cref{alg:ksd_descent_LBFGS} to obtain samples from $p(x| \left(d_i, y_i)_{i=1}^q\right)$ for $13$ datasets, with $N=10$ particles for each.

The learning rate for SVGD and the bandwidth of the kernel for both methods are chosen through grid-search, and for each problem we select the hyper-parameters yielding the best test accuracy. For all problems, the running times of SVGD with the best step-size and of KSD Descent were similar, while KSD Descent has the advantage of having one less hyper-parameter. We present on \Cref{fig:logreg} the accuracy of each method on each dataset, where KSD Descent was applied without annealing since it did not change the numerical results. Our results show we match the SVGD performance without having to fine-tune the step-size, owing to \Cref{alg:ksd_descent_LBFGS}.
We posit that KSD succeeds on this task because the posterior $p(x| \left(d_i, y_i)_{i=1}^q\right)$ is log-concave, and does not have saddle points.

\begin{figure}[ht]
    \begin{minipage}[c]{0.45\linewidth}
        \includegraphics[width=\textwidth]{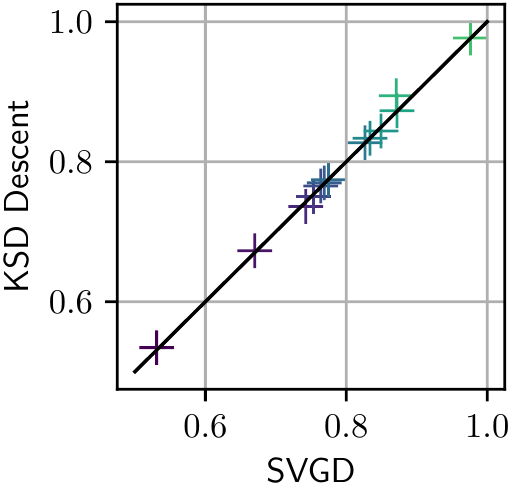}
    \end{minipage}\hfill
    \begin{minipage}[c]{0.45\linewidth}
        \vspace{-1em}
    \caption{Accuracy of the KSD Descent and SVGD on bayesian logistic regression for 13 datasets. Both methods yield similar results. KSD is better by $2\%$ on one dataset.}\label{fig:logreg}
    \end{minipage}
    \vspace{-1em}
\end{figure}

\tb{Discussion.} KSD Descent benefits from a tractable loss and can be straightforwardly implemented with L-BFGS, achieving performance on par with SVGD on convex problems. However its dissipation has non-trivial stationary points, which prevents its use for non-convex problems with saddle-points such as ICA. Convergence of kernel-based sampling schemes is known to be difficult, and we provided some intuitions on the reasons for it. This leaves the door open to a more in-depth analysis of kernel-based gradient flows, especially for unbounded kernels.
\tb{Acknowledgments.} A.K.\ thanks the GENES, and S.M.\ the A.N.R ABSint (Projet-ANR-18-CE40-0034) for the financial support. P.A. thanks A.N.R. ANR19-P3IA-0001.

\bibliography{biblio}
\bibliographystyle{icml2021}

\appendix
\onecolumn
\section{Background}\label{sec:background_app}

\subsection{The continuity equation}\label{sec:continuity_eq}

Let $T>0$. Consider a weakly continuous family of probability measures on $\X$, $\mu : (0, T) \to  \cP_2(\X), t \mapsto \mu_t$. It satisfies a continuity
equation \citep[Section 8.1]{ambrosio2008gradient} if there exists $(v_t)_{t \in (0,T)}$ such that $v_t \in L^2
(\mu_t)$ and :
\begin{equation}
\frac{\partial \mu_t}{\partial t}+\divT(\mu_t v_t ) =0 \qquad \text{ in }\X \times (0,T)
\end{equation}
holds in the sense of distributions, i.e. for any $\phi \in C^{\infty}_c(\X)$, the identity 
\begin{equation}\label{eq:cont_eq_distributions}
\frac{\d}{\d t}  \int_{\X} \phi(x) d\mu_t(x) = \int_{\X} \ps{\nabla \phi(x), v_t(x)} d\mu_t(x)
\end{equation}
holds for any $t \in (0,T)$. By an integration by parts, the r.h.s. of the previous identity can be written:
\begin{equation}
    \int_{\X} \ps{\nabla \phi(x), v_t(x)} d\mu_t(x) = - \int_{\X} \phi(x) \divT(\mu_t(x)v_t(x))\d x  .
\end{equation}
Hence, the identity \eqref{eq:cont_eq_distributions} can be rewritten as
\begin{equation}
    \int_{\X} \phi(x) \frac{\partial \mu_t(x)}{\partial t}\d x + \int_{\X} \phi(x) \divT(\mu_t(x)v_t(x))\d x   = 0. 
\end{equation}

This equation expresses the law of conservation of mass for any volume of a moving fluid. Assume $d=3$ and denote by $(v_t^x, v_t^y, v_t^z)$ the projections of the vector field on the axes $(\overrightarrow{x},\overrightarrow{y},\overrightarrow{z})$. Then, the continuity equation has the form 
\begin{equation}
\frac{\partial \mu_t}{\partial t}+\divT(\mu_t v_t ) = \frac{\partial \mu_t}{\partial t} + \frac{\partial (\mu_t v_t^x) }{\partial x}+\frac{\partial (\mu_t v_t^y) }{\partial y}+\frac{\partial (\mu_t v_t^z) }{\partial z}=0.
\end{equation}

\subsection{Differentiability and convexity on the Wasserstein space }\label{sec:W2_diff}

Let $(\mu,\nu) \in \cP_2(\X)$. The Wasserstein $2$ distance is defined as :
\begin{equation}\label{eq:wass2}
W_2^2 (\mu, \nu) = \inf_{q \in \mathcal{Q}(\mu,\nu)} \int_{\X\times \X} \|x-y\|^2 dq(x,y)
\end{equation}
where $\mathcal{Q}(\mu,\nu)$ is the set of couplings between $\mu$ and $\nu$, \textit{i.e.} the set of nonnegative measures $q$ over $\X \times \X$ such that $P_{1\#} q = \mu$ (resp. $P_{2\#} q = \nu$) where $P_1 : (x,y) \mapsto x$ (resp. $P_2 : (x,y) \mapsto y$).

The Wasserstein space ($\cP_2(\X), W_2)$ is not a  Riemannian manifold but it can be equipped with a Riemannian structure and interpretation \cite{otto2001geometry}. In this geometric interpretation, the tangent space to $\cP_2(\X)$ at $\mu$ is included in $L_2(\mu)$, and is equipped with the scalar product, defined for $f,g \in L_2(\mu)$ by:
\begin{equation}\label{eq:l2_sp}
\ps{f,g}_{L_2(\mu)}=\int_{\X} f(x)g(x)\d\mu(x).
\end{equation}
Let $\cF:\cP_2(\X)\rightarrow \R$ be a functional on the Wasserstein space. We clarify in this section the notions of  differentiability of $\cF$ that we consider in this setting.  The notion of Fréchet subdifferentiability and its properties have been extended to the Wasserstein framework in \citep[Chapter 10]{ambrosio2008gradient}.  We first recall that if it exists, the \textit{first variation of $\cF$ evaluated at $\mu \in \cP_2(\X)$} is the unique function $\frac{\partial{\cF}(\mu)}{\partial{\mu}}
:\X \rightarrow \R$ s.t.\
\begin{equation}\label{eq:first_var}
\lim_{\epsilon \rightarrow 0}\frac{1}{\epsilon}(\cF(\mu+\epsilon  \xi) -\cF(\mu))=\int_{\X}
\frac{\partial{\cF}(\mu)}{\partial{\mu}}(x)
d \xi(x)
\end{equation} 
for all $\xi=\nu-\mu$
, where $\nu \in \cP_2(\X).$ Under mild regularity assumptions, the $W_2$ gradient of $\cF$ corresponds to the gradient of the first variation of $\cF$, as stated below.

\begin{definition}\label{def:w2grad} \citep[Lemma 10.4.1]{ambrosio2008gradient}.
	Let $\mu\in \cP_2(\X)$, absolutely continuous with respect to the Lebesgue measure, with density in $ C^1(\X)$ and such that $\cF(\mu)<\infty$
	. The \textit{subdifferential} of $\cF$ at $\mu$ is the map  $\nabla_{W_2}\cF(\mu)$
	defined by:
	\begin{equation}\label{eq:w2_grad}
	\nabla_{W_2}\cF(\mu)(x)=\nabla \frac{\partial \cF(\mu)}{\partial \mu}(x) \text{ for $\mu$-a.e.  $x \in \X$},
	\end{equation}
	and for every vector field $\xi \in C_c^{\infty}(\X; \X)$,
	\begin{equation}
	    \int_{\X} \ps{\nabla_{W_2}\cF(\mu)(x), \xi(x)} \d\mu(x) = -\int_{\X} \frac{\partial \cF(\mu)}{\partial \mu}(x) \divT(\mu(x) \xi(x))\d x  .
	\end{equation}
	Moreover, $\nabla_{W_2}\cF(\mu)$ belongs to the tangent space of $\cP_2(\X)$ at $\mu$, which is included in $L^2(\mu)$.
\end{definition}

\subsection{Cauchy-Lipschitz assumptions for the existence and uniqueness of the Wasserstein gradient flow}\label{sec:Cauchy-Lip_flow}

Let $T>0$, and denote by $\cL^{1}$ the standard one-dimensional Lebesgue measure on $[0, T]$ and $L^1$ the space of measurable and integrable functions w.r.t.\ Lebesgue measure. The Cauchy-Lipschitz assumptions below for existence and uniqueness of the flow on $[0, T]$ are adapted to our flows from their general differential inclusion version by \citet{Bonnet2021DiffInclWass}. They hold for an initial distribution $\mu_0\in\cP_c(\R^d)$, the space of probability measures with compact support. If $v$ only depends on time and not on $\mu_t$, they then write as
\newcounter{contlist_CL}
\begin{assumplist2}
	\setlength\itemsep{0.2em}
	\item \label{ass:caratheodory_field} (Carathéodory vector fields) $v:[0, T] \times$ $\R^{d} \rightarrow \R^{d}$ is such that $t \mapsto v(t, x)$ is $\cL^{1}$-measurable for all $x \in \R^{d}$ and $x \mapsto v(t, x)$ is continuous for $\cL^{1}$-almost every $t \in[0, T]$.
	\item \label{ass:sublinear_growth} (Sublinear growth) There exists a map $m(\cdot) \in L^{1}\left([0, T]; \R_{+}\right)$ such that $|v(t, x)| \leq m(t)(1+|x|)$ for $\cL^{1}$-almost every $t \in[0, T]$ and all $x \in \R^{d}$.
	\item \label{ass:lip_field} (Lipschitz vector field) For any compact set $\cK \subset \R^{d},$ there exists a map $l_{\cK}(\cdot) \in L^{1}\left([0, T], \R_{+}\right)$ such that $\Lip(v(t, \cdot) ; \cK) \leq l_{\cK}(t)$ for $\cL^{1}$ -almost every $t \in[0, T]$.
\end{assumplist2}
\setcounter{contlist_CL}{\value{enumi}}
For $R>0$, we denote by $\cK:=B(0, R)$ the closed ball of radius $R$ in $\X$; and for any function $f$, we denote by $\|f(\cdot)\|_{\infty,\cK}:=\sup_{x\in \cK}|f(x)|$ its supremum over $\cK$ and by $\Lip(f(\cdot) ; \cK)$ the Lipschitz constant of the restriction of $f$ on $\cK$. 
If $v$ only depends on $\mu_t$, then the following assumptions should hold for every $R>0$ :
\begin{assumplist3}
	\setlength\itemsep{0.2em}
	\item \label{ass:caratheodory_field_contr} For any $\mu \in \cP_c\left(\R^{d}\right)$,  $v_{|\cK}(\mu)(\cdot)\in C^{0}\left(\cK; \R^{d}\right)$.
	\item \label{ass:sublinear_growth_contr} There exists $m>0$ such that for any $\mu \in \cP_c\left(\R^{d}\right),$ for all $y \in \R^{d},$ we have $\|v(\mu)(y)\| \leq m\left(1+\|y\|+\int \|x\|\,\d\mu(x)\right)$
	\item \label{ass:lip_field_contr} There exists $l_\cK>0$ such that for any $\mu \in \cP_c(\cK)$, we have $\Lip(v(\mu)(\cdot) ; \cK) \leq l_{\cK}$,
	\item \label{ass:lip_field_wrtMeasure_contr} There exists $L_\cK>0$ such that for any $\mu, \nu \in \cP_c(\cK),$ we have $\|v_{|\cK}(\mu)(\cdot)-v_{|\cK}(\nu)(\cdot)\|_{\infty,\cK}\le L_\cK W_{2}(\mu, v)$.
\end{assumplist3}
\setcounter{contlist_CL}{\value{enumi}}
\tb{Relation with \Cref{ass:lipschitz}.} Consider $\mu \in \cP_c\left(\R^{d}\right)$ and take $R>0$ such that $\supp(\mu)\subset \cK:=B(0, R)$.
Our \Cref{ass:lipschitz} implies that $y\mapsto v_\mu(y)=\int \nabla_2 k_{\pi}(x,y) \d\mu(x)$ is Lipschitz with constant $l_{\cK}=\sup_{y\in\cK}L(y)$, so \Cref{ass:caratheodory_field_contr,ass:lip_field_contr} hold. \Cref{ass:sublinear_growth_contr} corresponds to \Cref{ass:growth}. Finally, for $\nu \in \cP_c(\cK)$, since $\nabla_2 k_{\pi}(\cdot,y)$ is $l_{\cK}$-Lipschitz for $y\in\cK$, we have that $x\mapsto\|\nabla_2 k_{\pi}(x,y)\|$ is also $l_{\cK}$-Lipschitz, hence
\begin{align*}
\left\|v_\mu(y)-v_\nu(y)\right\|\le \sup \left\{\int_{M} f(x) \mathrm{d}(\mu-\nu)(x) \mid \text {Lipschitz}\, f: \cK \rightarrow \mathbb{R}, \Lip(f) \leq l_\cK\right\}= l_\cK W_{1}(\mu, \nu) \le l_\cK W_{2}(\mu, \nu),
\end{align*}
where the last inequality between 1-Wasserstein distance $W_1$ and $W_2$ is a consequence of Jensen's inequality, so \Cref{ass:lip_field_wrtMeasure_contr} is satisfied.

For other kernel-based updates as presented in \Cref{sec:updates_related_work}, the kernel is at most multiplied once by the score $s$, so for uniformly Lipschitz $s$ and kernels with bounded derivatives, the Lipschitz constant is uniform for $v(\mu)(\cdot)$. This was for instance assumed in \citet{arbel2019maximum}. However for updates involving the Stein kernel such as in KSD Descent, the analysis is more intricate. As a matter of fact, when $\pi$ is a standard Gaussian distribution, we have $s(x)=x$. Hence the Stein kernel verifies, for a smooth translation-invariant kernel with $k(x,x)=1$, that $\kpi(x,x)=C+\|x\|^2$ for $x\in \X$, with $C$ a constant determined by the last term in $\kpi$. So, for a Gaussian $\pi$, the Lipschitz constant of $v(\delta_y)(\cdot)$ is larger than $2\|y\|$, explaining the reason why we only required for a $y$-dependent Lipschitz constant in \Cref{ass:lipschitz} and did not assume it to be uniform. This is related to the fact that $\kpi(x,x)$ is in general unbounded for the Stein kernel, which precludes most of the classical assumptions made in the kernel literature.

\tb{Comment on \Cref{ass:sublinear_growth_contr}.}  The main role of \Cref{ass:sublinear_growth_contr} is to impede the well-known phenomenon of finite-time explosion of a trajectory as for the system $z'(t)=z(t)^2$ in $\R$. It could in principle be replaced by any assumption preventing particles from escaping to infinity in finite time. One such lighter assumption would write as follows:
\begin{center}
	`` For any $r>0$, $T>0$ and $\mu_0 \in \cP_c\left(\R^{d}\right)$ with $\supp(\mu_0)\subset rB(0,1)$ there exists $R(r,T)<\infty$ such that $\supp(\mu_t)\subset R(r,T)B(0,1)$ for $(\mu_t)_{t\in[0,T]}$ solving \eqref{eq:ksd_flow}.``
\end{center}

\subsection{Descent updates of the algorithms presented in \Cref{sec:related_work}}\label{sec:updates_related_work}
From a computational viewpoint, both the SVGD, LAWGD and MMD flow algorithms, presented in \Cref{sec:related_work}, only propose gradient descent schemes with the following respective updates $x_n^i \leftarrow x_n^i - \gamma \mathcal{D}^i$ with
\begin{align*}
&\cD_{SVGD}^i = \frac1N\sum_{j=1}^N\left[k(x_n^j, x_n^i)s(x_n^j) + \nabla_{1} k(x_n^j, x_n^i)\right],\\
&\cD_{LAWGD}^i = \frac{1}{N}\sum_{j=1}^N \nabla_{2}k_{\cL_\pi}( x_n^{j},x_n^{i}),\\
&\cD_{MMD-GD}^i =\frac{1}{N}\sum_{j=1}^{N}\left[\nabla_2 k(x_{n}^{j}, x_{n}^{i})-\nabla_2 k(y^{j}, x_{n}^{i})\right].
\end{align*}
where the MMD-GD update requires extra samples $(y^{j})_{j=1}^N\sim \pi$, since it is sample-based rather than score-based. Notice that all these updates have the same iteration complexity of order $\mathcal{O}(N^2)$. Intriguingly LAWGD has the same update rule as ours \eqref{eq:ksd_update} but for their kernel $k_{\cL_\pi}$ which, as discussed in \Cref{sec:related_work}, does not incorporate an off-the-shelf kernel $k$, unlike the Stein kernel $\kpi$ \eqref{eq:stein_kernel}. 


\subsection{Background on diffusion operator $\cL_{\pi}$}\label{sec:background_diffusion}

In this section, for the convenience of the reader who is not familiar with the spectral theory of diffusion operators, we formulate \Cref{lem:diffusion_background} which provides a formal construction of the diffusion operator $\cL_{\pi} = - \ps{\nabla \log\pi, \nabla} - \Delta$ on the space $L_2(\pi)$ and gathers a number of technical facts about it. Those facts form a background for the proofs of results related to lack of exponential convergence near equilibrium of KSD flow, which are presented in \Cref{appendix:proofs}.  We provide the proof of \cref{lem:diffusion_background} in \Cref{sec:proof_diffusion_background}.

\begin{lemma} \label{lem:diffusion_background}
Let $\pi \propto e^{-V}$ be a probability measure on $\R^d$ and assume that $V : \R^d \rightarrow \R$ is in $C^1(\X)$. Let $\hat{\cL}_{\pi} = \ps{\nabla V, \nabla} - \Delta$  on $C_c^{\infty}(\R^d)$. This operator can be extended to a  positive self-adjoint operator on $L^2(\pi)$ with dense domain $\mathcal{D}(\cL_{\pi})\subset L^2(\pi)$ which we denote by $\cL_{\pi}$. Moreover,  $C_c^{\infty}(\X)$ is dense in $\mathcal{D}(\cL_{\pi})$, for the norm:
\begin{equation} \label{eq:bilinear_form_norm}
    \Vert \phi \Vert_{\cL_{\pi}} = \left( \langle \phi , \cL_{\pi} \phi \rangle + \Vert \phi \Vert_{L_2(\pi)}^2 \right)^{1/2}.
\end{equation}
From that it follows, that $\mathcal{D}(\cL_{\pi})$ is the subset of the weighted Sobolev space $W_0^{1,2}(\pi)$\footnote{
Note that the meaning of $0$ in the notation $W_0^{1,2}(\pi)$ differs from the meaning of $0$ in $L^2_{0}(\pi)=\{\phi \in L^2(\pi), \int \phi d\pi = 0\}$. } (that is, the closure of $C_c^{\infty}(\R^d)$ in $W^{1,2}(\pi)$ ) and for all $f \in \mathcal{D}(\cL_{\pi})$ we have 
\begin{equation} \label{eq:dirichlet_form_pi}
    \Vert \nabla f \Vert_{L^2(\pi)}^2 = \langle f, \cL_{\pi} f \rangle_{L^2(\pi)},
\end{equation}
where $\nabla f$ is the weak derivative of $f$. This implies that the kernel of $\cL_{\pi}$ consists of $\pi$-almost everywhere constant functions.

Furthermore, for any $f \in \mathcal{D}(\cL_{\pi})$ we can find a sequence $\phi_n \in C_c^{\infty}(\R^d)$, such that $\lim_{n \rightarrow \infty} \Vert \phi_n - f \Vert_{L_2(\pi)} = 0$ and $\lim_{n \rightarrow\infty} \Vert \cL_{\pi} \phi_n - \cL_{\pi} f \Vert_{L_2(\pi)} = 0$.

\end{lemma}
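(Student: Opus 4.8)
The plan is to realize $\cL_{\pi}$ as the Friedrichs extension of $\hat{\cL}_{\pi}$ and then read off every assertion from that construction. First I would record the integration-by-parts identity on $C_c^{\infty}(\X)$: since $\pi\propto e^{-V}$ gives $e^{-V}\bigl(\ps{\nabla V,\nabla\phi}-\Delta\phi\bigr)=-\divT(e^{-V}\nabla\phi)$, for all $\phi,\psi\in C_c^{\infty}(\X)$ one has
\begin{equation*}
\ps{\hat{\cL}_{\pi}\phi,\psi}_{L^2(\pi)} = -\int_{\X}\psi\,\divT(e^{-V}\nabla\phi)\,\d x = \int_{\X}\ps{\nabla\phi,\nabla\psi}\,e^{-V}\,\d x = \ps{\nabla\phi,\nabla\psi}_{L^2(\pi)},
\end{equation*}
which is symmetric in $(\phi,\psi)$ and, at $\psi=\phi$, shows that $\hat{\cL}_{\pi}$ is symmetric and non-negative on $L^2(\pi)$ and establishes \eqref{eq:dirichlet_form_pi} on $C_c^{\infty}(\X)$.

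Then I would run the standard Friedrichs machinery. Complete $C_c^{\infty}(\X)$ under the form norm $\|\phi\|_{\cL_{\pi}}^2=\ps{\nabla\phi,\nabla\phi}_{L^2(\pi)}+\|\phi\|_{L^2(\pi)}^2$; by the identity above this completion is exactly $W_0^{1,2}(\pi)$, which injects continuously into $L^2(\pi)$. Define $\mathcal{D}(\cL_{\pi})$ as the set of $u\in W_0^{1,2}(\pi)$ for which $v\mapsto\ps{\nabla u,\nabla v}_{L^2(\pi)}$ extends to an $L^2(\pi)$-continuous functional on $W_0^{1,2}(\pi)$, and let $\cL_{\pi}u$ be its Riesz representative, so that $\ps{\cL_{\pi}u,v}_{L^2(\pi)}=\ps{\nabla u,\nabla v}_{L^2(\pi)}$ for all $v\in W_0^{1,2}(\pi)$. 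Standard theory then gives that $\cL_{\pi}$ is self-adjoint and non-negative and, taking $v=u$, that $\ps{u,\cL_{\pi}u}_{L^2(\pi)}=\|\nabla u\|_{L^2(\pi)}^2$ for $u\in\mathcal{D}(\cL_{\pi})$, which is \eqref{eq:dirichlet_form_pi} and also shows $\|\cdot\|_{\cL_{\pi}}$ coincides on $\mathcal{D}(\cL_{\pi})$ with the $W^{1,2}(\pi)$-norm. Since $\mathcal{D}(\cL_{\pi})\subset W_0^{1,2}(\pi)=\overline{C_c^{\infty}(\X)}$ (closure in $W^{1,2}(\pi)$), $C_c^{\infty}(\X)$ is $\|\cdot\|_{\cL_{\pi}}$-dense in $\mathcal{D}(\cL_{\pi})$. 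The kernel statement is then immediate: $\cL_{\pi}f=0$ forces $\|\nabla f\|_{L^2(\pi)}^2=\ps{f,\cL_{\pi}f}_{L^2(\pi)}=0$, so the weak gradient $\nabla f$ vanishes Lebesgue-a.e.\ (as $e^{-V}>0$ everywhere), whence $f$ is a.e.\ constant by connectedness of $\X$.

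It remains to upgrade the form-density of $C_c^{\infty}(\X)$ to graph-norm density, i.e.\ to prove $C_c^{\infty}(\X)$ is a core for $\cL_{\pi}$; this is the last displayed approximation claim and is where the real work lies. I would reduce it to showing that $(\hat{\cL}_{\pi}+I)\bigl(C_c^{\infty}(\X)\bigr)$ is $L^2(\pi)$-dense: if $h\in L^2(\pi)$ is orthogonal to it, then $h$ is a distributional solution of the locally uniformly elliptic divergence-form equation $-\divT(e^{-V}\nabla h)+e^{-V}h=0$, so by local elliptic regularity for divergence-form operators with $C^1$ coefficients $h\in W^{1,2}_{\mathrm{loc}}(\X)$; testing the weak formulation against $\chi_R^2 h$ with $\chi_R=\chi(\cdot/R)$, $\chi\in C_c^{\infty}$, $\chi\equiv 1$ on $B(0,1)$, $0\le\chi\le 1$, and using $|\nabla\chi_R|\le C/R$ together with Cauchy--Schwarz and the inequality $2\sqrt{ab}\le a+b$ yields $\int_{\X}\chi_R^2 h^2\,\d\pi \le \int_{\X}|\nabla\chi_R|^2 h^2\,\d\pi \le \tfrac{C^2}{R^2}\|h\|_{L^2(\pi)}^2$; letting $R\to\infty$, with monotone convergence on the left, forces $h=0$. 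Once $C_c^{\infty}(\X)$ is a core, $\cL_{\pi}$ is the closure of $\hat{\cL}_{\pi}$ and the final approximation statement follows. The main obstacle is precisely this core property: one must first promote the merely $L^2(\pi)$ ``very weak'' solution $h$ to enough local Sobolev regularity to integrate by parts, and the cutoff argument only closes because $\pi$ is a \emph{finite} measure, so that $\int_{\X}|\nabla\chi_R|^2 h^2\,\d\pi\to 0$.
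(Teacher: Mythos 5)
Your proof follows essentially the same path as the paper's up to and including the kernel argument: integration by parts on $C_c^{\infty}(\X)$, Friedrichs extension, identification of the form domain with $W_0^{1,2}(\pi)$, and the observation that $\cL_{\pi} f = 0 \Rightarrow \nabla f = 0 \Rightarrow f$ constant. These steps match the paper's.

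Where you diverge sharply — and where the real content of the lemma lies — is the final graph-norm density (operator-core) claim. The paper's argument stays entirely within abstract Hilbert-space theory: it takes a form-norm approximating sequence $\phi_n \to f$, uses Cauchy--Schwarz on the form to show $\langle \psi, \cL_{\pi}(f-\phi_n)\rangle \to 0$ for test $\psi$, concludes weak convergence $\cL_{\pi}\phi_n \rightharpoonup \cL_{\pi}f$, and then invokes Mazur's lemma to extract convex combinations converging strongly. Your approach is entirely different in flavor: you reformulate the core property as the $L^2(\pi)$-density of $(\hat{\cL}_{\pi} + I)(C_c^{\infty}(\X))$, and kill an orthogonal $h$ by PDE means — observing it is a very weak solution of $-\divT(e^{-V}\nabla h) + e^{-V}h = 0$, invoking interior elliptic regularity to get $h\in W^{1,2}_{\mathrm{loc}}$, and then running a Caccioppoli-type cutoff argument that closes because $\pi$ is a finite measure. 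Both routes are viable, and your observation that finiteness of $\pi$ is what makes $\int |\nabla\chi_R|^2 h^2\,\d\pi \to 0$ is exactly the right point to highlight. What the paper's route buys is that it never leaves functional analysis; what yours buys is a self-contained and quantitative mechanism for why the core property holds, and it generalizes cleanly to other self-adjoint Dirichlet-form operators.

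The one step in your argument that deserves more care is the assertion that ``local elliptic regularity for divergence-form operators with $C^1$ coefficients'' upgrades the merely $L^2_{\mathrm{loc}}$ distributional solution $h$ to $W^{1,2}_{\mathrm{loc}}$. This is the hard part of your proof and is stated rather casually. Classical interior regularity (De Giorgi--Nash--Moser, or Gilbarg--Trudinger Thm.~8.8) starts from $H^1_{\mathrm{loc}}$ weak solutions, not $L^2_{\mathrm{loc}}$ very weak ones. Upgrading a very weak solution requires a duality/transposition argument or a Friedrichs mollifier-commutator argument, and the amount of regularity required on the coefficient $a = e^{-V}$ is delicate; with only $V \in C^1$, so that $\nabla V$ is merely continuous (not Lipschitz), the commutator estimates do not obviously close, and multiplication by $\nabla V$ is not a bounded operator on $H^{-1}_{\mathrm{loc}}$. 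You should either supply a reference covering exactly this regularity class, or sketch the commutator argument and make explicit what assumption on $V$ it uses. Everything downstream of that step — testing against $\chi_R^2 h$, Young's inequality, monotone convergence — is correct.
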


\subsection{A Descent lemma for KSD Descent}\label{sec:descent_lemma}

A descent lemma for \eqref{eq:ksd_descent} is a proposition stating that $\cF$ decreases at each iteration of the time-discretized  flow. It should hold for both continuous or discrete initializations. \citet{arbel2019maximum} proved a similar result for $\cF=\frac{1}{2}\MMD^2$ which we recall below for completeness.
\begin{proposition}\citep[Proposition 4]{arbel2019maximum}\label{prop:descent_lemma_mmd} Let $\mu_n$ be defined by \eqref{eq:ksd_descent} for $\cF=\frac{1}{2}\MMD^2$ and assume that $k\in C^{1,1}(\X\times\X)$ with $l$-Lipschitz gradient: $\left\|\nabla k\left(x, x'\right)-\nabla k\left(y, y'\right)\right\| \leq$ $l\left(\|x-y\|+\left\|x'-y'\right\|\right)$ for all $x, x', y, y' \in \X$. Then, for $\gamma \le \frac{2}{l}$, the sequence $(\cF(\mu_n))_{n\ge 0}$ is decreasing and for any $n\ge0$:
	\begin{equation}
	\cF(\mu_{n+1}) - \cF(\mu_n) \le -\gamma\left(1 - \frac{3\gamma l}{2}\right)  \|\nabla_{W_2}\cF(\mu_n)\|^2_{L^2(\mu_n)}.
	\end{equation}
\end{proposition}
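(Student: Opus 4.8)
The plan is to exploit that the squared MMD is a quadratic functional in the RKHS $\cH_k$. Write $P_\mu := \int k(x,\cdot)\,\d\mu(x)\in\cH_k$ for the kernel mean embedding, so $2\cF(\mu)=\|P_\mu-P_\pi\|_{\cH_k}^2$, and set $f_n := P_{\mu_n}-P_\pi$, the (unnormalized) witness function; a short first-variation computation gives $f_n=\frac{\partial\cF(\mu_n)}{\partial\mu}$, hence $\nabla f_n=\nabla_{W_2}\cF(\mu_n)$, cf.\ \eqref{eq:mmd_grad}. Denote $T_n:=I-\gamma\nabla_{W_2}\cF(\mu_n)$ and $h_n:=T_n-I=-\gamma\nabla_{W_2}\cF(\mu_n)$, so that $\mu_{n+1}=(T_n)_\#\mu_n$ and $P_{\mu_{n+1}}=\int k(T_n(x),\cdot)\,\d\mu_n(x)$. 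Expanding in $\cH_k$,
\[
2\big(\cF(\mu_{n+1})-\cF(\mu_n)\big)=2\langle f_n,\,P_{\mu_{n+1}}-P_{\mu_n}\rangle_{\cH_k}+\|P_{\mu_{n+1}}-P_{\mu_n}\|_{\cH_k}^2,
\]
and the strategy is to show the first (``first-order'') term is at most $(-2\gamma+2l\gamma^2)\|\nabla_{W_2}\cF(\mu_n)\|_{L^2(\mu_n)}^2$ while the second (``second-order'') term is at most $l\gamma^2\|\nabla_{W_2}\cF(\mu_n)\|_{L^2(\mu_n)}^2$.

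For the first-order term, the reproducing property gives $\langle f_n, k(T_n x,\cdot)-k(x,\cdot)\rangle_{\cH_k}=f_n(T_n x)-f_n(x)$, so it equals $2\int\big(f_n(T_n x)-f_n(x)\big)\,\d\mu_n(x)$. The only input needed here is a Lipschitz bound on $\nabla f_n$: since $\nabla f_n(x)=\int\nabla_2 k(z,x)\,\d(\mu_n-\pi)(z)$, and $x\mapsto\nabla_2 k(z,x)$ is $l$-Lipschitz uniformly in $z$ (from the hypothesis, $\|\nabla_2 k(z,x)-\nabla_2 k(z,x')\|\le\|\nabla k(z,x)-\nabla k(z,x')\|\le l\|x-x'\|$), and $\mu_n,\pi$ are probability measures, $\nabla f_n$ is $2l$-Lipschitz. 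The Euclidean descent inequality then gives $|f_n(T_n x)-f_n(x)-\langle\nabla f_n(x),h_n(x)\rangle|\le l\|h_n(x)\|^2$; integrating and substituting $\nabla f_n=\nabla_{W_2}\cF(\mu_n)$ and $h_n=-\gamma\nabla_{W_2}\cF(\mu_n)$ yields the claimed bound on the first-order term.

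For the second-order term, by convexity of $\|\cdot\|_{\cH_k}^2$ (Jensen's inequality in $\cH_k$), $\|P_{\mu_{n+1}}-P_{\mu_n}\|_{\cH_k}^2\le\int\|k(T_nx,\cdot)-k(x,\cdot)\|_{\cH_k}^2\,\d\mu_n(x)$, and $\|k(a,\cdot)-k(b,\cdot)\|_{\cH_k}^2=k(a,a)-2k(a,b)+k(b,b)$. The key elementary estimate is that this mixed second difference is at most $l\|a-b\|^2$: writing $k(a,a)-2k(a,b)+k(b,b)=\int_0^1\langle\nabla_2 k(a,b+t(a-b))-\nabla_2 k(b,b+t(a-b)),\,a-b\rangle\,\d t$ and applying the Lipschitz-gradient hypothesis in the first slot gives it. With $a=T_nx$, $b=x$ this yields $\|P_{\mu_{n+1}}-P_{\mu_n}\|_{\cH_k}^2\le l\int\|h_n(x)\|^2\,\d\mu_n(x)=l\gamma^2\|\nabla_{W_2}\cF(\mu_n)\|_{L^2(\mu_n)}^2$. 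Adding the two bounds,
\[
2\big(\cF(\mu_{n+1})-\cF(\mu_n)\big)\le(-2\gamma+3l\gamma^2)\,\|\nabla_{W_2}\cF(\mu_n)\|_{L^2(\mu_n)}^2,
\]
i.e.\ $\cF(\mu_{n+1})-\cF(\mu_n)\le-\gamma\big(1-\tfrac{3\gamma l}{2}\big)\|\nabla_{W_2}\cF(\mu_n)\|_{L^2(\mu_n)}^2$, which is nonpositive as soon as $\gamma$ is small enough that $1-\tfrac{3\gamma l}{2}\ge0$, giving the monotone decrease of $(\cF(\mu_n))_{n\ge0}$.

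I expect the real work to lie in the second-order estimates rather than anything conceptual: extracting both $\|k(a,\cdot)-k(b,\cdot)\|_{\cH_k}^2\le l\|a-b\|^2$ and $\Lip(\nabla f_n)\le 2l$ from the single Lipschitz-gradient hypothesis, \emph{without} assuming $k$ bounded --- this is what separates the argument from a finite-dimensional descent lemma and forces one to argue through fundamental-theorem-of-calculus identities rather than a plain Taylor remainder. A minor accompanying point is to check that every object above is well-defined: the hypothesis forces $\nabla_1 k$, hence $\nabla_{W_2}\cF(\mu_n)$, to have at most linear growth, so $\nabla_{W_2}\cF(\mu_n)\in L^2(\mu_n)$ and $\mu_{n+1}\in\cP_2(\X)$ because $\mu_n,\pi\in\cP_2(\X)$, making all displayed integrals finite and the pushforward in \eqref{eq:ksd_descent} legitimate. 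Finally, tracking the constants carefully --- the $2l\gamma^2$ from the first-order remainder plus the $l\gamma^2$ from the displacement term --- is what pins the coefficient at $\tfrac32$.
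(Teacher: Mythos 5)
Your proof is correct, and it takes a genuinely different route from the source. The paper does not prove this proposition at all---it recalls it verbatim from \citet[Proposition 4]{arbel2019maximum}; their proof (like the paper's own analogous \Cref{prop:descent_lemma_appendix} for the KSD case, proved in \Cref{sec:descent_lemma_proof}) proceeds by bounding the variation of $\dot\cF(\rho_t)$ along the displacement interpolation $\rho_t = (I - \gamma t\nabla_{W_2}\cF(\mu_n))_\#\mu_n$, via a dedicated differentiability lemma for $t\mapsto \cF(\rho_t)$ and an FTC remainder estimate. You instead exploit the quadratic RKHS structure of $\cF=\tfrac12\|P_\mu-P_\pi\|^2_{\cH_k}$ directly: the parallelogram-type expansion splits the increment into a linear term, which the reproducing property converts into $2\int\bigl(f_n(T_nx)-f_n(x)\bigr)\,\d\mu_n(x)$ and the finite-dimensional descent inequality handles with $\Lip(\nabla f_n)\le 2l$, plus a squared-displacement term, which Jensen and the mixed-second-difference bound $k(a,a)-2k(a,b)+k(b,b)\le l\|a-b\|^2$ control. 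This is more self-contained (no separate lemma establishing differentiability of the interpolated loss) and makes transparent where the $2l\gamma^2$ and $l\gamma^2$ contributions to the $\tfrac{3l}{2}$ coefficient come from. Two small remarks. First, your three auxiliary facts ($\Lip(\nabla f_n)\le 2l$, the mixed-second-difference bound, and $\nabla_{W_2}\cF(\mu_n)\in L^2(\mu_n)$) all need the signed measure $\mu_n-\pi$ handled as $\|\mu_n-\pi\|_{\mathrm{TV}}\le 2$; you do this correctly but it is worth stating explicitly since it is precisely what turns $l$ into $2l$. Second, you correctly note that the derived inequality holds for every $\gamma>0$ and implies monotone decrease only for $\gamma\le\tfrac{2}{3l}$; the threshold $\gamma\le\tfrac{2}{l}$ in the quoted statement appears to be a transcription slip inherited from the citation and is not something your proof should be expected to reproduce.
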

Although KSD Descent is a special case of MMD Descent, assuming a uniform Lipschitz constant for $\kpi$ is excessive, even for targets as simple as a single Gaussian distribution, as discussed in \Cref{sec:Cauchy-Lip_flow}.  In contrast, we consider the weaker \Cref{ass:lipschitz} on the Stein kernel: there exists a map $L(\cdot)\in C^0(\X;\R_+)$ such that, for any $y \in \X$, the maps $x\mapsto \nabla_1 \kpi(x,y)$ and $x\mapsto \nabla_1 \kpi(y,x)$ are $L(y)$-Lipschitz. 

We shall assume below the convexity of $L(\cdot)$ and the boundedness of its $L^2$-norm along the trajectory, i.e.\ $(\|L\|_{L^2(\mu_n)})_{n\ge0}$ is bounded. An example of more explicit, though tighter, assumptions are that $L$ satisfies a subpolynomial growth at infinity, i.e.\ there exists $R>0$, $c>0$ and $m\in\N$ such that $L(x)\le \tilde{L}(x)=c \|x\|^m$ for $x\in\R^d$ with $\|x\|\ge R$, combined with boundedness of the corresponding moment along the flow, i.e.\ there exists $M_m$ such that $\mathbb{E}_{x\sim \mu_n}\|x\|^{2m}\le M_m$. In this case, instead of the continuous $L$, one can consider the convex and continuous function defined by $\tilde{L}(x)=c \|x\|^m+\sup_{y\in B(0,R)} L(y)$.

\begin{proposition}\label{prop:descent_lemma_appendix} Suppose \Cref{ass:lipschitz} holds and that $\mu_0\in\cP_c(\R^d)$. Assume additionally that $L(\cdot)$ is convex, belongs to $L^2(\mu_n)$, and that $\|L(\cdot)\|_{L^2(\mu_n)}\le M$ for any $n\ge 0$, where $\mu_n$ is defined by \eqref{eq:ksd_descent}. Then, for any $\gamma \le \frac{1}{M}$:
	\begin{equation}
	\cF(\mu_{n+1}) - \cF(\mu_n) \le -\gamma\left(1 - \gamma M\right)  \|\nabla_{W_2}\cF(\mu_n)\|^2_{L^2(\mu_n)}\le 0.
	\end{equation}
\end{proposition}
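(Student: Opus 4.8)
The plan is to mimic the standard "descent lemma" argument from smooth optimization, transported to the Wasserstein setting, following the template of \citet[Proposition 4]{arbel2019maximum} but keeping careful track of the $y$-dependent Lipschitz constant $L(\cdot)$ rather than a uniform one. Write $v_n := \nabla_{W_2}\cF(\mu_n) = \mathbb{E}_{x\sim\mu_n}[\nabla_2\kpi(x,\cdot)]$ and $T_n := I - \gamma v_n$, so that $\mu_{n+1} = (T_n)_\#\mu_n$. Expanding $\cF(\mu_{n+1}) = \tfrac{1}{2}\iint \kpi(T_n(x),T_n(y))\,\d\mu_n(x)\d\mu_n(y)$, the first step is a first-order Taylor expansion with integral remainder of the map $t\mapsto \kpi(x - t\gamma v_n(x),\, y - t\gamma v_n(y))$ on $[0,1]$, for fixed $x,y$. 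The zeroth-order term integrates to $2\cF(\mu_n)$; the first-order term, after using $\nabla_1\kpi(x,y)=\nabla_2\kpi(y,x)$, symmetry in $x\leftrightarrow y$, and the definition of $v_n$, produces exactly $-2\gamma\|v_n\|^2_{L^2(\mu_n)}$; and the remainder term must be controlled by the Lipschitz hypothesis.

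The crux is bounding the remainder. The remainder is $\int_0^1 (1-t)\,\frac{d^2}{dt^2}\kpi(x-t\gamma v_n(x), y-t\gamma v_n(y))\,\d t$, whose second derivative, upon differentiating, is a combination of $\nabla_1\nabla_2\kpi$, $H_1\kpi$, and $H_2\kpi$ contracted against $\gamma v_n(x)$ and $\gamma v_n(y)$. Rather than expand the Hessian explicitly, I would instead bound the increment of the \emph{gradient} directly: for fixed $y$, $x\mapsto \nabla_1\kpi(x,y)$ and $x\mapsto\nabla_2\kpi(x,y)$ are $L(y)$-Lipschitz by \Cref{ass:lipschitz}, so differences of $\nabla_1\kpi$ and $\nabla_2\kpi$ between $(T_n(x),T_n(y))$ and $(x,y)$ are bounded by $\gamma L(\cdot)(\|v_n(x)\|+\|v_n(y)\|)$ up to rearrangements (handling the two arguments successively). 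Feeding this into the first-order expansion of $\kpi$ along the segment yields
\[
\cF(\mu_{n+1}) - \cF(\mu_n) \le -\gamma\|v_n\|^2_{L^2(\mu_n)} + \tfrac{\gamma^2}{2}\iint L(\cdot)\,\big(\|v_n(x)\|+\|v_n(y)\|\big)^2\,\d\mu_n\d\mu_n,
\]
schematically; the precise bookkeeping of which variable $L$ is evaluated at is where convexity of $L$ enters — by Jensen, $\mathbb{E}_{x\sim\mu_n}[L(x)] \le$ something controlled, or more to the point $L$ evaluated at convex combinations arising along the segment is dominated by the endpoint values, so everything reduces to $\mathbb{E}_{\mu_n}[L]$.

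Then I would use Cauchy–Schwarz to pull out $\|L\|_{L^2(\mu_n)} \le M$: terms of the form $\mathbb{E}[L(\cdot)\|v_n(\cdot)\|^2]$ are bounded by $\|L\|_{L^2(\mu_n)}\,\|v_n\|^2_{L^4(\mu_n)}$, but to match the clean statement one wants $\|v_n\|^2_{L^2(\mu_n)}$, so the correct grouping is $\mathbb{E}[L(y)\|v_n(x)\|^2] = \mathbb{E}[L(y)]\,\mathbb{E}[\|v_n(x)\|^2]$ by independence of the two integration variables, and then $\mathbb{E}_{\mu_n}[L] \le \|L\|_{L^2(\mu_n)}\le M$. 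Cross terms $\mathbb{E}[L(y)\|v_n(x)\|\|v_n(y)\|]$ are handled by $\|v_n(x)\|\|v_n(y)\|\le\tfrac12(\|v_n(x)\|^2+\|v_n(y)\|^2)$ followed by the same factorization. Collecting constants gives $\cF(\mu_{n+1})-\cF(\mu_n)\le -\gamma(1-\gamma M)\|v_n\|^2_{L^2(\mu_n)}$, which is $\le 0$ for $\gamma\le 1/M$.

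The main obstacle I anticipate is the careful treatment of the $y$-dependence of $L$: unlike in \citet{arbel2019maximum} where $L$ is a constant and the estimate is immediate, here one must split the Hessian/gradient-increment bound so that $L$ is always evaluated at one variable while the squared velocity norms are integrated against the other (exploiting the product structure $\mu_n\otimes\mu_n$), and one must invoke convexity of $L$ to argue that the value of $L$ at the intermediate point $x - t\gamma v_n(x)$ appearing in the Taylor remainder is itself controlled — this is presumably why convexity of $L$ and compact support of $\mu_0$ (ensuring $\mu_n$ has compact support and finite moments, so the integrals are finite) are assumed. A secondary technical point is justifying that all the integrals converge and that differentiation under the integral sign is valid, which follows from compact support of $\mu_n$ (inherited from $\mu_0\in\cP_c(\R^d)$ along the discrete flow, since $v_n$ is bounded on compacts) together with continuity of the relevant derivatives of $\kpi$.
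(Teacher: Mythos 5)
Your plan follows the same template as the paper: expand $\cF(\mu_{n+1})$ along the path $\rho_t=(I-\gamma t\,\nabla_{W_2}\cF(\mu_n))_\#\mu_n$, write the first-order term plus remainder, bound the remainder by the Lipschitz increment of the first derivative of $t\mapsto\kpi(s_t(x),s_t(y))$ rather than expanding the Hessian, use convexity of $L$ to dominate $L$ at the intermediate point $(1-t)x+t(x+\phi(x))$ by $(1-t)L(x)+tL(x+\phi(x))$, and exploit the $\mu_n\otimes\mu_n$ product structure so that $L$ ends up integrated against a different variable than the squared velocity. All of that is the paper's argument.

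There is, however, a concrete gap in how you close the cross term $\E_{x,y}\!\left[L(y)\,\|v_n(x)\|\,\|v_n(y)\|\right]$. You propose AM--GM, $\|v_n(x)\|\,\|v_n(y)\|\le\tfrac12(\|v_n(x)\|^2+\|v_n(y)\|^2)$, ``followed by the same factorization.'' The half that lands as $\E[L(y)\,\|v_n(x)\|^2]$ factorizes fine, but the other half is $\E[L(y)\,\|v_n(y)\|^2]$, in which $L$ and $\|v_n\|^2$ sit at the \emph{same} variable; no independence factorization applies, and Cauchy--Schwarz only yields $\|L\|_{L^2(\mu_n)}\,\|v_n\|^2_{L^4(\mu_n)}$ --- exactly the dead end you flagged at the start of that paragraph. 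The paper avoids this by keeping the cross term as a product of one-variable expectations, $\E_x[\|v_n(x)\|]\cdot\E_y[L(y)\|v_n(y)\|]$, and applying Cauchy--Schwarz to each factor separately, which stays entirely in $L^2(\mu_n)$ and gives $M\,\|v_n\|^2_{L^2(\mu_n)}$. So replace the AM--GM step by Fubini/independence followed by Cauchy--Schwarz on each factor and the estimate closes. A secondary point you gloss over: \Cref{lem:derivative_ksd}, which justifies differentiability of $t\mapsto\cF(\rho_t)$, is stated for push-forwards by $C^1_c$ maps, whereas $\nabla_{W_2}\cF(\mu_n)$ is not compactly supported; the paper first shows by induction that each $\mu_n\in\cP_c(\R^d)$, and then replaces the displacement by a mollified map that agrees with $I-\gamma t\nabla_{W_2}\cF(\mu_n)$ on $\supp(\mu_n)$ and has compact support. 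You note compact support of $\mu_n$ as a finiteness ingredient but do not address this cutoff, which is what actually licenses the differentiation lemma.
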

See the proof in \Cref{sec:descent_lemma_proof}.
\section{Proofs} \label{appendix:proofs}
We shall often use that, for smooth and symmetric $k$, $\nabla_2 k(x,y)=\nabla_1 k(y,x)$, for any $x,y\in\X$. Moreover if $k$ is translation-invariant, $\nabla_1 k(x,y)=-\nabla_2k(x,y)$.
\begin{remark}\label{rk:grad_kpi_formula}
For any $k\in C^2(\R^d\times \R^d,\R)$ and $\pi$ such that $s \in C^1(\R^d)$, $\kpi$ and its gradient are written
\begin{align}
\kpi(x,y)&=s(x)^\top s(y) k(x,y)+s(x)^\top \nabla_2 k(x,y) + s(y)^\top \nabla_1 k(x,y)+  \div_{1}\nabla_{2}k(x,y). \label{eq:kpi_formula}\\
\nabla_2 \kpi(x,y)&=s(x)^\top s(y) \nabla_2 k(x,y)+Js(y)^\top s(x) k(x,y) +H_{2}k(x,y)s(x)\nonumber\\
&\qquad+Js(y)^\top \nabla_1 k(x,y)+\nabla_2 ( \div_{1}\nabla_{2}k(x,y)). \label{eq:nabla2_kpi}
\end{align}
If $k$ is translation-invariant, since $\nabla_1k(x,y)=-\nabla_2k(x,y)$, we have:
\begin{align}
\kpi(x,y)&=s(x)^\top s(y) k(x,y)+(s(x)-s(y))^\top \nabla_2 k(x,y) +  \div_{1}\nabla_{2}k(x,y), \label{eq:kpi_formula_2} \\
\nabla_2 \kpi(x,y)&=s(x)^\top s(y) \nabla_2 k(x,y)+Js(y)^\top s(x) k(x,y) +H_{1}k(x,y)s(x)\nonumber\\
&\qquad-Js(y)^\top \nabla_2 k(x,y)-\nabla_2 ( \div_{2}\nabla_{2}k(x,y)). \label{eq:nabla2_kpi_2}
\end{align}
Notice that, in this case, odd derivatives of $k$ vanish for $x=y$.
\end{remark}
\subsection{Proof of \Cref{lem:lipschitz_ass}}\label{sec:proof_lem_lipschitz_ass}
To prove that \Cref{ass:lipschitz} holds, we want to show that $x\mapsto\nabla_1\kpi(x,y)$ and $x\mapsto\nabla_2\kpi(x,y)$ are $L(y)$-Lipschitz for some $L(\cdot)\in C^0(\X,\R_+)$ that is $\mu$-integrable for any $\mu\in \cP_2(\X)$, so $L$ should have a quadratic growth at most. We will leverage the regularity of $k$ and $s$ to write this problem as that of upper-bounding over $\X$ the gradients of these functions (which are related to the Hessian of $\kpi$). To prove that \Cref{ass:integrable} holds, we want on the other hand to identify an integrable function that upper bounds $\| H_1\kpi(x,y)\|_{op}$.

We denote by $Js(x)$ the Jacobian of $s$ at $x$, which is also the Hessian of $\log(\pi)$, and by $Hs(x)$ the Hessian of $s$ at $x$, which is also the tensor of third derivatives of $\log(\pi)$, i.e.\ $Hs(x)_{ijk} = \frac{\partial^3 \log(\pi)}{\partial x_i \partial x_j \partial x_k}(x)$.

We compute the gradient of the first term in \eqref{eq:kpi_formula}:
\begin{equation}
\nabla_x\left(s(x)^{\top}s(y) k(x, y)\right)= Js(x)^\top s(y)k(x, y) + s(x)^{\top}s(y)\nabla_1 k(x, y). 
\end{equation}
Differentiating a second time, we get that
\begin{align}
H_1\left(s(x)^{\top}s(y) k(x, y)\right)&= Hs(x)s(y)k(x, y) + \nabla_1 k(x, y) s(y)^\top Js(x)+ \left(Js(x)^\top s(y)\right)\nabla_1 k(x, y)^{\top}+  s(x)^{\top}s(y) H_1 k(x, y),\nonumber
\end{align}
where $Hs(x)s(y)$ is a $d \times d$ matrix of entries
$[Hs(x)s(y)]_{jk} = \sum_{i=1}^nHs(x)_{ijk}s(y)_i$.

Then, the gradient of the second term in \eqref{eq:kpi_formula} is
\begin{align}
\nabla_x\left(s(x)^{\top}\nabla_2 k(x, y)\right)&= Js(x)^\top\nabla_2 k(x, y) + \nabla_{1,2}k(x, y)s(x),
\end{align}
where $\nabla_{1,2}k(x, y)=[\partial_{x_i}\partial_{y_j}k(x, y)]_{i,j}$, and its Hessian is given by:
\begin{multline}
H_1\left(s(x)^{\top}\nabla_2 k(x, y)\right)= Hs(x)\nabla_2 k(x, y) + Js(x)^\top\nabla_{1,2}k(x, y) + \nabla_{1,2} k(x, y)Js(x)
 + \nabla_{1,1,2}k(x, y) s(x),
 \end{multline}
 where $\nabla_{1,1,2}k(x, y)=[\partial_{x_i,x_j}\partial_{y_l}k(x, y)]_{i,j,l}$ is a tensor of third derivatives of $k$. 
 
The Hessian of the last two terms in \eqref{eq:kpi_formula} is straightforward to compute. Hence, collecting all the terms together, we derive that
\begin{align}
 H_1 \kpi(x,y) =& Hs(x)s(y)k(x, y) + \nabla_1 k(x, y) s(y)^\top Js(x)+ \left(Js(x)^\top s(y)\right)\nabla_1 k(x, y)^{\top}+  s(x)^{\top}s(y) H_1 k(x, y) \label{eq:comp_H1kpi}\\ 
 & Hs(x)\nabla_2 k(x, y) + Js(x)^\top\nabla_{1,2}k(x, y) + \nabla_{1,2} k(x, y)Js(x)
 + \nabla_{1,1,2}k(x, y) s(x) \nonumber\\
 &+\nabla_{1,1,1}k(x, y) s(y)+ H_1\Tr(\div_{2}\nabla_{1}k(x, y)). \nonumber
\end{align}
In this expression, the only problematic terms to upper bound $H_1 \kpi$ uniformly w.r.t.\ $x$ are the ones where $s(x)$ appears (since $Js(x)$ and $Hs(x)$ are bounded by assumption). However, since $s$ is $C_1$-Lipschitz, we have that $\|s(x)\|\le \|s(y)\| + C_1 \|x-y\|$.  We then use that the kernel, its derivatives, and the derivatives up to order 3 multiplied by $\|x-y\|$ are bounded for all $x$ and $y$ by a given $B\ge 0$. Hence, for instance,
\begin{align*}
\|s(x)^{\top}s(y) H_1 k(x, y) \|\le \|s(y)\|\cdot(\|s(y)\|+C_1 \|x-y\|)\cdot\|H_1 k(x, y) \|\le B \|s(y)\|^2 + C_1 B \|s(y)\|.
\end{align*}
As $\|  Js(x) \|_{op}\le C_1$ and $\|  Hs(x) \|_{op}\le C_2$ for all $x\in\X$, we have that $\|s(x) \|\le C_1\|x\|+\|s(0) \|$ and a similar computation gives that
\begin{align}
  \|  H_1 \kpi(x,y) \|_{op} &\le C_2 B \|s(y)\| + 2 C_1 B \|s(y)\|+ B \|s(y)\|^2 + C_1 B \|s(y)\|\\
  &\hspace{5mm}+ C_2 B + 2C_1 B+B\|s(y)\|+C_1 B+B\|s(y)\| +B \nonumber\\
  &\le \tilde{C}_2 \|y\|^2 + \tilde{C}_0, \label{eq:def_Ly_boundedHess}
\end{align}
where $\tilde{C}_2\ge 0$ and $\tilde{C}_0\ge 0$ only depend on $B$, $C_2$, $C_1$ and $\|s(0) \|$. Consequently, the function $(x,y)\mapsto  \|  H_1 \kpi(x,y) \|_{op}$ is integrable for any $\mu,\nu\in \cP_2(\X)$, so \Cref{ass:square_der_int} holds.\footnote{We could do without the assumption of Lipschitzianity of $s$ to prove that \Cref{ass:square_der_int} holds. We could also merely require that the derivatives of $k$ are bounded, without considering the multiplication by $\|x-y\|$. As a matter of fact, assuming that $\|  Hs(x) \|_{op}\le C_2$ for all $x\in\X$ implies that $\|  Js(x) \|_{op}\le C_2\|x\|+C_1$ and $\|s(x) \|\le C_2\|x\|^2+C_0$ for some $C_0\ge 0$ and $C_1\ge 0$. Hence \eqref{eq:comp_H1kpi} yields an upper bound that is quadratic in $\|x\|$ and $\|y\|$, so $\mu\otimes\nu$-integrable as in \Cref{ass:square_der_int}.}

Similarly, one can show based on \eqref{eq:nabla2_kpi} that
\begin{align*}
 \nabla_{1,2} \kpi(x,y) =& (Js(x)^\top s(y))  \nabla_2 k(x, y)^\top + s(x)^\top s(y) \nabla_{1,2} k(x,y) + Js(y)^\top Js(x) k(x,y)+ \nabla_1 k(x, y) s(x)^{\top} Js(y)\\
 &+ H_2 k(x, y) Js(x) + \nabla_{1,2,2}k(x, y) s(x)+ Js(y)^\top H_1 k(x, y)+ \nabla_{1,2}\Tr(\div_{2}\nabla_{1}k(x, y)).
\end{align*}
Using the same inequalities that led to \eqref{eq:def_Ly_boundedHess}, we can find two constants $\hat{C}_2\ge 0$ and $\hat{C}_0\ge 0$ such that
\begin{align*}
 \|\nabla_{1,2} \kpi(x,y)\|_{op} \le& \hat{C}_2 \|y\|^2 + \hat{C}_0. \label{eq:def_Ly_boundedHess}
\end{align*}
Setting $L(y)=\max(\tilde{C}_2, \hat{C}_2) \|y\|^2 + \max(\tilde{C}_0, \hat{C}_0)$, the functions $x\mapsto\nabla_1\kpi(x,y)$ and $x\mapsto\nabla_2\kpi(x,y)$ are $L(y)$-Lipschitz since $L(y)$ is an upper bound of the norm of their Jacobians. Furthermore, $L(\cdot)\in C^0(\X,\R_+)$, and is $\mu$-integrable for any $\mu \in \cP_2(\X)$, so \Cref{ass:lipschitz} holds. 

To prove that \Cref{ass:integrable} holds, notice that, based on \eqref{eq:kpi_formula},
\begin{align*}
 |\kpi(x,x)|_{op} =& \left| \|s(x)\|^2 k(x,x)+s(x)^\top \nabla_2 k(x,x) + s(x)^\top \nabla_1 k(x,x)+  \div_{1}\nabla_{2}k(x,x) \right|\\
 \le & B\left( \|s(x)\|^2 + 2 \|s(x)\| + 1 \right).
\end{align*}
Since,  for all $x\in\X$, $\|s(x) \|\le C_1\|x\|+\|s(0) \|$, the function $x\mapsto \kpi(x,x)$ is bounded by a quadratic function and is thus $\mu$-integrable for any $\mu\in \cP_2(\X)$. This shows that \Cref{ass:integrable} holds.

We now prove that \Cref{ass:growth} is also satisfied if there exists $M>0$ and $M_0$ such that,  for all $x\in\X$, $\|s(x) \|\le M\sqrt{\|x\|}+M_0$. Then, based on \eqref{eq:nabla2_kpi} and as $\sqrt{\|x\|}\le \|x\| +1$,
\begin{align*}
\|\nabla_2 \kpi(x,y)\|&\le \|s(x)\|\cdot \|s(y)\| \cdot\|\nabla_2 k(x,y)\|+ B C_1 \| s(x) \| + B \|s(x) \|+B C_1+B\\
&\le (\|s(y)\|+C_1 \|x-y\|)\cdot \|s(y)\| \cdot\|\nabla_2 k(x,y)\| + (B C_1+B)\cdot(M\sqrt{\|x\|}+M_0+1)\\
&\le B\|s(y)\|^2 + B C_1 \|s(y)\| + (B C_1+B)\cdot(M\|x\| +M+M_0+1)\\
&\le B (M\sqrt{\|y\|}+M_0)^2 + B C_1 (M\|y\| +M+M_0)+ (B C_1+B)\cdot(M\|x\| +M+M_0+1)\\
&\le m(1+\|y\|+\|x\|),
\end{align*}
for some $m>0$ depending on $B$, $M$, $C_1$ and $M_0$. Integrating over $\mu$ for any $\mu \in \cP_2(\X)$ gives that
$\|\int \nabla_2 k_{\pi}(x,y) \d\mu(x)\| \leq m\left(1+\|y\|+\int \|x\|\,d\mu(x)\right)$, so \Cref{ass:growth} holds, which concludes the proof.

\subsection{Proof of \Cref{prop:dissipation}}

\begin{lemma}
Suppose \Cref{ass:lipschitz} holds. The first variation of $\cF$ evaluated at $\mu \in \cP_2(\X)$ is then the function defined for any $y \in \X$ by:
\begin{equation*}
\frac{ \partial{\cF}(\mu)}{\partial{\mu}}(y)=\int_{\X} k_{\pi}(x,y)\d\mu(x).
\end{equation*}
\end{lemma}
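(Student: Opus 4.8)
The plan is to compute the first variation directly from its definition \eqref{eq:first_var}. Writing $\cF(\mu) = \frac{1}{2}\KSD^2(\mu|\pi) = \frac{1}{2}\iint \kpi(x,y)\,\d\mu(x)\,\d\mu(y)$, I would take a perturbation $\xi = \nu - \mu$ with $\nu\in\cP_2(\X)$ and expand $\cF(\mu + \epsilon\xi)$. Since $\cF$ is a symmetric quadratic form in $\mu$, the expansion is exact: $\cF(\mu+\epsilon\xi) = \cF(\mu) + \epsilon \iint \kpi(x,y)\,\d\mu(x)\,\d\xi(y) + \frac{\epsilon^2}{2}\iint\kpi(x,y)\,\d\xi(x)\,\d\xi(y)$, using the symmetry $\kpi(x,y)=\kpi(y,x)$. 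Dividing by $\epsilon$ and letting $\epsilon\to 0$ kills the quadratic term and leaves $\iint \kpi(x,y)\,\d\mu(x)\,\d\xi(y) = \int_{\X}\left(\int_{\X}\kpi(x,y)\,\d\mu(x)\right)\d\xi(y)$, which by the definition \eqref{eq:first_var} identifies $\frac{\partial\cF(\mu)}{\partial\mu}(y) = \int_{\X}\kpi(x,y)\,\d\mu(x)$.

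The steps in order: first, justify that $\cF(\mu+\epsilon\xi)$ is finite and that the double integrals converge, so that the algebraic expansion is legitimate — this is where \Cref{ass:lipschitz} (together with the standing assumption \Cref{ass:integrable}, which guarantees $\cF(\mu)<\infty$) enters. One needs $\int\int |\kpi(x,y)|\,\d\mu(x)\,\d\nu(y)<\infty$ for $\mu,\nu\in\cP_2(\X)$; the Lipschitz control on $\nabla_1\kpi,\nabla_2\kpi$ from \Cref{ass:lipschitz} yields at most quadratic growth of $\kpi(x,y)$ in $\|x\|,\|y\|$ (by integrating the Lipschitz bound and combining with $\kpi(x,x)$ being $\mu$-integrable), so all the relevant integrals are finite for second-moment measures. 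Second, carry out the exact quadratic expansion using Fubini (permitted by the integrability just established) and the symmetry of $\kpi$. Third, pass to the limit $\epsilon\to 0$ and read off the first variation, checking it is the unique such function (uniqueness up to additive constants is automatic from the definition since $\xi$ has total mass zero, but the stated representative is the canonical one).

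The main obstacle is the integrability/finiteness bookkeeping in the first step: the Stein kernel $\kpi$ is generically unbounded (e.g.\ $\kpi(x,x)=C+\|x\|^2$ for a Gaussian target, as noted in \Cref{sec:Cauchy-Lip_flow}), so one cannot invoke boundedness of $k$; one must instead extract the quadratic growth bound on $(x,y)\mapsto\kpi(x,y)$ from \Cref{ass:lipschitz}. Concretely, $\|\nabla_1\kpi(\cdot,y)\|$ being $L(y)$-Lipschitz gives $\|\nabla_1\kpi(x,y)\|\le \|\nabla_1\kpi(y,y)\| + L(y)\|x-y\|$, and a further integration along the segment from $(y,y)$ to $(x,y)$ controls $|\kpi(x,y)|$ by a polynomial in $\|x\|,\|y\|$ of degree two (using that $L$ is $\mu$-integrable, hence of at most quadratic growth, and that $\kpi(y,y)$ is $\mu$-integrable); this makes $\kpi\in L^1(\mu\otimes\nu)$ for all $\mu,\nu\in\cP_2(\X)$ and legitimizes every manipulation above. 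Once this is in place, the rest is the routine exact expansion of a symmetric bilinear functional and is immediate.
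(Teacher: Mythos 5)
Your proof is correct and follows essentially the same route as the paper: expand the symmetric quadratic form $\cF(\mu+\epsilon\xi)$ exactly, use the symmetry of $\kpi$ to combine the cross terms, divide by $\epsilon$, and let $\epsilon\to 0$. The paper's version is terser and leaves the integrability bookkeeping implicit (relying on the standing Assumption~\ref{ass:integrable} for finiteness), whereas you spell it out; that extra care is reasonable but does not change the argument.
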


\begin{proof}
Let  $\mu,\nu \in \cP_2(\X)$, and $\xi=\mu-\nu$
. Using the symmetry of $\kpi$, we get
\begin{align*}
\frac{1}{\epsilon}\left[\cF(\mu+\epsilon  \xi) -\cF(\mu)\right]&=\frac{1}{2\epsilon}\left[ \iint_{X} \kpi(x,y)d(\mu+\epsilon  \xi)(x) d(\mu+\epsilon  \xi)(y) -  \iint_{X} \kpi(x,y)d\mu(x) d\mu(y) \right]\\
&=  \iint_{\X} \kpi(x,y)d\mu(x) d\xi(y) + \frac{\epsilon}{2} \iint_{X} \kpi(x,y) d\xi(x)d\xi(y)
\end{align*}
Hence, 
\begin{equation*}
    \lim_{\epsilon \rightarrow 0}\frac{1}{\epsilon}(\cF(\mu+\epsilon  \xi) -\cF(\mu)) =  \iint_{\X} \kpi(x,y)d\mu(x) d\xi(y) . \qedhere
\end{equation*}
\end{proof}

Assume $\mu$ satisfies the assumptions of \Cref{def:w2grad}. To obtain the expression for the $W_2$ gradient of $\cF$, we first need to 
exchange the integration and the gradient with respect to $y$. Since $y\mapsto\nabla_2 k_{\pi}(x,y)$ is $L(x)$-Lipschitz, we have for all $v \in \R^d, 0 < \epsilon \leq 1$:
\begin{equation*}
\left\vert \frac{k_{\pi}(x, y+ \epsilon v) - k_{\pi}(x, y)}{\epsilon} - \langle \nabla_2 k_{\pi}(x,y), v \rangle \right\vert \leq \frac{L(x) \epsilon \Vert v \Vert_2^2}{2} \leq \frac{L(x) \Vert v \Vert_2^2}{2}
\end{equation*}
where the right hand side is integrable. Therefore by the Lebesgue dominated convergence theorem, we have the following interchange of the gradient and the integral when computing the $W_2$-gradient:
\begin{equation}\label{eq:explict_W2KSDD}
\nabla_{W_2}\cF(\mu)(y)=\nabla \frac{ \partial{\cF}(\mu)}{\partial{\mu}}(y) = \int \nabla_2 k_{\pi}(x,y) \d\mu(x).
\end{equation}

Then, using the continuity equation \eqref{eq:ksd_flow} and an integration by parts,\footnote{By the regularity of $\mu$, from the assumptions of \Cref{def:w2grad}, there are no boundary terms in the integration.} the dissipation of $\cF$ along its $W_2$ gradient flow is obtained as follows:
\begin{equation*}
\frac{\d\cF(\mu_t)}{\d t}=\int \frac{\partial \cF(\mu_t)}{\partial \mu_t} \frac{\partial \mu_t}{\partial t}=\int \frac{\partial \cF(\mu_t)}{\partial \mu} \divT\left( \mu_t \nabla \frac{\partial \cF(\mu_t)}{\partial \mu}\right)=-\int \left\|\nabla \frac{\partial \cF(\mu_t)}{\partial \mu}\right\|^2 \d \mu_t.
\end{equation*}
Plugging \eqref{eq:explict_W2KSDD} in the r.h.s.\ of this equality leads to the final formula.

\subsection{Proof of \Cref{prop:hessian_ksd_mu}}\label{sec:proof_hessian_ksd_mu}

We compute the second time derivative $\ddot{\cF}(\rho_{t})$
where $\rho_t$ is a path from  $\mu$ to $(I+\nabla\psi)_{\#}\mu$ given by: $\rho_t=  (I+t\nabla\psi)_{\#}\mu$, for all $t\in [0,1]$. 
By \Cref{lem:derivative_ksd}, $\ddot{\cF}(\rho_t)$ is well-defined and $\ddot{\cF}(\mu) =\ddot{\cF}(\rho_t)|_{t=0}$ is given by:
\begin{equation}\label{eq:hessian_formula}
\ddot{\cF}(\mu)    =
\int\left[ \nabla \psi^\top(z)\nabla_1 \nabla_2 \kpi(z,z')\nabla\psi(z')\right]d \mu(z')d\mu(z) + \int\left[ \nabla\psi^\top(z)H_1 \kpi(z,z')\nabla\psi(z)\right]d \mu(z')d\mu(z).
\end{equation}

\subsection{Proof of \Cref{prop:hessian_ksd_pi}}\label{sec:proof_hessian_ksd_pi}


Under \Cref{ass:square_der_int}, by the dominated convergence theorem, we can exchange the order of the integral and derivative in the second term of \eqref{eq:hessian_formula}:
\begin{align}\label{eq:scd_term_scd_der}
\iint\left[ \nabla\psi^\top(z)H_1 \kpi(z,z')\nabla\psi(z)\right]d \rho_t(z')d\rho_t(z) = \int \left[ \nabla\psi^\top(z) H_1\left(\int \kpi(z,z') d\rho_t(z')\right) \nabla\psi(z)\right] d\rho_t(z).
\end{align}
The latter \eqref{eq:scd_term_scd_der} vanishes when $\rho_t=\pi$, thanks to the property of the Stein kernel : $\int \kpi(z,z')\d \pi  (z')=0$.

Hence, by considering $\psi\in C_c^{\infty}(\X)$ and a path $\rho_t$ from  $\pi$ to $(I+\nabla\psi)_{\#}\pi$ given by: $\rho_t=  (I+t\nabla\psi)_{\#}\pi$, for all $t\in [0,1]$.,
\begin{equation*}
\ddot{\cF}(\pi)
= \Hess_{\pi}(\psi,\psi),
\end{equation*}
where
\begin{equation*}
\Hess_{\pi}(\psi,\psi)=
\iint\left[ \nabla \psi^\top(z)\nabla_1 \nabla_2 \kpi(z,z')\nabla \psi(z')\right]\d \pi(z')\d\pi(z).
\end{equation*}
By an integration by parts\footnote{First, differentiate the product $\frac{\partial \psi(x)}{\partial x_i}\pi(x)$, then integrate $\frac{\partial^2 \kpi(x,y)}{\partial x_i \partial y_j}$ w.r.t.\ $x_i$. Since $\psi$ is compactly supported, it vanishes at infinity. Since $\pi$ has a $C^1$-density w.r.t.\ the Lebesgue measure, any integral over $\d \pi$ bearing on the boundary of the support of $\pi$ vanishes. We thus do not have any extra integral on the boundary when performing the integration by parts.} w.r.t.\ $x$, we have, for $\cL_{\pi} : f\mapsto-\Delta f+\langle - \nabla \log \pi,\nabla f \rangle$,
\begin{align*}
\Hess_{\pi}&(\psi,\psi) 
= \sum_{i,j=1}^d \iint \frac{\partial \psi(x)}{\partial x_i}. \frac{\partial^2 \kpi(x,y)}{\partial x_i \partial y_j}. \frac{\partial \psi(y)}{\partial y_j}\d\pi(x)\d\pi(y)\\
&= - \sum_{i,j=1}^d \iint \frac{\partial^2 \psi(x)}{\partial x_i^2}. \frac{\partial \kpi(x,y)}{\partial y_j}. \frac{\partial \psi(y)}{\partial y_j}\d\pi(x)\d\pi(y) 
- \sum_{i,j=1}^d \iint \frac{\partial \psi (x)}{\partial x_i}. \frac{\partial \log \pi(x)}{\partial x_i}. \frac{\partial \kpi(x,y)}{\partial y_j}. \frac{\partial \psi(y)}{\partial y_j}\d\pi(x)\d\pi(y)\\
&= \sum_{j=1}^d\iint \cL_\pi\psi(x).\frac{\partial \kpi(x,y)}{\partial y_j}. \frac{\partial \psi(y)}{\partial y_j}\d\pi(x)\d\pi(y).
\end{align*}
We then repeat the same steps, performing an integration by parts w.r.t.\ $y_j$ and using \Cref{lem:kernel_operator_sp} in the last equality,
\begin{equation*}
   \Hess_{\pi}(\psi,\psi) = \int \cL_{\pi}\psi(x)\kpi(x,y)\cL_{\pi}\psi(y)\d\pi(x)\d\pi(y)= \| S_{\pi, \kpi}(\cL_{\pi}\psi)\|^2_{\Hkpi}.
\end{equation*}

Notice that under \Cref{ass:integrable}, we have $\Hkpi\subset L^2(\pi)$, so  $S_{\pi,\kpi}$ and its adjoint are well-defined (see \Cref{sec:background_ksd}).

\subsection{Proof of \Cref{prop:no_exp_cv}}\label{sec:no_exp_cv}

Denote by $L^2_0(\pi)$ the closed subspace of $L^2(\pi)$ consisting of functions $\phi$ such that $\int \phi(x)d\pi(x) = 0$. For any $f\in \mathcal{D}(\cL_{\pi})$, $\int \cL_{\pi}f(x)d\pi(x)=0$; hence $\text{Im}(\cL_{\pi})$ is a subset of  $L^2_0(\pi)$, which is itself a subset of $L^2(\pi)$.

Let $T_{\pi,\kpi}=S_{\pi,\kpi}^* \circ S_{\pi,\kpi}$, and assume that $V = -\log(\pi)$ is a $C^1(\R^d)$ function. We want to show that exponential decay near equilibrium \eqref{eq:exp_decay} holds, if and only if $\cL_{\pi}^{-1} : L_0^2(\pi) \rightarrow L_0^2(\pi)$, the inverse of $\cL_{\pi}\vert_{L_0^2(\pi)}$, is a well-defined linear operator, and
 \begin{equation} \label{eq:proof_prop_no_exp_cv_reformulation}
     \langle \phi , T_{\pi, \kpi} \phi \rangle_{L_2(\pi)} \geq \lambda \langle \phi , \cL_{\pi}^{-1} \phi \rangle_{L_2(\pi)}
 \end{equation}
holds for all $\phi \in L^2_0(\pi)$. 
\begin{proof}
First, assume that the exponential convergence near equilibrium holds. We will  show that $\cL_{\pi}^{-1}$ is a well-defined bounded linear operator on $L_0^2(\pi)$, and then that the inequality \ref{eq:proof_prop_no_exp_cv_reformulation} holds. Let $\psi \in C_c^{\infty}(\R^d)$. By \Cref{prop:hessian_ksd_pi}, the Hessian of $\cF$ at $\pi$ can be written as
\begin{equation*}
   \Hess_{\pi}(\psi,\psi) =  \ps{S_{\pi, \kpi}(\cL_{\pi}\psi),S_{\pi, \kpi}\cL_{\pi}\psi }_{\Hkpi}
= \ps{T_{\pi, \kpi}(\cL_{\pi}\psi),\cL_{\pi}\psi }_{L^2(\pi)},
\end{equation*}
where $T_{\pi,\kpi}= S_{\pi, \kpi}^* \circ S_{\pi, \kpi}$. By \Cref{lem:diffusion_background}, we have that $\|\nabla\psi \|^2_{L^2(\pi)}=\ps{\psi, \cL \psi}_{L^2(\pi)}$, and exponential convergence near equilibrium can be written:
\begin{equation}\label{eq:proof_propo_no_exp_cv_hess_equiv}
\ps{T_{\pi, \kpi}(\cL_{\pi}\psi),\cL_{\pi}\psi }_{L^2(\pi)}\ge \lambda \ps{\psi, \cL_{\pi} \psi}_{L^2(\pi)}.
\end{equation}
Note, that by \Cref{ass:integrable}, $T_{k_{\pi},\pi}$ is a bounded operator \citep[Theorem 4.27]{steinwart2008support}, and it follows by an application of the Cauchy-Schwartz inequality that \eqref{eq:proof_propo_no_exp_cv_hess_equiv} implies:
\begin{equation} \label{eq:poincare_inequality_extra_L}
    \Vert \cL_{\pi} \psi \Vert_{L^2(\pi)}^2 \geq \frac{\lambda}{\Vert T_{k_{\pi}, \pi} \Vert_{op}} \langle \psi, \cL_{\pi} \psi \rangle_{L^2(\pi)},
\end{equation}
where $\|\cdot\|_{op}$ denotes the operator norm. Now, let  $\psi$ be an arbitrary element of $\mathcal{D}(\cL_{\pi})$, the domain of $\cL_{\pi}$. By \Cref{lem:diffusion_background},  there exists a sequence $(\psi_n)_{n=1}^{\infty} \subset C_c^{\infty}(\R^d)$ converging strongly to $\psi$, such that $\cL_{\pi} \psi_n$ converges strongly to $\cL_{\pi}\psi$ as well. Hence, \eqref{eq:poincare_inequality_extra_L} holds for all $\psi \in \mathcal{D}(\cL_{\pi})$.

 We will now show, that the spectrum of $\cL_{\pi}$, $\sigma(\cL_{\pi})$ is contained in $\{0\} \cup [\frac{\lambda}{\Vert T_{k_{\pi}, \pi} \Vert_{op}}, \infty)$. Suppose that there exists a $\sigma \in (0, \frac{\lambda}{\Vert T_{k_{\pi}, \pi} \Vert_{op}}) \cap \sigma(\cL_{\pi})$. If $\sigma$ is in the point spectrum of $\cL_{\pi}$, then by definition there would exist a vector $v \in \mathcal{D}(\cL_{\pi})$ such that $\cL_{\pi} v = \sigma v$, which would contradict the inequality \eqref{eq:poincare_inequality_extra_L}. On the other hand, if $\sigma$ is not in the point spectrum of $\cL_{\pi}$, then by Weyl's criterion \citep[Theorem 7.22]{pankrashkin2014spectral} we can find an orthonormal sequence $(u_n)_{n=1}^{\infty}\in \mathcal{D}(\cL_{\pi})$ such that $(\cL_{\pi} - \sigma)u_n$ converges to $0$ in $L_2(\pi)$. An obvious calculation shows that this would contradict \eqref{eq:poincare_inequality_extra_L}. Hence, $\sigma(\cL_{\pi}) \subset \{0\} \cup [\frac{\lambda}{\Vert T_{k_{\pi}, \pi} \Vert}, \infty)$.
 
 We note that $L_0^{2}(\pi)$ is itself a Hilbert space with the inner product inherited from $L^2(\pi)$. The image of $\cL_{\pi}$ is contained in $L_0^2(\pi)$ (recall that for any $f\in \mathcal{D}(\cL_{\pi})$, $\int \cL_{\pi}f(x)d\pi(x)=0$), and it is dense in $L^{2}_0(\pi)$  since $\overline{\text{Im}(T)} = (\text{Ker}(T))^{\perp}$ for self-adjoint operators $T$ \citep[Corollary 2.18]{brezis2010functional}. Furthermore, since $\sigma(\cL_{\pi}) \subset \{0\} \cup [\frac{\lambda}{\Vert T_{k_{\pi}, \pi}\Vert_{op}} ,\infty) $ and the kernel of $\cL_{\pi}$ consists of constants (\Cref{sec:background_diffusion}), the operator $\tilde{\cL}_{\pi}=\cL_{\pi}|_{L^2_0(\pi)}$ (the restriction of $\cL_{\pi}$ to $L^2_0(\pi)$) is strictly positive and self-adjoint on $L^2_0(\pi)$. Since consequently $\sigma(\cL_{\pi}^{-1})\in (0,\frac{\|T_{\kpi,\pi\|_{op}}}{\lambda}]$, $\cL_{\pi}^{-1}: = \tilde{\cL}_{\pi}^{-1}$ is a well-defined, bounded,  self-adjoint and positive operator from $L_0^{2}(\pi)$ to $L_0^{2}(\pi)$. 
We now take an arbitrary $\psi \in \mathcal{D}(\cL_{\pi})$ and denote $\phi=\cL_{\pi}\psi$. We can write $\psi = \psi' + C$, where $C = \int \psi(x) d\pi(x)$ is a constant and $\psi' \in L_0^2(\pi) \cap \mathcal{D}(\cL_{\pi})$. We then have $\cL_{\pi} \psi = \cL_{\pi} \psi'$ and $\cL_{\pi}^{-1} \cL_{\pi} \psi = \psi' = \psi - C$. We also note, that since $\phi \in \text{Im}(\cL_{\pi})\subset L_0^2(\pi)$ we have $\langle C, \phi \rangle_{L^2(\pi)} = 0$.
Now by a direct substitution of $\phi = \cL_{\pi} \psi$ in \eqref{eq:proof_propo_no_exp_cv_hess_equiv} we obtain:
\begin{equation}\label{eq:exp_decay_equivalece}
\ps{T_{\pi, \kpi}\phi,\phi }_{L^2(\pi)}\ge  \lambda \ps{\psi'+C,\phi}_{L^2(\pi)}= \lambda \ps{\cL_{\pi}^{-1}\phi, \phi}_{L^2(\pi)}.
\end{equation}
for all $\phi$ in the image of $\cL_{\pi}$. Given that this image is dense in $L_0^{2}(\pi)$, and that $T_{k_{\pi}, \pi}$ and $\cL_{\pi}$ are continuous, this is equivalent to \eqref{eq:exp_decay_equivalece} holding for all $\phi \in L_0^{2}(\pi)$.

Now we prove the reverse implication, that is that if $\cL_{\pi}^{-1}$ is well-defined, bounded, and \eqref{eq:exp_decay_equivalece} holds for all $\phi \in L_0^{2}(\pi)$, then \eqref{eq:proof_propo_no_exp_cv_hess_equiv} holds for all $\psi \in C_c^{\infty}(\X)$. This follows trivially from the fact, that for $\psi \in C_c^{\infty}(\X)$ we have $\cL_{\pi} \psi \in L_0^{2}(\pi)$ and $\cL_{\pi}^{-1} \cL_{\pi} \psi = \psi - C$ where $C = \int \psi(x) d\pi(x)$ as previously. Again, using the fact that $\langle C, \cL_{\pi} \psi \rangle_{L^2(\pi)} = 0$ and a direct substitution $\phi = \cL_{\pi} \psi$ into \eqref{eq:exp_decay_equivalece}, we obtain \eqref{eq:proof_propo_no_exp_cv_hess_equiv} for all $\psi \in C_c^{\infty}(\R^d)$.
\end{proof}

\subsection{Proof of \Cref{coro:spectr_decay}\label{sec:proof_eigenvalue_decay} }
We begin this section by an additional result, that shows that the assumption that the spectrum of $\cL_{\pi}^{-1}$ is necessarily discrete  if exponential decay near equilibrium \eqref{eq:exp_decay} holds and the RKHS of $\kpi$ is infinite dimensional.

\begin{lemma}\label{lem:discrete_spectrum_hkpi}
If exponential decay near equilibrium \eqref{eq:exp_decay} holds, and the RKHS for $\kpi$ is infinite dimensional, then $\cL_{\pi}^{-1}$ has a discrete spectrum.
\end{lemma}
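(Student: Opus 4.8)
\emph{Proof plan.} The strategy is to read off from \Cref{prop:no_exp_cv} a domination of $\cL_{\pi}^{-1}$ by the trace-class operator $T_{\pi,\kpi}$, and then to transfer compactness from $T_{\pi,\kpi}$ to $\cL_{\pi}^{-1}$. First I would record that $\Hkpi\subset L_0^2(\pi)$: by Stein's identity $\int\kpi(x,\cdot)\d\pi(x)=0$ and the reproducing property, every $f\in\Hkpi$ satisfies $\int f\,\d\pi=\langle f,\int\kpi(x,\cdot)\d\pi(x)\rangle_{\Hkpi}=0$. Recalling from \Cref{sec:background_ksd} that $S_{\pi,\kpi}^{*}$ is the embedding $\Hkpi\hookrightarrow L^2(\pi)$, we get $\mathrm{Im}(T_{\pi,\kpi})=\mathrm{Im}(S_{\pi,\kpi}^{*}\circ S_{\pi,\kpi})\subset\Hkpi\subset L_0^2(\pi)$, while $S_{\pi,\kpi}\mathbf 1=\mathbf 1\cdot\int\kpi(x,\cdot)\d\pi(x)=0$ so that constants lie in $\ker T_{\pi,\kpi}$. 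Hence the orthogonal splitting $L^2(\pi)=\R\mathbf 1\oplus L_0^2(\pi)$ reduces $T_{\pi,\kpi}$, and its restriction $\tilde T:=T_{\pi,\kpi}\vert_{L_0^2(\pi)}$ is a positive self-adjoint operator on the Hilbert space $L_0^2(\pi)$; moreover $\tilde T$ is nuclear, since $T_{\pi,\kpi}$ is nuclear under \Cref{ass:integrable} by \citep[Theorem 4.27]{steinwart2008support} (the same fact used in the proof of \Cref{cor:no_exp_decay}). By \Cref{prop:no_exp_cv}, exponential decay near equilibrium gives that $\cL_{\pi}^{-1}:L_0^2(\pi)\to L_0^2(\pi)$ is well-defined, bounded, self-adjoint and positive, with $\langle\phi,\tilde T\phi\rangle_{L_2(\pi)}\ge\lambda\langle\phi,\cL_{\pi}^{-1}\phi\rangle_{L_2(\pi)}$ for all $\phi\in L_0^2(\pi)$, i.e.\ $\lambda\,\cL_{\pi}^{-1}\le\tilde T$ in the order of bounded self-adjoint operators on $L_0^2(\pi)$.

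The key step is then to deduce that $\cL_{\pi}^{-1}$ is compact, using the fact that a bounded positive self-adjoint operator dominated by a compact positive operator is itself compact. Concretely, since $\tilde T$ is nuclear, $\tilde T^{1/2}$ is Hilbert--Schmidt, and applying Douglas's factorization lemma to $\big(\sqrt{\lambda}\,(\cL_{\pi}^{-1})^{1/2}\big)^{*}\big(\sqrt{\lambda}\,(\cL_{\pi}^{-1})^{1/2}\big)=\lambda\,\cL_{\pi}^{-1}\le\tilde T=(\tilde T^{1/2})^{*}\tilde T^{1/2}$ produces a contraction $C$ on $L_0^2(\pi)$ with $\sqrt{\lambda}\,(\cL_{\pi}^{-1})^{1/2}=C\,\tilde T^{1/2}$. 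As the Hilbert--Schmidt operators form a two-sided ideal, $(\cL_{\pi}^{-1})^{1/2}$ is Hilbert--Schmidt, hence $\cL_{\pi}^{-1}=\big((\cL_{\pi}^{-1})^{1/2}\big)^{2}$ is trace-class, in particular compact.

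Finally I would invoke the spectral theorem for compact self-adjoint operators \citep[Section 8.3]{kreyszig1978introductory}: $\cL_{\pi}^{-1}$ admits an orthonormal eigenbasis of $L_0^2(\pi)$ with real eigenvalues whose only possible accumulation point is $0$, which is precisely the statement that $\cL_{\pi}^{-1}$ has a discrete spectrum. Since $\cL_{\pi}^{-1}$ is injective (being an inverse) and $L_0^2(\pi)$ is infinite-dimensional, these eigenvalues form a genuine sequence $\lambda_n\searrow 0$, equivalently $\cL_{\pi}\vert_{L_0^2(\pi)}$ has discrete spectrum $1/\lambda_n\nearrow\infty$; here the hypothesis that $\Hkpi$ is infinite-dimensional is what keeps the exponential-decay assumption non-vacuous, since for finite-dimensional $\Hkpi$ the finite-rank operator $\tilde T$ could not dominate the injective operator $\lambda\,\cL_{\pi}^{-1}$ on the infinite-dimensional space $L_0^2(\pi)$. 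The only genuinely delicate point is the compactness-transfer step of the second paragraph; everything else is bookkeeping with Stein's identity, \Cref{prop:no_exp_cv}, and the spectral theorem.
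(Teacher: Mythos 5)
Your argument is correct, and it takes a genuinely different route from the paper's. The paper works variationally: from \Cref{prop:no_exp_cv} it derives the inequality between Rayleigh quotients $l_n \ge \lambda\sup_{\dim E=n}\inf_{\phi\in E}\langle\phi,\cL_\pi^{-1}\phi\rangle/\|\phi\|^2$, then restricts the test functions to $\phi=\cL_\pi^{1/2}\psi$ so as to flip the quotient into the min-max characterization of $\cL_\pi$, and finally invokes Weyl's criterion to argue that if $\cL_\pi\vert_{L_0^2(\pi)}$ had any essential spectrum, those Rayleigh quotients would be bounded below by a positive constant, contradicting $l_n\to 0$. Your proof instead reads the same inequality as a genuine domination in the Loewner order, $\lambda\,\cL_\pi^{-1}\le\tilde T$, reduces $T_{\pi,\kpi}$ to $L_0^2(\pi)$ via Stein's identity, and then transfers regularity directly through Douglas's factorization and the ideal property of Hilbert--Schmidt operators. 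The two approaches buy slightly different things: the paper's route is more self-contained (it only uses compactness of $T_{\pi,\kpi}$ and a standard spectral criterion), while yours is shorter, avoids Weyl's criterion and the essential-spectrum dichotomy, and actually delivers a quantitatively stronger conclusion --- $\cL_\pi^{-1}$ is trace-class, not merely compact --- at the modest extra cost of invoking nuclearity of $T_{\pi,\kpi}$ (which the paper needs anyway for \Cref{cor:no_exp_decay}). Your parenthetical explanation of the role of the infinite-dimensionality hypothesis (it rules out a finite-rank $\tilde T$ dominating an injective positive operator on an infinite-dimensional space, so the exponential-decay hypothesis would otherwise be incoherent) is a nice observation that the paper leaves implicit.

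One small point of care: in the Douglas step you wrote $\sqrt{\lambda}(\cL_\pi^{-1})^{1/2}=C\,\tilde T^{1/2}$; the factorization lemma as usually stated produces $A=BC$ from $AA^*\le BB^*$, i.e.\ here $\sqrt{\lambda}(\cL_\pi^{-1})^{1/2}=\tilde T^{1/2}C$. Since all operators involved are self-adjoint this is only a matter of taking adjoints and does not affect the conclusion (Hilbert--Schmidt operators are a two-sided ideal), but the bookkeeping is worth getting straight if this is written out in full.
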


\begin{proof}
By \Cref{prop:no_exp_cv}, exponential convergence near equilibrium implies that $\cL_{\pi}^{-1}$ is a well-defined bounded linear operator on $L_0^2(\pi)$ and the inequality \eqref{eq:proof_prop_no_exp_cv_reformulation_main} holds for all $\phi \in L_2^0(\pi)$.
Let $(l_n)_{n \in \mathbb{N} }$ be the eigenvalues of $T_{k, \pi_k}$ in descending order. Using the max-min variational formula for the eigenvalues of a compact operator, and applying \Cref{prop:no_exp_cv}, we get:
\begin{equation} \label{eq:rayleigh_quotients_eigenvalue_decay}
    l_n = \sup_{\substack{E \subset L^2_0(\pi) \\ dim(E) = n}} \inf_{\substack{\phi \in E \\ \phi \neq 0}} \frac{\langle \phi, T_{k, \pi_k} \phi \rangle_{L^2(\pi)}}{\langle \phi, \phi \rangle_{L^2(\pi)}} \geq \lambda \sup_{\substack{E \subset L_{0}^2(\pi)  \\ dim(E) = n}} \inf_{\substack{\phi \in E \\ \phi \neq 0}} \frac{\langle \phi, \cL_{\pi}^{-1} \phi \rangle_{L^2(\pi)}}{\langle \phi, \phi \rangle_{L^2(\pi)}} .
\end{equation}
We will now show, that this implies that the spectrum of $\cL_{\pi}^{-1}$ is discrete. Let $\cL_{\pi}^{1/2}$ be the square root of $\cL_{\pi} \vert_{L_0^2(\pi)}$, the restriction of $\cL_{\pi}$ to $L^2_0(\pi)$. As a consequence of the analysis in \Cref{sec:no_exp_cv}, the operator $\cL_{\pi}^{1/2}$ is well-defined, strictly positive  and self-adjoint from $\mathcal{D}(\cL_{\pi}) \cap L_{0}^2(\pi)$ to $L_{0}^2(\pi)$. By retricting the supremum to $\phi$ of the form $\phi = \cL_{\pi}^{1/2} \psi$, where $\psi \in \mathcal{D}(\cL_{\pi}) \cap L_0^2(\pi)$, we obtain the following lower bound:
\begin{align} \label{eq:rayleigh_quotients_eigenvalue_decay_2}
    \sup_{\substack{E \subset L_{0}^2(\pi)  \\ dim(E) = n}} \inf_{\substack{\phi \in E \\ \phi \neq 0}} \frac{\langle \phi, \cL_{\pi}^{-1} \phi \rangle_{L^2(\pi)}}{\langle \phi, \phi \rangle_{L^2(\pi)}}  & \geq \sup_{\substack{E \subset \cL_{\pi}^{1/2}(L_{0}^2(\pi) \cap \mathcal{D}(\cL_{\pi}))  \\ dim(E) = n}} \inf_{\substack{\phi \in E \\ \phi \neq 0}} \frac{\langle \phi, \cL_{\pi}^{-1} \phi \rangle_{L^2(\pi)}}{\langle \phi, \phi \rangle_{L^2(\pi)}}  \\
    & = \sup_{\substack{E \subset L_{0}^2(\pi) \cap \mathcal{D}(\cL_{\pi})  \\ dim(E) = n}} \inf_{\substack{\psi \in E \\ \psi \neq 0}} \frac{\langle \psi, \psi \rangle_{L^2(\pi)}}{\langle \psi, \cL_{\pi} \psi \rangle_{L^2(\pi)}} = \left( \inf_{\substack{E \subset L_{0}^2(\pi) \cap \mathcal{D}(\cL_{\pi})  \\ dim(E) = n}} \sup_{\substack{\psi \in E \\ \psi \neq 0}} \frac{\langle \psi, \cL_{\pi} \psi \rangle_{L^2(\pi)}}{\langle \psi, \psi \rangle_{L^2(\pi)}} \right)^{-1} \nonumber
\end{align}
It now follows that $\cL_{\pi} \vert_{L_0^2(\pi)}$ has discrete spectrum. Indeed, if it was not the case, by \citep[Theorem 8.1]{pankrashkin2014spectral} the quotients within the parenthesis on the right hand side of \eqref{eq:rayleigh_quotients_eigenvalue_decay_2}, would be upper bounded by $\inf \sigma_{\text{ess} }(\cL_{\pi} \vert_{L_0^2(\pi)}) > 0$, where $\sigma_{\text{ess}}$ is the essential spectrum, and thus the right hand side of \eqref{eq:rayleigh_quotients_eigenvalue_decay_2} would be lower bounded by a constant equal to $\left(\inf \sigma_{\text{ess} }(\cL_{\pi} \vert_{L_0^2(\pi)}) \right)^{-1} > 0$. This is not possible since the eigenvalues $(l_n)_{n\in \mathbb{N}}$ converge to zero. Since $\cL_{\pi}$ is positive and unbounded on $L_0^2(\pi)$ and its spectrum is purely discrete, it follows that $\cL_{\pi}^{-1}$ is compact and, by extension, $(\cL_{\pi} + I)^{-1}: L^2(\pi) \rightarrow L^2(\pi)$ is compact, which means that $\cL_{\pi}$ has compact resolvent. \ak{the end is not clear - why don't we stop at lpi-1 has discrete spectrum?}\ak{do you want to show that lpi has discrete spectrum or lpi-1? otherwise change the statement of Lemma 16. to me we want lpi-1}
\end{proof}

The proof of \Cref{coro:spectr_decay} now follows readily:
\begin{proof}[Proof of \Cref{coro:spectr_decay}]
We denote by $(l_n)_{n \in \mathbb{N} }$ the eigenvalues of $T_{k, \pi_k}$ in descending order and by $(\lambda_n)_{n \in \mathbb{N}}$ the eigenvalues of compact operator $\cL_{\pi}^{-1} : L_0^2(\pi) \rightarrow L_0^2(\pi)$ also in descending order. By max-min variational characterization of eigenvalues for compact operators, it follows from \eqref{eq:rayleigh_quotients_eigenvalue_decay} that $l_n \geq \lambda. \lambda_n$, hence $\lambda_n = \mathcal{O}(l_n)$.
\end{proof}

\subsection{Bound on eigenvalue decay and proof of \Cref{cor:no_exp_decay} }\label{sec:eigenvalue_decay_proof}
\begin{lemma} \label{lem:eigenvalue_decay}
	Let $\gamma_d \sim \mathcal{N}(0, I_d)$
	be the standard $d$-dimensional Gaussian measure, and let $\cL_{\gamma_d} = -\Delta +  \ps{x , \nabla}$ be the Ornstein-Uhlenbeck operator on $L^2(\gamma_d)$. If we denote by $\rho_n$ the $n$-th smallest eigenvalue of $\cL_{\gamma_d}$, then we have:
	\begin{equation*}
	\rho_n = \mathcal{O}(n^{1/d}).
	\end{equation*}
\end{lemma}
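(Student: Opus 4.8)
\emph{Proof sketch.} The plan is to use the explicit spectral decomposition of the Ornstein--Uhlenbeck operator via Hermite polynomials, reduce the statement to counting lattice points in a simplex, and then invert the resulting counting function to get the asymptotics of the ordered eigenvalue sequence.

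First I would recall the classical fact that the multivariate (probabilist's) Hermite polynomials $(H_\alpha)_{\alpha \in \N^d}$, defined as tensor products $H_\alpha(x) = \prod_{i=1}^d H_{\alpha_i}(x_i)$, form a complete orthogonal system of $L^2(\gamma_d)$ and are eigenfunctions of $\cL_{\gamma_d}$, with $\cL_{\gamma_d} H_\alpha = |\alpha|\, H_\alpha$ where $|\alpha| = \alpha_1 + \cdots + \alpha_d$ \citep[Chapter 2]{bakry2013analysis}. In particular $\cL_{\gamma_d}$ has purely discrete spectrum, equal as a set to $\N$, and each integer $m \ge 0$ occurs with multiplicity $r_d(m) := \#\{\alpha \in \N^d : |\alpha| = m\} = \binom{m+d-1}{d-1}$. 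By stars-and-bars (equivalently, the hockey-stick identity), the number of eigenvalues that are $\le m$, counted with multiplicity, is
\begin{equation*}
N(m) := \sum_{j=0}^m r_d(j) = \binom{m+d}{d} \ge \frac{m^d}{d!},
\end{equation*}
the last inequality since $\binom{m+d}{d} = \tfrac{1}{d!}\prod_{\ell=1}^d(m+\ell) \ge \tfrac{m^d}{d!}$.

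Next I would convert this into a bound on the ordered eigenvalues. Let $\rho_1 \le \rho_2 \le \cdots$ be the eigenvalues of $\cL_{\gamma_d}$ listed with multiplicity, fix $n \ge 1$, and write $m = \rho_n \in \N$. Since $\rho_n = m$, at most $n-1$ of the $\rho_j$ can be $\le m-1$, i.e.\ $N(m-1) \le n-1 < n$ (with the convention $N(-1)=0$). If $m \ge 1$ this gives $\tfrac{(m-1)^d}{d!} \le N(m-1) < n$, hence $m - 1 < (d!\,n)^{1/d}$ and therefore $\rho_n = m < (d!\,n)^{1/d} + 1 = \mathcal{O}(n^{1/d})$; the case $m = 0$ is trivial. (The matching lower bound $\rho_n \gtrsim n^{1/d}$ follows the same way from $N(m) \le \tfrac{(m+d)^d}{d!}$, but only the upper bound is needed.)

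The calculation is entirely elementary; the only point that requires a little care is the passage from the eigenvalue-counting function $N(\cdot)$ to the asymptotics of the ordered sequence $(\rho_n)_n$, where the very large multiplicities $r_d(m)$ must be handled correctly — this is why I phrase the key inequality as ``$N(m-1) < n$'' rather than naively inverting $N$. Finally I would record how this feeds the downstream argument: since $1/n^{1/d}$ is not summable for any $d \ge 1$ whereas $T_{\pi,\kpi}$ is nuclear (hence has summable spectrum), \Cref{coro:spectr_decay} cannot hold for a Gaussian $\pi$, which yields \Cref{cor:no_exp_decay} for Gaussians; the general log-concave-at-infinity case then follows by comparison with a Gaussian.
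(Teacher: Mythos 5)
Your proof is correct and follows essentially the same route as the paper's: diagonalize $\cL_{\gamma_d}$ in the multivariate Hermite basis, note that eigenvalue $m$ has multiplicity $\binom{m+d-1}{d-1}$, and invert the cumulative count (a degree-$d$ polynomial in $m$) to get $\rho_n = \mathcal{O}(n^{1/d})$. The only difference is cosmetic: you spell out the inversion step $N(m-1) < n$ explicitly, whereas the paper states it as the equivalence $\rho_n = k \iff \sum_{i<k} S_i < n \leq \sum_{i\le k} S_i$ and concludes directly from the polynomial growth of the partial sums.
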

\begin{proof}
	For a multi-index $\alpha = (k_1, \ldots, k_d)$, the multivariate Hermite polynomial is for any $x=(x_1,\dots,x_d)\in \R^d$:
	$$ H_{\alpha}(x) = \prod_{i=1}^d H_{k_i}(x_i) $$
	where for $k_i \in \mathbb{N}$, $H_{k_i}$ denotes the usual one-dimensional $k_i$th-order Hermite polynomial.
	It is well known that multivariate Hermite polynomials form an orthogonal basis of $L^2(\gamma_d)$ and that we have $$\cL_{\gamma_d} H_{\alpha} = \vert \alpha \vert H_{\alpha}$$ where $\vert \alpha \vert = \sum_{i=1}^d k_i$ \citep[Section 2.7.1]{bakry2013analysis}.
	Therefore any $k \in \mathbb{N}$ is an eigenvalue of $\cL_{\gamma_d}$, with multiplicity equal to
	$$S_k = {{k + d -1} \choose {d-1}}$$
	which is the number of solutions of the equation $k = \sum_{i=1}^d k_i$, where $\{k_i\}_{i=1}^d$ takes its values in the set of nonnegative integers. This means, that \begin{equation*}
	\rho_n = k \hspace{4pt} \Longleftrightarrow \hspace{4pt} \sum_{i=1}^{k-1} S_i < n \leq \sum_{i=1}^{k} S_i.
	\end{equation*}
	Since $S_i$ is a polynomial in $i$ of degree $d-1$, then $\sum_{i=1}^k S_i$ is a polynomial in $k$ of degree $d$. 
	Therefore we have $\rho_n = \mathcal{O}(n^{1/d})$.
\end{proof}
\begin{corollary} \label{cor:eigenvalue_decay}
	The conclusion of \Cref{lem:eigenvalue_decay} holds for any Schrödinger operator on $L_2(\R^d)$, defined for $L > 0$ as follows,
	\begin{equation}\label{eq:gaussian_Schrödinger}
	\Sop_{\nu_L} := - \Delta + \frac{1}{4} L^2 \Vert x\Vert^2 - \frac{1}{2}dL.
	\end{equation}
\end{corollary}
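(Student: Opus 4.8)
The plan is to recognize $\Sop_{\nu_L}$ as the ground-state (Doob) transform of an Ornstein--Uhlenbeck-type operator, so that its spectrum coincides, up to the explicit factor $L$, with the spectrum of $\cL_{\gamma_d}$ already computed in \Cref{lem:eigenvalue_decay}. Concretely, for fixed $L>0$ let $\nu_L=\mathcal{N}(0,L^{-1}I_d)$, whose density is proportional to $e^{-V_L}$ with $V_L(x)=\tfrac{L}{2}\|x\|^2$, and let $\cL_{\nu_L}:=-\Delta+L\ps{x,\nabla}$ be the associated weighted operator on $L^2(\nu_L)$. The two ingredients I would assemble are: (i) a unitary equivalence between $\cL_{\nu_L}$ on $L^2(\nu_L)$ and $\Sop_{\nu_L}$ on $L^2(\R^d,\d x)$; and (ii) a unitary dilation relating $\cL_{\nu_L}$ to $\cL_{\gamma_d}$.

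For (i), I would use the map $U:L^2(\nu_L)\to L^2(\R^d,\d x)$, $Uf=f\,e^{-V_L/2}$, which is unitary (up to the normalization constant of $\nu_L$, which does not affect the conjugation). Differentiating $U^{-1}\phi=\phi\,e^{V_L/2}$ twice and collecting terms gives $U\cL_{\nu_L}U^{-1}=-\Delta+W_L$ with the standard ground-state potential $W_L=\tfrac14\|\nabla V_L\|^2-\tfrac12\Delta V_L$; since $\nabla V_L=Lx$ and $\Delta V_L=dL$, this is $W_L(x)=\tfrac14L^2\|x\|^2-\tfrac12 dL$, i.e.\ exactly $\Sop_{\nu_L}$. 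Hence $\Sop_{\nu_L}$ and $\cL_{\nu_L}$ have the same spectrum. For (ii), the dilation $D:L^2(\gamma_d)\to L^2(\nu_L)$, $(Dg)(x)=g(\sqrt L\,x)$, is unitary and intertwines $\cL_{\gamma_d}$ with $L^{-1}\cL_{\nu_L}$; equivalently, applying $\cL_{\nu_L}$ to the scaled Hermite polynomials $H_\alpha(\sqrt L\,\cdot)$ shows directly that its eigenvalues are $\{Lk:k\in\N\}$ with the same multiplicities $S_k=\binom{k+d-1}{d-1}$ as in \Cref{lem:eigenvalue_decay}. Combining (i) and (ii), the $n$-th smallest eigenvalue of $\Sop_{\nu_L}$ equals $L\rho_n$, with $\rho_n$ the sequence of \Cref{lem:eigenvalue_decay}, so $\rho_n=\mathcal{O}(n^{1/d})$ there yields the claimed $\mathcal{O}(n^{1/d})$ decay for $\Sop_{\nu_L}$ at fixed $L$.

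The main obstacle is purely operator-theoretic bookkeeping rather than any hard estimate: one must make sure the formal conjugation in (i) lifts to a genuine unitary equivalence of the \emph{self-adjoint} realizations (both operators being essentially self-adjoint on a common core, e.g.\ $C_c^\infty(\R^d)$, or the linear span of the corresponding Hermite / Hermite-function basis), so that equality of spectra is legitimate; likewise one should confirm that the dilation $D$ maps the relevant core onto itself. Once these identifications are in place, everything reduces to the elementary scaling computations above together with \Cref{lem:eigenvalue_decay}.
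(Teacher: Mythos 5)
Your argument is essentially the same as the paper's: both identify $\Sop_{\nu_L}$ as the Schr\"odinger form of the Ornstein--Uhlenbeck generator $\cL_{\nu_L}=-\Delta+L\ps{x,\nabla}$ on $L^2(\nu_L)$ (via the ground-state / Doob transform, for which the paper cites \citep[Proposition~4.7]{pavliotis2014stochastic}), and both use the dilation $g\mapsto g(\sqrt L\,\cdot)$ to transfer the spectrum of $\cL_{\gamma_d}$ to that of $\cL_{\nu_L}$. In fact you are slightly more careful than the paper on the second step: the paper asserts $\cL_{\gamma_d}=R_L^*\cL_{\nu_L}R_L$, whereas the correct intertwining is $R_L^*\cL_{\nu_L}R_L = L\,\cL_{\gamma_d}$ (equivalently the eigenvalues of $\cL_{\nu_L}$ are $Lk$, $k\in\N$, with the Hermite multiplicities), which is what you write. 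Since $L$ is a fixed positive constant this discrepancy does not affect the $\mathcal{O}(n^{1/d})$ conclusion, but your version is the accurate one. Your closing caveat about checking that the formal conjugations extend to unitary equivalences of the self-adjoint realizations on a common core is a reasonable precaution that the paper dispatches by citation; it does not represent a gap in either argument.
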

\begin{proof}
	Let $\nu_L \sim \mathcal{N}(0, \frac{1}{L}I_d )$ be a normal measure, and let $\cL_{\nu_L} = - \Delta + L \ps{x , \nabla}$ be the associated Ornstein-Uhlenbeck operator on $L_2(\nu_L)$. It is easy to see that the map $R_L : L_2(\gamma_d) \rightarrow L_2(\nu_L)$ given by:
	\[
	(R_L \phi)(x) = \phi(\sqrt{L} x)  
	\]
	is unitary, and that $\cL_{\gamma_d} = R_L^* \cL_{\nu_L} R_L$. Furthermore, it is standard that $\cL_{\nu_L}$ is unitarily equivalent to $\Sop_{\nu_L}$, see \citep[Proposition 4.7]{pavliotis2014stochastic}.
	Since unitary equivalence preserves the spectrum, the thesis follows.
\end{proof}
\begin{lemma} \label{lem:main_lemma_no_exp_decay}
	Suppose that $\pi \propto e^{-V}$ where $V$ is a $C^2(\R^d)$ potential such that $\nabla V$ is $L$-Lipschitz and assume that $\cL_{\pi}$ has discrete spectrum. If we denote by $\tilde{\lambda}_n$ the $n$-th smallest eigenvalue of the operator $\cL_{\pi}$ (counting the multiplicities), then 
	\begin{equation*}
	\tilde{\lambda}_n \leq \mathcal{O}(n^{1/d})
	\end{equation*}
\end{lemma}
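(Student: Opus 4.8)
The plan is to conjugate $\cL_{\pi}$, acting on $L^2(\pi)$, to a Schr\"odinger operator on $L^2(\R^d,\mathrm{d}x)$ via the ground-state substitution, and then to compare with the harmonic oscillator, whose eigenvalue growth is already established in \Cref{lem:eigenvalue_decay,cor:eigenvalue_decay}. Concretely, for $\pi\propto e^{-V}$ the map $Uf = c\,e^{-V/2}f$ (with $c>0$ a normalizing constant) is a unitary $L^2(\pi)\to L^2(\R^d,\mathrm{d}x)$, and a direct computation starting from $\cL_{\pi}=-\Delta+\langle\nabla V,\nabla\,\cdot\,\rangle$ gives $U\cL_{\pi}U^{-1}=\mathcal{H}_V:=-\Delta+W_V$ with $W_V=\tfrac14\|\nabla V\|^2-\tfrac12\Delta V$. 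Since $U$ is unitary, $\cL_{\pi}$ and $\mathcal{H}_V$ have the same spectrum (in particular $\mathcal{H}_V$ is self-adjoint with discrete spectrum), so it suffices to prove $\tilde{\lambda}_n(\mathcal{H}_V)=\mathcal{O}(n^{1/d})$.

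\textbf{Key step: bounding the Schr\"odinger potential.} The crucial use of the hypothesis is that, for $V\in C^2$, $\nabla V$ being $L$-Lipschitz is equivalent to $\|\Hess V(x)\|_{op}\le L$ for all $x$, i.e.\ $-L\,I\preceq \Hess V(x)\preceq L\,I$. This simultaneously yields two facts: first, $\|\nabla V(x)\|\le \|\nabla V(0)\|+L\|x\|$, hence $\tfrac14\|\nabla V(x)\|^2\le \tfrac12\|\nabla V(0)\|^2+\tfrac{L^2}{2}\|x\|^2$; and second, $|\Delta V(x)|=|\operatorname{tr}\Hess V(x)|\le dL$. Combining them, $W_V(x)\le \tfrac{L^2}{2}\|x\|^2+C_0$ with $C_0=\tfrac12\|\nabla V(0)\|^2+\tfrac{dL}{2}$, and moreover $W_V$ is continuous and bounded below (by $-\tfrac{dL}{2}$). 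Without the Lipschitz assumption the term $-\tfrac12\Delta V$ would be uncontrolled, so this is where the hypothesis enters decisively.

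\textbf{Comparison and conclusion.} Set $\mathcal{H}_0:=-\Delta+\tfrac{L^2}{2}\|x\|^2+C_0$. Since $W_V$ is bounded below and $W_V\le \tfrac{L^2}{2}\|x\|^2+C_0$ pointwise, the form domain of $\mathcal{H}_0$ is contained in that of $\mathcal{H}_V$ and $\langle g,\mathcal{H}_V g\rangle\le\langle g,\mathcal{H}_0 g\rangle$ on that domain (both operators are essentially self-adjoint on $C_c^{\infty}(\R^d)$ by standard results for potentials in $L^\infty_{loc}$ bounded below). The min-max principle then gives $\tilde{\lambda}_n(\mathcal{H}_V)\le\tilde{\lambda}_n(\mathcal{H}_0)$ for all $n$ — this is exactly where the assumption that $\cL_{\pi}$ has discrete spectrum is used, to ensure these min-max values are genuine eigenvalues. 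Finally, choosing $L'=\sqrt{2}\,L$ so that $\tfrac14 L'^2=\tfrac{L^2}{2}$, we have $\mathcal{H}_0=\Sop_{\nu_{L'}}+\big(C_0+\tfrac{dL'}{2}\big)$ with $\Sop_{\nu_{L'}}$ as in \Cref{cor:eigenvalue_decay}, whose $n$-th eigenvalue is $\mathcal{O}(n^{1/d})$; the additive constant does not change the order (as $n^{1/d}\ge 1$), so $\tilde{\lambda}_n(\cL_{\pi})=\tilde{\lambda}_n(\mathcal{H}_V)\le\tilde{\lambda}_n(\mathcal{H}_0)=\mathcal{O}(n^{1/d})$.

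\textbf{Main obstacle.} The only genuine insight is the observation that the Lipschitz-gradient hypothesis forces $\Hess V$, and hence $\Delta V$, to be bounded, which is precisely what allows the Schr\"odinger potential $W_V$ to be placed underneath a quadratic; everything else is routine. The remaining work is bookkeeping: verifying the ground-state conjugation identity and checking the (standard) self-adjointness and form-domain facts needed to invoke min-max. If one prefers to avoid the abstract form comparison, an equivalent route is to test $\mathcal{H}_V$ against the span of the first $\asymp k^d$ Hermite functions of $\mathcal{H}_0$, bound the resulting Rayleigh quotients by $\mathcal{O}(k)$ using the pointwise bound on $W_V$, and then apply min-max with $n\asymp k^d$.
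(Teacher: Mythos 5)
Your proposal is correct and follows essentially the same route as the paper: conjugate $\cL_{\pi}$ to the Schr\"odinger operator $\Sop_{\pi}=-\Delta+\tfrac14\|\nabla V\|^2-\tfrac12\Delta V$ via the ground-state transform, use the $L$-Lipschitz hypothesis to bound the resulting potential above by a harmonic-oscillator potential, invoke the form-domain/min-max comparison, and conclude from the $\mathcal{O}(n^{1/d})$ growth established for $\Sop_{\nu_L}$ in \Cref{cor:eigenvalue_decay}. The only cosmetic difference is in handling the constant $\nabla V(0)$: the paper first shows $V$ attains a minimum (using Lipschitz-ness plus integrability of $e^{-V}$), centers there so $\nabla V(0)=0$, and compares against $\Sop_{\nu_L}+dLI$; you instead keep $\nabla V(0)$ as a constant and compare against a slightly wider oscillator with frequency $\sqrt{2}L$, which is equally valid and avoids the centering step.
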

\begin{proof}
	It is easy to show, that if $\nabla V$ is Lipschitz and $\int e^{-V(x)}\d x  $ is finite, then $\lim_{|x| \rightarrow \infty} V(x)  = \infty$. This is a consequence of the fact that $\lim_{R \rightarrow \infty} \int_{|x| > R} e^{-V(x)} \d x   = 0$ and that on the set $\{y : ||x - y|| \leq 1, \langle \nabla V(x), y -x \rangle \leq 0 \}$ we have $ - V(y) \geq -V(x) - \frac{L}{2} \Vert x-y\Vert_2^2$. Therefore $V(x)$ has to diverge to $\infty$ as $|x| \rightarrow \infty$. It follows, that $V$ attains a minimum on $\R^d$. Assume, without loss of generality, that $V$ attains its (not necessarily unique) minimum at $0$. The operator $\cL_{\pi}$ is unitarily equivalent to the Schrödinger operator (Equation 4.130, \citet{pavliotis2014stochastic}) 
	\begin{equation*}
	\Sop_{\pi} = - \Delta + \frac{1}{4}\Vert \nabla V \Vert^2 - \frac{1}{2} \Delta V
	\end{equation*}
	on $L_2(\R^d)$. Since unitary equivalence preserves the spectrum, this operator has by assumption a discrete spectrum.
	We will show that the eigenvalues of $\Sop_{\nu_L} + dL I$ dominate those of $\Sop_{\pi}$ where $\Sop_{\nu_L}$ is defined in \eqref{eq:gaussian_Schrödinger}. For any $\phi \in D(\Sop_L)$, we have
	\begin{align*}
	+\infty > \langle \phi, \Sop_{\nu_L} + dLI \phi \rangle_{L_2(\R^d)} & = \langle \phi, \Sop_{\pi} \phi \rangle_{L_2(\R^d)} + \frac{1}{4}\int \left(L^2 \Vert x\Vert^2 - \Vert \nabla V(x) \Vert^2 + 2dL + 2\Delta V(x) \right) \phi(x)^2 \d x   \\
	& \geq \langle \phi, \Sop_{\pi} \phi \rangle_{L_2(\R^d)},
	\end{align*}
	where the last inequality follows from the $L$-Lipschitz regularity of $\nabla V$. 
	It follows from the above calculation that the domain of the quadratic form $\langle \phi, \Sop_{\nu_L} + dLI \rangle_{L_2(\R^d)}$, denoted $Q(\Sop_{\nu_L} + dLI)$ is contained in the domain of the quadratic form $\langle \phi, \Sop_{\pi} \phi \rangle_{L_2(\R^d)}$, denoted $Q(\Sop_{\pi})$. It is now a simple consequence of the the Rayleigh-Ritz variational formula that the eigenvalues of $\Sop_{\nu_L} + dLI$ dominate those of $\Sop_{\pi}$ \citep[Corollary 8.6]{pankrashkin2014spectral}, that is, $\tilde{\lambda}_n \leq \rho_n + dL$
	where $\rho_n$ is the $n$-th smallest eigenvalue of the operator $\Sop_{\nu_L}$. By \Cref{cor:eigenvalue_decay}, we have that $\rho_n = \mathcal{O}(n^{1/d})$, which concludes the proof.
\end{proof}

\begin{proof}[Proof of \Cref{cor:no_exp_decay}]

Let $(l_n)_{n=1}^{\infty}$ be the eigenvalues of $T_{k_{\pi}, \pi}$ and $(\lambda_n)_{n=1}^{\infty}$ be the eigenvalues of $\cL_{\pi}^{-1}$ both in descending order. Since $T_{k_{\pi}, \pi}$ is a trace class operator \citep[Theorem 4.27]{steinwart2008support}, we know that $\sum_{n=1}^{\infty} l_n < + \infty$. By \Cref{cor:eigenvalue_decay} exponential convergence near equilibrium would imply that  $\sum_{n=1}^{\infty} \lambda_n < + \infty $ as well. On the other hand we have that $\lambda_n = \tilde{\lambda_n}^{-1}$, where $\tilde{\lambda}_n$ are the positive eigenvalues of $\cL_{\pi}$ in a nondecreasing order. By \Cref{lem:main_lemma_no_exp_decay} we have $\tilde{\lambda}_n = \mathcal{O}(n^{1/d})$, which implies that there exists an $N \in \mathbb{N}$ and a constant $C > 0$, such that for all $n \geq N$ we have $\tilde{\lambda}_n \leq Cn^{1/d}$ and thus $\lambda_n \geq Cn^{-1/d}$. This contradicts the summability of $\lambda_n$, and concludes the proof.
\end{proof}

\subsection{Proof of \Cref{lem:no_cst}}\label{sec:proof_no_cst}

Let $\cH_0=\left\{  \sum_{i=1}^{m}
\alpha_i \kpi(\cdot, x_i); m \in \mathbb{N}; \alpha_1, \dots, \alpha_m \in \R; x_1, \dots, x_m \in \X
\right\}$.
Recall that $\Hkpi$ is the set of functions on $\X$ for which there exists an
$\cH_0$-Cauchy sequence $(f_n)_n\in \cH_0^\N$ converging pointwise to $f$. Let $c\in \R$.
Let $f_0=\sum_{i=1}^{m}
\alpha_i \kpi(\cdot, x_i)\in \cH_0$ and assume that $f_0=c$. Integrating $f_0$ w.r.t.\ $\pi$ yields 
\begin{equation*}
c=\int f_0(x)d\pi(x)=\sum_{i=1}^{m}\alpha_i \int \kpi(x_i,x)d\pi(x)=0.
\end{equation*}
Hence $c=0$. Similarly, let $f\in \Hkpi$ such that $f=c$, fix $(f_n)_n\in \cH_0^\N$ such that $\|f_n- f\|_{\Hkpi}\to 0$.

Under \Cref{ass:integrable}, the injection $\iota:\Hkpi\to L^2(\pi)$ is continuous, linear and bounded. Indeed, for any $g\in \Hkpi$
\begin{equation*}
\| \iota g\|^2_{L^2(\pi)}=\int g(x)^2 d\pi(x) = \int \ps{g,\kpi(x,\cdot)}^2_{\Hkpi} d\pi(x) \le \|g\|^2_{\Hkpi}\int \kpi(x,x)d\pi(x)=:c_{\pi}^2\|g\|^2_{\Hkpi},
\end{equation*}
where $c_\pi<+\infty$ exists by \Cref{ass:integrable}. So $\|f_n- f\|_{\Hkpi}\to 0$ implies that $\|f_n-f\|_{L^2(\pi)}\to 0$. Since $\int f_n(x)d\pi(x)=0$, by the reproducing property and Cauchy-Schwartz inequality, we have:
\begin{equation*}
|c|= \left|\int f(x)d\pi(x)\right|= \left|\int (f_n-f)(x)d\pi(x)\right|\le \int | \ps{f_n-f,\kpi(x,\cdot)}_{\Hkpi}|d\pi(x)\le \|f_n-f\|_{\Hkpi} c_{\pi}^2.
\end{equation*}
which shows that $c=0$ since $\|f_n- f\|_{\Hkpi}\to 0$.

\subsection{Proof of \Cref{prop:stable support}}\label{sec:proof_stable_support}

Rather than providing a proof only for the more restrictive smooth submanifolds considered in \Cref{prop:stable support}, we express the result below for general closed nonempty sets. This formulation involves contingent cones which are crucial quantities for studying the invariance of non-smooth sets. They can be informally described as the collection of directions at $x$ that point either inward or that are tangent to the set $\cM$. More formally, for $d_\cM(y)$ the distance of $y\in\X$ to $\cM$, $T_\cM(x):=\{v|\liminf_{h\rightarrow 0^+} d_\cM(x+h v)/h=0\}$. Non-smooth sets were experimentally met whenever we considered Gaussian mixtures with more than three components for which there was no simple axis of symmetry (as on \Cref{fig:skewed_mog}). These are cases more intricate than the ones considered in \Cref{lem:Flow-invariant symmetry}.

\begin{proposition}\label{prop:stable_support_nonsmooth}
	Let $\cM\subset \R^d$ be a closed nonempty set and $\mu_0\in P_c(\X)$ with $\supp(\mu_0)\subset\cM$. Assume that, for a deterministic $(v_{\mu_t})_{t\ge 0}$ satisfying the Caratheodory-Lipschitz Assumptions \ref{ass:caratheodory_field_contr}-\ref{ass:lip_field_wrtMeasure_contr}, we have $v_{\mu_t}(x)\in T_\cM(x)$ where $ T_\cM(x)$ is the contingent cone of $\cM$ at $x\in \cM$. Then $\cM$ is flow-invariant for \eqref{eq:continuity_eq}.
\end{proposition}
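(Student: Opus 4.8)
The plan is to pass from the measure path to the characteristic curves of the flow, and then to apply a Nagumo-type invariance theorem. First I would invoke the Cauchy--Lipschitz theory recalled in \Cref{sec:Cauchy-Lip_flow} (adapted from \citet{Bonnet2021DiffInclWass}): under Assumptions \ref{ass:caratheodory_field_contr}--\ref{ass:lip_field_wrtMeasure_contr} the continuity equation \eqref{eq:continuity_eq} started at $\mu_0\in\cP_c(\X)$ has a unique solution $(\mu_t)_{t\ge0}$, and this solution is transported by the characteristic flow, i.e.\ $\mu_t=(\Phi^t)_{\#}\mu_0$, where for each $x$ the curve $s\mapsto\Phi^s(x)$ is the unique Carath\'eodory solution of $\dot y(s)=v_{\mu_s}(y(s))$, $y(0)=x$, globally defined on $[0,\infty)$ thanks to the sublinear growth \ref{ass:sublinear_growth_contr}. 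The nonlinear dependence of $v$ on $\mu_t$ is not an obstacle here: one first fixes the solution path $(\mu_s)_s$, whose existence and uniqueness come from the fixed-point argument behind \Cref{sec:Cauchy-Lip_flow}, and only then reads $f(s,y):=v_{\mu_s}(y)$ as a genuine non-autonomous vector field to which the superposition principle applies.

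Next I would reduce the claim to a trajectory-wise statement. Since $\Phi^t:\X\to\X$ is continuous and $\cM$ is closed, $\supp(\mu_t)=\supp\big((\Phi^t)_{\#}\mu_0\big)\subseteq\overline{\Phi^t(\supp\mu_0)}$, so it suffices to show that $\Phi^t(x)\in\cM$ for all $t\ge0$ and all $x\in\supp(\mu_0)\subseteq\cM$. Fix such an $x$ and look at $f(s,y)=v_{\mu_s}(y)$: by \ref{ass:caratheodory_field_contr} it is Carath\'eodory, by \ref{ass:sublinear_growth_contr} it has sublinear growth, and by \ref{ass:lip_field_contr} (together with the fact that $\supp(\mu_s)$ stays in a fixed ball on $[0,T]$) it is Lipschitz in $y$ on the relevant compact, uniformly in $s$. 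The hypothesis of the proposition is exactly the tangency condition $f(s,y)\in T_\cM(y)$ for every $y\in\cM$ and every $s\ge0$, with $T_\cM(y)=\{v:\liminf_{h\to0^+}d_{\cM}(y+hv)/h=0\}$.

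The core step is then the Nagumo--Brezis characterization of flow-invariant closed sets: for a Carath\'eodory right-hand side that is locally Lipschitz in the state variable, the contingent-cone condition $f(s,y)\in T_\cM(y)$ on $\cM$ implies that the unique solution $s\mapsto\Phi^s(x)$ issued from $x\in\cM$ remains in $\cM$ for all $s\ge0$. Hence $\Phi^t(x)\in\cM$ for every $t\ge0$ and every $x\in\supp(\mu_0)$, and combined with the reduction above this gives $\supp(\mu_t)\subseteq\cM$ for all $t\ge0$, i.e.\ $\cM$ is flow-invariant for \eqref{eq:continuity_eq}.

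The main obstacle is not the invariance step itself — it is classical, and the contingent cone is the \emph{right} object here precisely because $f$ is locally Lipschitz in the state variable, so that Nagumo's tangency condition is then necessary and sufficient for invariance — but rather packaging cleanly the characteristics representation $\mu_t=(\Phi^t)_{\#}\mu_0$ in the nonlinear setting and checking that Assumptions \ref{ass:caratheodory_field_contr}--\ref{ass:lip_field_wrtMeasure_contr} feed the non-autonomous ODE theory correctly (no finite-time blow-up, Lipschitz uniqueness of characteristics, continuity of $\Phi^t$). As a sanity check one should verify the tangency hypothesis in the concrete instances of \Cref{lem:Flow-invariant symmetry}: there $\cM$ is an affine subspace, so $T_\cM(y)=\cM-y$ is a fixed linear subspace, $v_{\mu}(y)=\mathbb{E}_{x\sim\mu}[\nabla_2\kpi(x,y)]$ is a convex combination of vectors $\nabla_2\kpi(x,y)$ shown to lie in it whenever $x,y\in\cM$, and the condition $v_{\mu_t}(y)\in T_\cM(y)$ then holds along the solution since $\supp(\mu_t)$ stays in $\cM$ by the invariance argument itself.
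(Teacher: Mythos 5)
Your proof takes essentially the same route as the paper's: reduce the measure-level claim to a trajectory-wise one, apply a Nagumo/viability theorem using the contingent-cone tangency condition to conclude that characteristics started in $\cM$ stay in $\cM$, and transfer this back to $\supp(\mu_t)$. The only cosmetic difference is that you phrase the transfer via the push-forward representation $\mu_t=(\Phi^t)_{\#}\mu_0$, whereas the paper invokes the superposition principle of \citet{Ambrosio2014Superp} (which gives the weaker but sufficient containment $\supp(\mu_t)\subseteq\{x(t): x(\cdot)\text{ a characteristic}\}$), and the paper cites \citet[Theorem 10.4.1]{Aubin1990SVA} for the invariance step where you refer to the Nagumo--Brezis theorem; these are the same class of results.
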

\begin{proof} Consider any $x_0\in\supp(\mu_0)$. By \Cref{ass:sublinear_growth_contr} and Gronwall's lemma, $x'(t)=v_{\mu_t}(x(t))$ can only generate trajectories that do not explode in finite time, i.e.\ there is no $T<\infty$ such that $\limsup_{t\rightarrow T^-} \|x(t)\|=+\infty$. Since $v(\mu)(\cdot)$ is continuous by \Cref{ass:caratheodory_field_contr}, and $v_{\mu_t}(x)\in T_\cM(x)$, we can apply an invariance result \citep[Theorem 10.4.1]{Aubin1990SVA} stating that all the generated trajectories $x(\cdot)$ stay within $\cM$ at all times. This can be informally understood as using directions that are always tangent or pointing within $\cM$ cannot push $x(\cdot)$ outside of $\cM$.

The superposition principle \citep[Theorem 3.4]{Ambrosio2014Superp} states that $\supp(\mu_t)$ is contained in the set of positions $x(t)$ reached by all the trajectories satisfying $x'(t)=v_{\mu_t}(x(t))$ for some $x_0=x(0)\in\supp(\mu_0)$. Consequently $\supp(\mu_t)\subset \cM$, namely $\cM$ is flow-invariant for \eqref{eq:continuity_eq}.
\end{proof}

\subsection{Proof of \Cref{lem:Flow-invariant symmetry}}\label{sec:proof_flow_inv_sym}

Assume that $d\pi(x) \propto e^{-V(x)}d\lambda(x)$, then $s(x)=-\nabla V(x)$. Translating and reindexing, w.l.o.g.\ we can take $\cM=\Sp(e_I,\dots,e_d)$. Thus by symmetry of $\pi$ w.r.t.\ $\cM$, we have that $s_i(x)=(\nabla \log \pi(x))_i=0$ for all $i< I$ and $x\in\cM$, so $s(x)\in\cM$. Hence, for every $x\in\cM$, $\partial_j s_i(x)=0$, for $i< I$ and $I \le j\le d$. Therefore, for any $u\in \cM$ $[J(s)(x)] u=[J(s)(x)]_{|\cM}u$, i.e.\ $\cM$ is left invariant under the action of any Jacobian matrix $[J(s)(x)]$ for $x\in\cM$.

Consider a radial kernel $k(x,y)=\phi(\|x-y\|^2/2)$ with $\phi\in C^3(\R)$. Then $\nabla_2 k(x,y)=(x-y)\phi'(\|x-y\|^2/2)$ and $H_{1}k(x,y)=\phi'(\|x-y\|^2/2) I_d + (x-y)\otimes (x-y) \phi''(\|x-y\|^2/2)$. As, for all $x,y\in\cM$, $x-y\in\cM$, $\nabla_2 k(x,y)\in\cM$ and $\cM$ is left invariant under the action of $H_{1}k(x,y)$. Since $\div_{1}\nabla_{2}k(x,y)$ is radial as well, $\nabla_2 ( \div_{1}\nabla_{2}k(x,y))\in \mathrm{span}(x-y)$. We have thus shown that all the terms in \eqref{eq:nabla2_kpi_2} belong to $\cM$ if $x,y\in\cM$, so $\nabla_2 \kpi(x,y)\in\cM$.

Consequently $\nabla_{W_2}\cF(\mu)(y)=\mathbb{E}_{x \sim \mu}[\nabla_{2}\kpi(x,y)]\in \cM$ for any $\mu$ such that $\supp(\mu)\subset \cM$. Since the contingent cone of a subspace at any point is equal to the subspace itself, we conclude by applying \Cref{prop:stable_support_nonsmooth} to $\cM$.








\subsection{Proof of \Cref{lem:diffusion_background}} \label{sec:proof_diffusion_background}

We start by noting, that $\tilde{\cL}_{\pi}(\phi) = \nabla V \cdot \nabla \phi$ is well-defined on $C_c^{\infty}(\R^d)$. Denote $Z = \int_{\R^d} e^{-V(x)}\d x  $. Using integration by parts, we have for $\phi,\psi \in C_c^{\infty}(\R^d)$:
\begin{align*}
    \langle \tilde{\cL}_{\pi} \phi, \psi \rangle_{L_2(\pi)} &=  \frac{1}{Z}\int_{\R^d} (\nabla V \cdot \nabla \phi(x) - \Delta \phi(x))\psi(x) e^{-V(x)} \d x   \\ &= \frac{1}{Z} \int_{\R^d} (\nabla V \cdot \nabla \phi(x)) \psi(x) e^{-V(x)} + \nabla \phi(x) \cdot (\nabla \psi(x) e^{-V(x)}) \d x   \\
    & = \frac{1}{Z} \int \langle \nabla \phi(x), \nabla \psi(x) \rangle e^{-V(x)} \d x   = \langle \nabla \phi, \nabla \psi \rangle_{L_2(\pi)}.
\end{align*}
It follows that $\tilde{\cL}_{\pi}$ is symmetric, and since $\langle \tilde{\cL}_{\pi} f,  f \rangle = \Vert \nabla f \Vert^2_{L_2(\pi)}$, it is positive as well. We can now define $\cL_{\pi}$ as the Friedriechs extension of $\tilde{\cL}_{\pi}$ over $L_2(\pi)$ \citep[Definition 2.17]{pankrashkin2014spectral}. This means that, when we consider the bilinear form $F(\phi,\psi) = \langle \tilde{\cL}_{\pi} \phi, \psi\rangle$, there exists the smallest closed bilinear form $\overline{F}$ on $L_2(\pi)$ which extends it \citep[Proposition 2.16]{pankrashkin2014spectral}. The operator associated with $\overline{F}$ is a self-adjoint extension of $\tilde{\cL}_{\pi}$ on $L_2(\pi)$, which is also positive. We denote this extension by $\cL_{\pi}$. Furthermore, the domain of $\cL_{\pi}$ is by definition contained in the domain of $\overline{F}$, which is the closure of $C_c^{\infty}$ for the norm \eqref{eq:bilinear_form_norm} \citep[Proposition 2.8]{pankrashkin2014spectral}. The claim of density of $C_c^{\infty}(\R^d)$ in $\mathcal{D}(\cL_{\pi})$ for the norm \eqref{eq:bilinear_form_norm} now follows.

Recall, that we have shown that for $\phi \in C_c^{\infty}(\R^d)$ we have $\langle \phi , \cL_{\pi} \phi \rangle = \Vert \nabla \phi \Vert_{L^2(\pi)}^2$. Therefore $$\Vert \phi \Vert_{\cL_{\pi} }^2 = \left( \Vert \nabla \phi \Vert_{L^2(\pi)}^2 + \Vert \phi \Vert_{L^2(\pi)}^2 \right)$$ which is the $W^{1,2}(\pi)$ Sobolev norm. This means, that the domain of the closure $\overline{F}$ is equal to $W_0^{1,2}(\pi)$, and hence $\mathcal{D}(\cL_{\pi}) \subset W_0^{1,2}(\pi)$. In fact, one can establish that $\mathcal{D}(\cL_{\pi}) = W_0^{1,2}(\pi) = W^{1,2}(\pi)$, though we will not need this. It now follows that for any $f \in \mathcal{D}(\cL_{\pi})$ there exists a weak derivative $\nabla f \in L_2(\pi)$, and it is easy to establish the equality $\Vert \nabla f \Vert_{L_2(\pi)}^2 = \langle f ,\cL_{\pi}f \rangle_{L_2(\pi)}$ by approximating $f$ in the norm $\Vert \cdot \Vert_{W^{1,2}(\pi)}$. 


It is now easy to show, that the kernel of $\cL_{\pi}$ consists of constant functions. If $\cL_{\pi} f = 0$, we have $\Vert \nabla f \Vert_{L^2(\pi)}^2 = \langle f, \cL_{\pi} f \rangle_{L^2(\pi)} = 0$.  One can deduce, that if the weak derivative is zero then the function is constant by a standard argument using mollifiers that we sketch below:

Define the convolution in $L^2(\pi)$ by $f \star g (x)= \int f(x-y) g(y) d\pi(y)$. By standard properties of the convolution, for any $\phi \in C_c^{\infty}$, we have $f \star \phi \in C^{\infty} \cap L^2(\pi)$ and $\nabla (f \star \phi) = f \star (\nabla \phi)$, and if $f$ has a weak derivative then by definition $f \star (\nabla \phi) = (\nabla f) \star \phi$. Let $\phi_n$ be now a  mollifier then again, by standard arguments, $f \star \phi_n$ converges to $f$ in $L_2(\pi)$, and also $f \star\phi_n$ is smooth with $\nabla (f \star \phi_n) = (\nabla f ) \star \phi_n = 0$, hence $f \star \phi_n$ is constant. It follows that  $f$ is constant.

We are left to show that the set $\{ (\phi, \cL_{\pi} \phi) \in L^2(\pi) \times L^2(\pi) : \phi \in C_c^{\infty}(\R^d) \}$ is dense in the graph of $L^2(\pi)$ for the topology inherited from the normed space $L^2(\pi) \times L^2(\pi)$. The domain of the closed form $\overline{F}$ with the norm \eqref{eq:bilinear_form_norm} is a Hilbert space \citep[Definition 2.5]{pankrashkin2014spectral}, with an inner product denoted by $\langle  \cdot , \cdot \rangle_{\cL_{\pi} }$. We have that for any $f \in \mathcal{D}(\cL_{\pi})$ there exists a sequence $(\phi_n)_{n=1}^{\infty}$ such that $\lim_{n \rightarrow \infty} \Vert \phi_n - f \Vert_{\cL_{\pi}} = 0$. By the Cauchy-Schwarz inequality, we have that for any $\psi \in C_c^{\infty}$:
\begin{equation*}
    \vert \langle \psi , \cL_{\pi} (f - \phi_n) \rangle_{L^2(\pi)} + \langle \psi , f - \phi_n \rangle_{L^2(\pi)} \vert = \vert \langle \psi, f - \phi_n \rangle_{\cL_{\pi}} \vert \leq \Vert \psi \Vert_{\cL_{\pi}} \Vert f - \phi_n \Vert_{\cL_{\pi}},
\end{equation*}
and since $\phi_n$ converges to $f$ in $L^2(\pi)$ strongly, we obtain $ \lim_{n \rightarrow \infty} \langle \psi , \cL_{\pi} (f - \phi_n) \rangle = 0$ for all $\psi \in C_c^{\infty}(\R^d)$. Since $C_c^{\infty}(\R^d)$ is dense in $L^2(\pi)$, it follows that $\cL_{\pi} \phi_n$ converges weakly to $\cL_{\pi} f$. By a version of Mazur's lemma on weakly and strongly closed convex sets \citep[Lemma 10.19]{renardy2006introduction}, it follows that there exists a sequence of finite convex combinations of $\cL_{\pi} \phi_n$ converging strongly to $\cL_{\pi} f$. More precisely, there exists a function $N : \mathbb{N} \rightarrow \mathbb{N}$ and a sequence of sets of real numbers $\{ \alpha_{k,m}\}_{m=k}^{N(k)}$ such that $\alpha_{k,m} \geq 0$, $\sum_{m=k}^{N(k)} \alpha_{k,m} = 1$, and 
$$
\lim_{k \rightarrow \infty} \sum_{n = k}^{N(k)} \alpha_{k,n}\cL_{\pi} \phi_n = \cL_{\pi} f  
$$
in the strong topology on $L^2(\pi)$. It follows by linearity of $\cL_{\pi}$, that the functions $\psi_k = \sum_{n=k}^{N(k)} \alpha_{k,n} \phi_n$ are $C_c^{\infty}$ functions that converge to $f$ in $L^2(\pi)$ strongly, and $\cL_{\pi} \psi_k$ converges to $\cL_{\pi} f$ strongly in $L^2(\pi)$ as well.

\subsection{Proof of \Cref{prop:descent_lemma_appendix}}\label{sec:descent_lemma_proof}

To perform the computations related to \Cref{lem:derivative_ksd} for the induction formula \eqref{eq:ksd_descent}, we need a compactly-supported push-forward, however $\nabla_{W_2}\cF(\mu_n)(\cdot)$ is not compactly supported in general. We consequently leverage the compactness of the measure it is applied on to perform our analysis. We thus first show by induction that, owing to \eqref{eq:ksd_descent}, $\mu_n\in\cP_c(\R^d)$ for all $n\in\N$, since $\mu_0\in\cP_c(\R^d)$. Assume that $\mu_n\in\cP_c(\R^d)$, then, by definition of the push-forward operation, since $\nabla_{W_2}\cF(\mu_n)\in C^1(\X;\X)$,
\begin{equation}
    \supp(\mu_{n+1})\subset \left(I-\gamma \nabla_{W_2}\cF(\mu_n)\right) (\supp(\mu_{n}))\subset\supp(\mu_{n}) + \gamma \left(\max_{x\in\supp(\mu_{n})}\|\nabla_{W_2}\cF(\mu_n)(x)\|_2\right) B(0,1)=:S_n.
\end{equation}
Hence $\mu_{n+1}\in\cP_c(\R^d)$ as claimed.

Consider a path between $\mu_{n}$ and $\mu_{n+1}$ of the form $\rho_{t}=\left(I-\gamma t \nabla_{W_2}\cF(\mu_n)\right)_{\#} \mu_{n}$. Set $\phi(x)=-\gamma\nabla_{W_2}\cF(\mu_n)(x)$, and
 $s_{t}(x)=x+t\phi(x)$ which is distributed according to $\rho_{t}$ for $x$ distributed as $\mu_n$. In general the function $\left(I-\gamma \nabla_{W_2}\cF(\mu_n)\right)$ is not compactly supported so we cannot apply \Cref{lem:derivative_ksd} outright. But since the push-forward is only applied to the compactly supported $\mu_{n}$ in the definition of $\rho_t$, we can find, through a mollifier, a function $f_{n,t}\in C^1(\X;\X)$ such that it coincides with $\left(I-\gamma t \nabla_{W_2}\cF(\mu_n)\right)$ on $S_n$ and is supported on $S_n+B(0,1)$. So $\rho_t=(f_{n,t})_{\#}\mu_{n}$ and we can apply \Cref{lem:derivative_ksd}. Hence $t\mapsto \cF(\rho_t)$ is differentiable and absolutely continuous, and consequently
\begin{equation}\label{eq:develop}
\cF\left(\rho_{1}\right)-\cF\left(\rho_{0}\right)=\dcF\left(\rho_{0}\right)+\int_{0}^{1} \left[\dcF\left(\rho_{t}\right)-\dcF\left(\rho_{0}\right)\right] d t,
\end{equation}
where 
$$\dcF\left(\rho_{t}\right)=\mathbb{E}_{\substack{x\sim \mu_n \\ x'\sim \mu_n}}\left[\nabla_{1} \kpi\left(s_{t}(x), s_{t}(x')\right)^\top \left(\phi(x)\right)\right].$$
The first term in the r.h.s. of \eqref{eq:develop} is
 $$\dot{\cF}(\rho_0)=-\gamma\E_{x\sim \mu_n}\left[ \| 
\nabla_{W_2}\cF(\mu_n)(x)
\|^2\right].$$
Since $s_{t}(x)-s_{t'}(x)=\left(t-t'\right) \phi(x)$, by \Cref{ass:lipschitz}, we derive through Jensen's and Cauchy-Schwarz inequalities that
\begin{align*}
&\left|\dcF\left(\rho_{t}\right)-\dcF\left(\rho_{t'}\right)\right|  = \left|\mathbb{E}_{\substack{x\sim \mu_n \\ x'\sim \mu_n}}\left[\left(\nabla_{1} \kpi\left(s_{t}(x), s_{t}(x')\right)-\nabla_{1} \kpi\left(s_{t'}(x), s_{t'}(x')\right)\right)^\top \phi(x)\right]\right|\\
&\le \mathbb{E}_{\substack{x\sim \mu_n \\ x'\sim \mu_n}}\left[\left(\left\|\nabla_{1} \kpi\left(s_{t}(x), s_{t}(x')\right)-\nabla_{1} \kpi\left(s_{t'}(x), s_{t}(x')\right)\right\| +\left\|\nabla_{1} \kpi\left(s_{t'}(x), s_{t}(x')\right)-\nabla_{1} \kpi\left(s_{t'}(x), s_{t'}(x')\right)\right\|\right) \left\|\phi(x)\right\|\right]\\
&\le |t-t'|\mathbb{E}_{\substack{x\sim \mu_n \\ x'\sim \mu_n}}\left[\left(L(s_{t}(x'))\|\phi(x)\|+L(s_{t'}(x))\left\|\phi(x')\right\|\right) \left\|\phi(x)\right\|\right]\\
&\le |t-t'|\left(\mathbb{E}_{x'\sim \mu_n} \left[L(s_{t}(x'))\right] \mathbb{E}_{x\sim \mu_n} \left[\|\phi(x)\|^2\right]+\mathbb{E}_{x\sim \mu_n} \left[L(s_{t'}(x))\|\phi(x)\|\right] \mathbb{E}_{x'\sim \mu_n} \left[\|\phi(x')\|\right]\right)\\
&\le |t-t'|\left(\mathbb{E}_{x\sim \mu_n} \left[L(s_{t}(x))\right] \mathbb{E}_{x\sim \mu_n} \left[\|\phi(x)\|^2\right]+\mathbb{E}_{x\sim \mu_n} \left[L(s_{t'}(x))^2\right]^{\frac12} \mathbb{E}_{x\sim \mu_n} \left[\|\phi(x)\|^2\right]^{\frac12} \mathbb{E}_{x'\sim \mu_n} \left[\|\phi(x')\|^2\right]^{\frac12}\right)\\
&\le \gamma^2|t-t'|\left(\|L\|_{L^1(\rho_{t})}+\|L\|_{L^2(\rho_{t'})}\right) \mathbb{E}_{x\sim \mu_n} \left[\|\nabla_{W_2}\cF(\mu_n)(x)\|^2\right] \text{ since $\phi(x)=-\gamma\nabla_{W_2}\cF(\mu_n)(x)$.}
\end{align*}
Hence, for $t'=0$, we have
$$\left|\dcF\left(\rho_{t}\right)-\dcF\left(\rho_{0}\right)\right| \leq t \gamma^{2} \left(\|L\|_{L^1(\rho_{t})}+\|L\|_{L^2(\mu_n)}\right) \mathbb{E}_{x\sim \mu_n} \left[\|\nabla_{W_2}\cF(\mu_n)(x)\|^2\right].
$$
Then, since 
$\rho_t=((1-t)I+t(I+\phi))_{\#}\mu_n$, by convexity of $L$, we can write :
\begin{align*}
    \|L\|_{L^1(\rho_t)}&=\int |L((1-t)x+t(x+\phi(x))|\d\mu_n(x)\\
&    \le (1-t)\|L\|_{L^1(\mu_n)} +t\|L\|_{L^1(\mu_{n+1})}\le (1-t)\|L\|_{L^2(\mu_n)} +t\|L\|_{L^2(\mu_{n+1})} \le M.
\end{align*}
We can thus upper bound the second term in the r.h.s. of \eqref{eq:develop}, 
\begin{equation*}
    \int_{0}^{1} |\dcF\left(\rho_{t}\right)-\dcF\left(\rho_{0}\right)| d t\le \int_{0}^{1} \left(t 2M \gamma^2 \mathbb{E}_{x\sim \mu_n} \left[\|\nabla_{W_2}\cF(\mu_n)(x)\|^2\right]\right) \d t = \gamma^2 M \mathbb{E}_{x\sim \mu_n} \left[\|\nabla_{W_2}\cF(\mu_n)(x)\|^2\right].
\end{equation*}
Finally, since $\gamma M \le 1$, \eqref{eq:develop} leads to
\begin{equation*}
    \cF(\mu_{n+1})-\cF(\mu_n)\le -\gamma(1-\gamma M)\mathbb{E}_{x\sim \mu_n} \left[\|\nabla_{W_2}\cF(\mu_n)(x)\|^2\right]\le 0.\qedhere
\end{equation*}

\section{Additional results}

\subsection{On the kernel integral operator}
\begin{lemma}\label{lem:kernel_operator_sp}
	For any $f,g \in L^2(\pi)$, we have that
	\begin{equation*}
	\ps{S_{\pi, \kpi}f,S_{\pi, \kpi}g}_{\Hkpi}
	=\iint f(x)^\top g(y)\kpi(x,y)\d\pi(x)\d\pi(y).
	\end{equation*}
\end{lemma}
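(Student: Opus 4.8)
\textbf{Proof proposal for \Cref{lem:kernel_operator_sp}.}

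The plan is to obtain the identity from the adjointness between the integral operator and the canonical inclusion of its RKHS into $L^2(\pi)$, and nothing more. Recall that under \Cref{ass:integrable}, i.e.\ $\int \kpi(x,x)\d\pi(x)<\infty$, the operator $S_{\pi,\kpi}:L^2(\pi)\to\Hkpi$ defined in \eqref{eq:integral_operator} is well-defined (and Hilbert--Schmidt), the inclusion $\iota:\Hkpi\to L^2(\pi)$ is bounded, and $\iota=S_{\pi,\kpi}^{*}$; concretely, as recalled in \Cref{sec:background_ksd}, for every $h\in\Hkpi$ and $g\in L^2(\pi)$ one has $\ps{\iota h,g}_{L^2(\pi)}=\ps{h,S_{\pi,\kpi}g}_{\Hkpi}$ (Theorems 4.26 and 4.27 in \citet{steinwart2008support}). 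I would invoke these facts at the outset. Here $f$ and $g$ may be $\R^d$-valued, with $S_{\pi,\kpi}$ and the inner product on $\Hkpi$ understood componentwise as for $\cH_k^d$; the scalar case is identical, with $f(x)^\top g(y)$ reading as $f(x)g(y)$, so I would first treat scalar $f,g$ and then recombine coordinates at the end.

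The computation itself is two lines. First I would set $h:=S_{\pi,\kpi}f\in\Hkpi$ and apply the adjointness relation with $g\in L^2(\pi)$, giving $\ps{S_{\pi,\kpi}f,S_{\pi,\kpi}g}_{\Hkpi}=\ps{\iota h,g}_{L^2(\pi)}=\int h(y)g(y)\d\pi(y)$. Next I would make $h(y)$ explicit: by the definition \eqref{eq:integral_operator} of $S_{\pi,\kpi}$ evaluated at the point $y$, $h(y)=(S_{\pi,\kpi}f)(y)=\int \kpi(x,y)f(x)\d\pi(x)$. Substituting, $\ps{S_{\pi,\kpi}f,S_{\pi,\kpi}g}_{\Hkpi}=\int\bigl(\int \kpi(x,y)f(x)\d\pi(x)\bigr)g(y)\d\pi(y)$, and a final application of Fubini's theorem yields $\iint f(x)^\top g(y)\kpi(x,y)\d\pi(x)\d\pi(y)$. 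For the vector-valued case I would write $f=(f_1,\dots,f_d)$, $g=(g_1,\dots,g_d)$, use $\ps{S_{\pi,\kpi}f,S_{\pi,\kpi}g}_{\Hkpi}=\sum_{i=1}^{d}\ps{S_{\pi,\kpi}f_i,S_{\pi,\kpi}g_i}_{\Hkpi}$, apply the scalar identity to each summand, and recombine $\sum_i f_i(x)g_i(y)=f(x)^\top g(y)$.

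There is no real obstacle here; the two points to be a little careful about are both routine. One is that $S_{\pi,\kpi}f$ is genuinely an element of $\Hkpi$ and that its pointwise evaluation coincides with the integral formula above — this is exactly what \Cref{ass:integrable} secures through the cited results of \citet{steinwart2008support}, via the reproducing property $h(y)=\ps{h,\kpi(\cdot,y)}_{\Hkpi}$ and $\ps{\kpi(\cdot,y),\kpi(\cdot,y)}_{\Hkpi}=\kpi(y,y)$. The other is the legitimacy of Fubini, which follows since $(x,y)\mapsto|f(x)\,g(y)\,\kpi(x,y)|$ is $\pi\otimes\pi$-integrable: by positive semi-definiteness of $\kpi$, $|\kpi(x,y)|\le \sqrt{\kpi(x,x)}\sqrt{\kpi(y,y)}$, and then $\iint |f(x)g(y)|\sqrt{\kpi(x,x)\kpi(y,y)}\d\pi(x)\d\pi(y)=\bigl(\int |f|\sqrt{\kpi(\cdot,\cdot)}\,\d\pi\bigr)\bigl(\int |g|\sqrt{\kpi(\cdot,\cdot)}\,\d\pi\bigr)<\infty$ by Cauchy--Schwarz in $L^2(\pi)$ together with \Cref{ass:integrable}.
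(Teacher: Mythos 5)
Your proof is correct and essentially matches the paper's: the paper expands $\ps{S_{\pi,\kpi}f, S_{\pi,\kpi}g}_{\Hkpi}$ directly by bilinearity and the reproducing property $\ps{\kpi(x,\cdot),\kpi(y,\cdot)}_{\Hkpi}=\kpi(x,y)$, whereas you route through the adjointness $\iota = S_{\pi,\kpi}^*$ to land in $L^2(\pi)$ first and then substitute the pointwise formula for $S_{\pi,\kpi}f$ — two phrasings of the same short RKHS computation. Your added Cauchy--Schwarz/Fubini justification is a small but welcome tightening that the paper leaves implicit.
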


\begin{proof}
	By using the reproducing property, we deduce that
	\begin{multline*}
	\ps{S_{\pi, \kpi}f,S_{\pi, \kpi}g}_{\Hkpi^d}
	=\ps{\int \kpi(x,\cdot)f(x)d\pi(x),\int \kpi(y,\cdot)g(y)d\pi(y)}_{\Hkpi^d}\\
	 =\sum_{i=1}^d \ps{\int \kpi(x,\cdot)f_i(x)d\pi(x),\int \kpi(y,\cdot)g_i(y)d\pi(y)}_{\Hkpi}\\
	 =\sum_{i=1}^d \iint f_i(x)\kpi(x,y)g_i(y)\d\pi(x)\d\pi(y)
	=\iint f(x)^\top g(y)\kpi(x,y)\d\pi(x)\d\pi(y).\qedhere
	\end{multline*}
\end{proof}
\subsection{On the differentiation of the squared KSD}
For \Cref{lem:derivative_ksd} below, our computations are similar to the ones in \citet[Lemma 22 and 23]{arbel2019maximum} with some terms getting simpler owing to the Stein's property of the Stein kernel, but under a weaker assumption than a uniform Lipschitz constant for the kernel (see the discussion in \Cref{sec:descent_lemma}).

\begin{lemma}\label{lem:derivative_ksd}
Let $q \in \cP_2(\X)$ and $\phi\in  C^{1}_c(\X)$. Consider the path $\rho_t$ from  $q$ to $(I+\nabla\phi)_{\#}q$ given by: $\rho_t=  (I+t\nabla\phi)_{\#}q$, for all $t\in [0,1]$. Suppose \Cref{ass:lipschitz,ass:square_der_int} hold. 
Then, $\mathcal{F}(\rho_t)$ is twice differentiable in $t$ with
	\begin{align*}
		\dot{\cF}(\rho_t)=&\mathbb{E}_{\substack{(x)\sim q\\(x')\sim q}}\left[ \nabla_{1} \kpi(x+t\nabla\phi(x),x'+t\nabla\phi(x'))^{\top} \nabla \phi(x)\right],\\
\ddot{\cF}(\rho_t)=&\mathbb{E}_{\substack{(x)\sim q\\(x')\sim q}}\left[ \nabla \phi(x')^\top\nabla_1 \nabla_2 \kpi(x+t\nabla\phi(x),x'+t\nabla\phi(x'))\nabla \phi(x)\right]\\
&+ \mathbb{E}_{\substack{(x)\sim q^*\\(x')\sim q^*}}\left[ \nabla\phi(x)^\top H_1 \kpi(x+t\nabla\phi(x),x'+t\nabla\phi(x'))\nabla \phi(x)\right].
	\end{align*}
\end{lemma}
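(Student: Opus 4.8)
The plan is to recast both formulas as differentiations under the integral sign. Writing $g_t := I + t\nabla\phi$, the pushforward change of variables together with the symmetry of $\kpi$ gives
\[
\mathcal{F}(\rho_t) = \tfrac12\iint \kpi\bigl(g_t(x),g_t(x')\bigr)\,\d q(x)\,\d q(x').
\]
Since $k\in C^{3,3}$ and $s\in C^2$, the Stein kernel has continuous partials $\nabla_1\kpi,\nabla_2\kpi,H_1\kpi,\nabla_1\nabla_2\kpi$, so for fixed $x,x'$ the map $t\mapsto\kpi(g_t(x),g_t(x'))$ is $C^2$ on $[0,1]$ and the chain rule yields
\[
\tfrac{\d}{\d t}\kpi\bigl(g_t(x),g_t(x')\bigr) = \nabla_1\kpi\bigl(g_t(x),g_t(x')\bigr)^\top\nabla\phi(x) + \nabla_2\kpi\bigl(g_t(x),g_t(x')\bigr)^\top\nabla\phi(x').
\]
Relabelling $x\leftrightarrow x'$ in the second summand and using $\nabla_2\kpi(a,b)=\nabla_1\kpi(b,a)$, the two summands integrate to the same quantity, which gives the stated $\dot{\mathcal{F}}(\rho_t)$ once the interchange of $\d/\d t$ and $\iint$ is licensed. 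A further differentiation of $\nabla_1\kpi(g_t(x),g_t(x'))^\top\nabla\phi(x)$ produces $H_1\kpi(g_t(x),g_t(x'))\nabla\phi(x)$ from the first argument and $\nabla_1\nabla_2\kpi(g_t(x),g_t(x'))\nabla\phi(x')$ from the second, and pairing with $\nabla\phi(x)$ gives the two terms claimed for $\ddot{\mathcal{F}}(\rho_t)$.

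The substance is producing, uniformly in $t\in[0,1]$, a $q\otimes q$-integrable dominating function. The key point is that $\nabla\phi$ is bounded, $\|\nabla\phi\|_\infty\le B$, and supported in a compact $\cK$: the integrands vanish unless $x\in\cK$, $\widetilde{\cK}:=\{g_t(x):x\in\cK,\ t\in[0,1]\}$ is compact, and $g_t(x')=x'$ for $x'\notin\cK$. Exploiting \Cref{ass:lipschitz} in each variable (via $\nabla_1\kpi(a,b)=\nabla_2\kpi(b,a)$), one bounds, for $x\in\cK$ and any $x'$, $\|\nabla_1\kpi(g_t(x),g_t(x'))\|\le \|\nabla_1\kpi(z_0,0)\|+C_1+C_2\|x'\|$ for a fixed $z_0\in\widetilde{\cK}$, where $C_1$ absorbs $L_{\max}\operatorname{diam}(\widetilde{\cK})+L_{\max}B$ with $L_{\max}:=\sup_{\widetilde{\cK}}L$, and the $\|x'\|$-term comes from $\|\nabla_1\kpi(z_0,\cdot)\|\le\|\nabla_1\kpi(z_0,0)\|+L(z_0)\|\cdot\|$. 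Multiplied by $B\,\mathbf{1}_\cK(x)$ this is affine in $\|x'\|$ on $\cK\times\X$, hence $q\otimes q$-integrable since $q\in\cP_2(\X)\subset\cP_1(\X)$. The same Lipschitz estimate integrated along segments also gives $\mathcal{F}(\rho_t)<\infty$ for all $t$, so every integral is finite, and the standard differentiation-under-the-integral theorem yields $\dot{\mathcal{F}}(\rho_t)$.

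For $\ddot{\mathcal{F}}(\rho_t)$ the same scheme applies. Both second-order terms admit an $L(\cdot)$-type bound: $H_1\kpi(\cdot,y)$ and $\nabla_1\nabla_2\kpi(\cdot,y)$ are the Jacobians in the first variable of the $L(y)$-Lipschitz maps $x\mapsto\nabla_1\kpi(x,y)$ and $x\mapsto\nabla_2\kpi(x,y)$, so by \Cref{ass:lipschitz} their operator norms are at most $L(y)$; hence on $\cK\times\X$ the relevant integrands are dominated by $B^2\bigl(L(x')+L_{\max}\bigr)$, which is $q\otimes q$-integrable (\Cref{ass:square_der_int} gives the same control for the $H_1\kpi$ term directly after pushing $q$ forward by $g_t$). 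Differentiating $\dot{\mathcal{F}}(\rho_t)$ under the integral once more gives the stated $\ddot{\mathcal{F}}(\rho_t)$, and the uniform integrable bounds make $t\mapsto\dot{\mathcal{F}}(\rho_t)$ absolutely continuous. I expect the only real obstacle to be the bookkeeping in these domination steps — one must use that only $x\in\supp\nabla\phi$ contributes, so that the a priori uncontrolled $\|\nabla_1\kpi(\cdot,\cdot)\|$ can be replaced by a fixed base value plus $L$- and $\|x'\|$-linear terms — while the chain-rule identities are routine.
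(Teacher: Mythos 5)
Your proof is correct and takes essentially the same route as the paper's: pull the pushforward inside the double integral, apply the chain rule to $t\mapsto\kpi(g_t(x),g_t(x'))$, use the symmetry $\nabla_1\kpi(a,b)=\nabla_2\kpi(b,a)$ to merge the two summands, and justify differentiation under the integral by producing a $t$-uniform $q\otimes q$-integrable dominating function from \Cref{ass:lipschitz} together with the compact support of $\nabla\phi$. One small refinement worth noting: you dominate the $H_1\kpi$ term via the Lipschitz bound $\|H_1\kpi(g_t(x),g_t(x'))\|_{op}\le L(g_t(x'))\le L(x')+L_{\max}$ rather than the paper's less-explicit appeal to \Cref{ass:square_der_int}, which makes the $t$-uniformity of the domination more transparent, and the minor constant slips (e.g.\ $L_{\max}\operatorname{diam}(\widetilde{\cK})$ where $L(0)\operatorname{diam}(\widetilde{\cK})$ is the natural quantity) are harmless.
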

\begin{proof}
The function $f: t\mapsto \kpi(x+t\nabla \phi(x),x'+t\nabla\phi(x'))$ is differentiable for all $x,x' \in \X$, and its time derivative is :
\begin{align}\label{eq:f_dot}
        \dot{f}&=
        \nabla_{1} \kpi(x+t\nabla\phi(x),x'+t\nabla\phi(x'))^{\top}\nabla\phi(x) 
        + \nabla_{2} \kpi(x+t\nabla\phi(x),x'+t\nabla\phi(x'))^{\top}\nabla\phi(x')\nonumber\\
            &=\nabla_{1} \kpi(x+t\nabla\phi(x),x'+t\nabla\phi(x'))^{\top}\nabla\phi(x) + \nabla_{1} \kpi(x'+t\nabla\phi(x'),x+t\nabla\phi(x))^{\top}\nabla\phi(x')
\end{align}
using the symmetry of $\kpi$. 
The two terms on the r.h.s. of the former equation are symmetric w.r.t.\ $x$ and $x'$, so we will focus on the first one. 
By the Cauchy-Schwartz inequality and \Cref{ass:lipschitz}, 
\begin{align*}
|\nabla_1 k_{\pi}(x+t\nabla \phi(x), x'+t\nabla\phi(x'))^{\top}\nabla \phi(x) |&\leq \|\nabla_1 k_{\pi}(x+t\nabla \phi(x), x'+t\nabla\phi(x'))\| \|\nabla \phi(x)\|\\ 
&\le \left(L(x+t\nabla\phi(x)) \|x' + t\nabla\phi(x')\| + \|\nabla_1\kpi(x+\nabla\phi(x),0)\| \right)\|\nabla\phi(x)\|
\end{align*}
The r.h.s. of the above inequality is integrable in $x$ because $\nabla \phi$ is compactly supported, $L$ is continuous, and because $x'\mapsto \|x'+t\nabla \phi(x')\|$ is integrable since $q\in \cP_2(\X)$.
Therefore, by the differentiation lemma \citep[Theorem 6.28]{klenke2013probability}, $\cF(\rho_t)$ is differentiable and  $\dot{\cF}(\rho_t)=\E_{(x)\sim q,(x')\sim q}[\dot{f}]$, i.e.\ 
\begin{align*}
	\dot{\cF}(\rho_t)&=\mathbb{E}_{\substack{(x)\sim q\\(x')\sim q}}\left[ \nabla_{1} \kpi(x+t\nabla\phi(x),x'+t\nabla\phi(x'))^{\top}\nabla \phi(x)\right].
	\end{align*}
Now define the function $g:t\mapsto \nabla_{1} \kpi(x+t\nabla\phi(x),x'+t\nabla \phi(x'))^{\top} \nabla\phi(x)$. Its time derivative writes as
\begin{align*}
    \dot{g}&=\nabla\phi(x)^\top \nabla_2\nabla_1 \kpi(x+t\nabla\phi(x),x'+t\nabla\phi(x'))\nabla\phi(x') 
	+\nabla \phi(x)^\top H_1 \kpi(x+t\nabla\phi(x),x'+t\nabla\phi(x') ) \nabla\phi(x).
\end{align*}
The first term on the r.h.s. is integrable in $(x,x')$ because it is compactly supported and continuous.
We now deal with the second term. Under \Cref{ass:square_der_int}, $H_1 \kpi(x,y)$ is dominated by a $q$-integrable function. 
Then,
\begin{align*}
    |\nabla\phi(x)^\top H_1 \kpi(x+t\nabla\phi(x),x'+t\nabla\phi(x') ) \nabla\phi(x)|&\le \|H_1 \kpi(x+t\nabla\phi(x),x'+t\nabla\phi(x') )\|_{op}\| \nabla\phi(x)\|^2
    \end{align*}
The r.h.s. of the above inequality is integrable because $(x,y)\mapsto H_1 \kpi(x,y)$ is integrable and $\nabla \phi$ is bounded.
\end{proof}


\section{Additional experiments}\label{sec:more_experiments}

\textbf{Implementation details.} The code for KSD Descent is written in Python using Pytorch~\citep{paszke2019pytorch}. We use Matplotlib~\citep{matplotlib} for figures, Scipy~\citep{scipy} for the L-BFGS implementation, as well as Numpy~\citep{harris2020array}. It is available at \url{https://github.com/pierreablin/ksddescent}.

\textbf{Different initializations and variance for Gaussian mixtures.} To discuss in greater detail the results of our second toy example presented in \Cref{sec:experiments_toy} and illustrated on \Cref{fig:mog_problem}, we performed some more experiments for several choices of initialization and variance, for a mixture of two Gaussians with equal variance (see \Cref{fig:more_mog}). We also investigated the support of the stationary points of KSD Descent for a mixture of three Gaussians with different variances (see \Cref{fig:skewed_mog}). As discussed in \Cref{sec:proof_stable_support}, the support is not necessarily a smooth submanifold or an axis of symmetry.

\textbf{A comparison between KSD Descent and Stein points.} We finally compare KSD Descent and Stein points~\cite{chen2018stein}. We choose a low dimensional problem, since the approach of Stein points cannot scale to large dimensions. We use the classical $2$-D ``banana'' density. \Cref{fig:banana} shows the behavior of the two algorithms. Interestingly, KSD Descent succeeds and does not fall into spurious local minima, even tough the density is not log-concave. We posit that this happens because here the potential $\log(\pi)$ does not have saddle points.

\begin{figure*}
   \centering
\begin{tabularx}{\textwidth}{cCCC}
Variance &  Initialization exactly on the symmetry axis & Initialization close to the symmetry axis & Gaussian i.i.d.\ initialization\\
 $0.1$ &
	\includegraphics[width=.25\columnwidth]{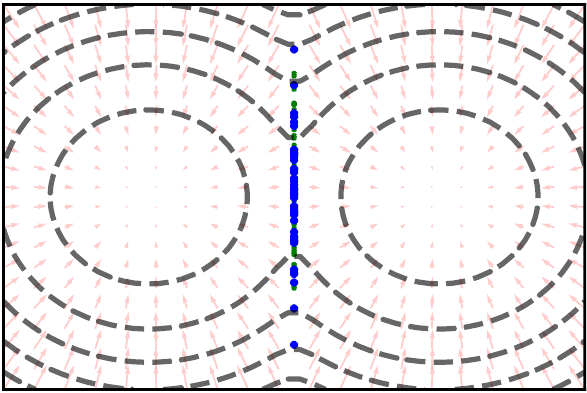}&
	\includegraphics[width=.25\columnwidth]{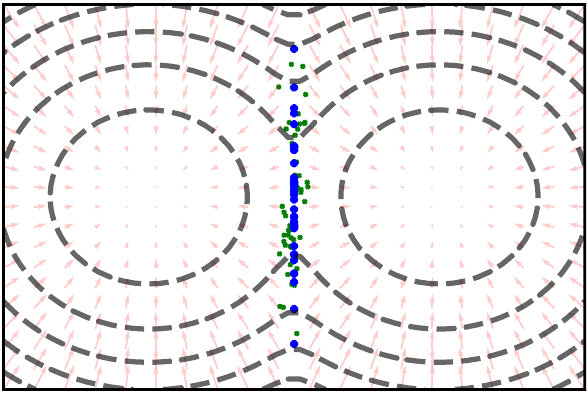}&	\includegraphics[width=.25\columnwidth]{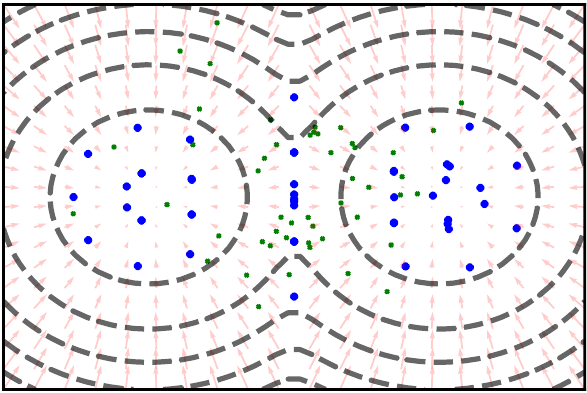} \\
$0.3$ &
	\includegraphics[width=.25\columnwidth]{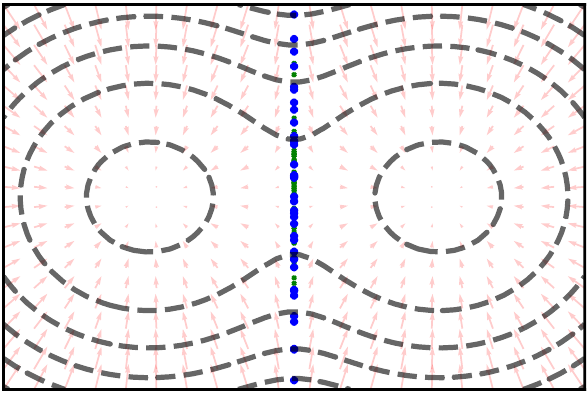}&
	\includegraphics[width=.25\columnwidth]{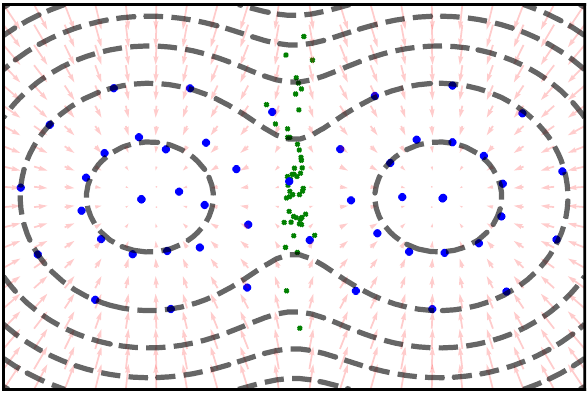}&
	\includegraphics[width=.25\columnwidth]{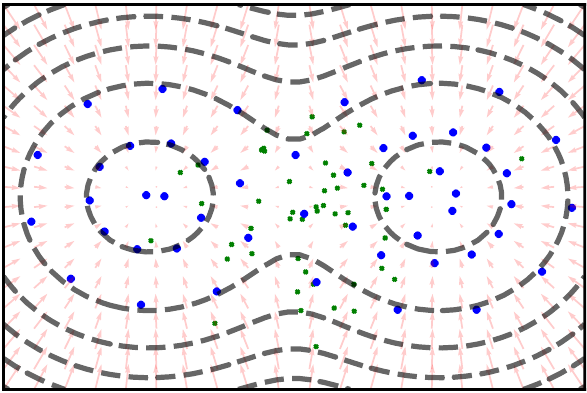} \\
$2$ &
	\includegraphics[width=.25\columnwidth]{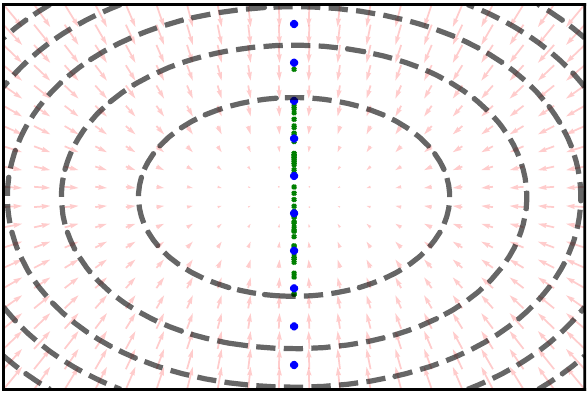}&
	\includegraphics[width=.25\columnwidth]{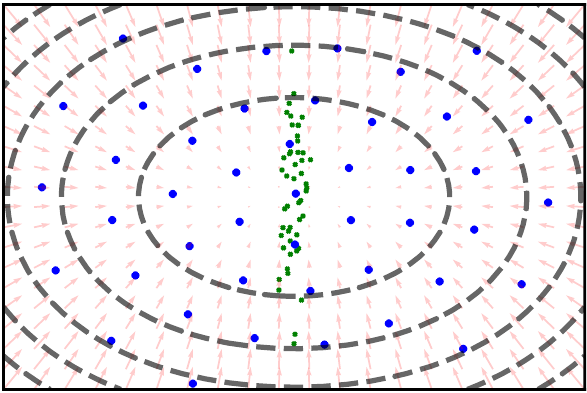}&
	\includegraphics[width=.25\columnwidth]{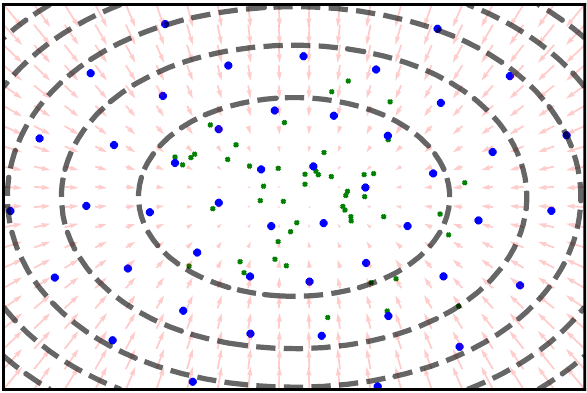} 
\end{tabularx}
    \caption{Results of KSD Descent for a mixture of two gaussians, depending on their variance and on the initialization of the algorithm. The green crosses indicate the initial particle positions, while the blue ones are the final positions.}
    \label{fig:more_mog} 
\end{figure*}

\begin{figure}
	\centering
	\includegraphics[width=.50\columnwidth]{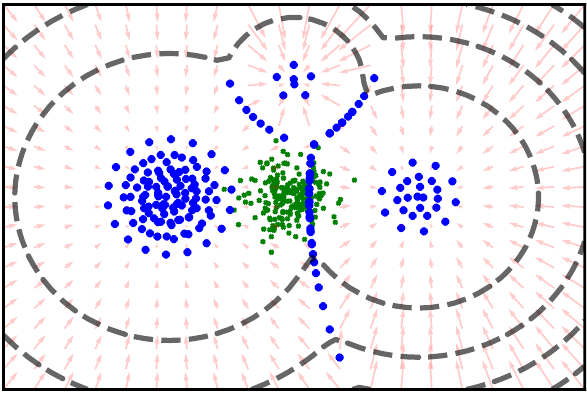}
	\caption{Using KSD Descent to sample from an unbalanced mixture of Gaussians. Some particles get stuck in spurious zones, which are not a straight line nor a manifold. The green crosses indicate the initial particle positions, while the blue ones are the final positions.}
	\label{fig:skewed_mog}
\end{figure}

\begin{figure}
	\centering
	\includegraphics[width=.4\columnwidth]{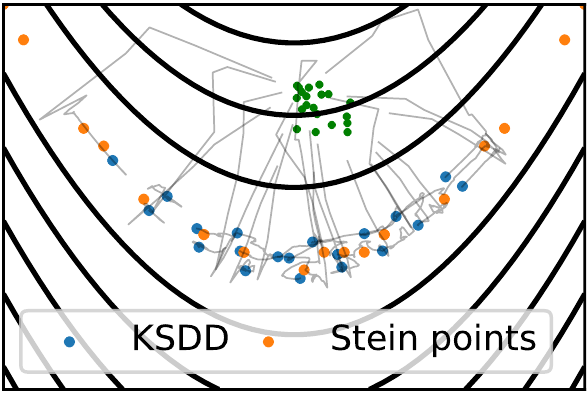}
	\caption{Comparison of KSD Descent and Stein points on the ``banana'' dataset. Green points are the initial points for KSD Descent. Both methods work successfully here, even though it is not a log-concave distribution. We posit that KSD Descent succeeds because there is no saddle point in the potential.}
	\label{fig:banana}
\end{figure}

\end{document}